\newcommand{\comment}[1]{}
\newcommand{\eq}{Eq.}
\newcommand{\fig}{Fig.}
\newcommand{\alg}{Algorithm }
\newcommand{\eg}{\emph{e.g.},\ }
\newcommand{\ie}{\emph{i.e.},\ }
\newcommand{\etal}{~\emph{et al.}\ }
\journal{Journal of Computational Physics}
\begin{document}

\begin{frontmatter}

\title{A Streaming Sparse Cholesky Method for Derivative-Informed Gaussian Process Surrogates Within Digital Twin Applications}

\author[label2,label3]{Shridhar Vashishtha}
\author[label1]{Krishna Prasath Logakannan} %% Author name
\author[label1]{Jacob Hochhalter} %% Author name
\author[label2]{Shandian Zhe} %% Author name
\author[label2,label3]{Robert M. Kirby} %% Author name

% Author affiliation
\affiliation[label1]{organization={Department of Mechanical Engineering, University of Utah},%Department and Organization
            addressline={}, 
           city={Salt Lake City},
           postcode={84112}, 
           state={UT},
           country={USA}}

\affiliation[label2]{organization={Kahlert School of Computing, University of Utah},%Department and Organization
           % addressline={}, 
           city={Salt Lake City},
           postcode={84112}, 
           state={UT},
           country={USA}}

\affiliation[label3]{organization={Scientific Computing \& Imaging Institute, University of Utah},%Department and Organization
            addressline={}, 
           city={Salt Lake City},
           postcode={84112}, 
           state={UT},
           country={USA}}
% Abstract
\begin{abstract}
Digital twins are developed to model the behavior of a specific physical asset (or twin), and they can consist of high-fidelity physics-based models or surrogates. A highly accurate surrogate is often preferred over multi-physics models as they enable forecasting the physical twin future state in real-time. To adapt to a specific physical twin, the digital twin model must be updated using in-service data from that physical twin. In this paper,  we combine and extend several previous surrogate-related advancements with the goal of demonstrating an end-to-end digital twin (DT) solution for predicting performance of an aircraft structure (the physical asset).
To this end, we extend Gaussian process (GP) models to include derivative data, for improved accuracy, with dynamic updating to ingest physical twin data during service. Including derivative data, however, comes at a prohibitive cost of increased covariance matrix dimension. We circumvent this issue through our modified dynamic sparse Cholesky linear system solver.  
Numerical experiments demonstrate that the prediction accuracy of the derivative-enhanced sparse Cholesky GP method produces improved models upon dynamic data additions. Lastly, we 
demonstrate the developed algorithm within a DT framework to model fatigue crack growth in an aerospace vehicle, thereby exhibiting through our assembled engineered system how digital twin technologies can be combined in practice. 
\end{abstract}

%% Keywords
\begin{keyword}
%% keywords here, in the form: keyword \sep keyword
Digital twin \sep crack growth \sep sparse Cholesky GP \sep derivative-informed GP \sep dynamic update
%% PACS codes here, in the form: \PACS code \sep code

%% MSC codes here, in the form: \MSC code \sep code
%% or \MSC[2008] code \sep code (2000 is the default)

\end{keyword}

\end{frontmatter}

\section{Introduction}

Coining of the term digital twin (DT) is often attributed to the US Air Force and NASA in the early 2000s as a means of creating a dynamic, high-fidelity digital framework to monitor, simulate, and predict a specific physical component or system \cite{tuegel2011,glaessgen2012}. Initially, the DT concept was motivated by the need to improve structural reliability estimates during the operational life of aerospace vehicles. The DT concept has since been extended to manufacturing, automation, energy and utilities, healthcare, etc., where physical components or systems would benefit from continuously updated assessments or optimization during their life cycle \cite{grieves2023}. Independent of the particular application, the fundamental objective of a DT model is to improve the accuracy of predictions and reduce uncertainty. This is accomplished by incorporating measurable characteristics of an individual physical asset prior to and during its operational life, as opposed to using a nominal or purely stochastic modeling approach.

In an attempt to remain up-to-date with its physical twin (PT), a DT framework often employs surrogate modeling methods \cite{kapteyn2022}. A central challenge of a DT model, therefore, lies in balancing fidelity with computational efficiency: full-scale physics-based simulations, which in and of themselves are surrogates but ones in which we try to ‘throw away’ as little as possible, would often exceed real-time modeling constraints, while reduced-order models risk losing critical individualized accuracy. Once a DT modeling approach is identified, with an appropriate balance of fidelity and efficiency, an initial DT model can be generated (trained) using, \eg as-manufactured PT characteristics \cite{cerrone2014}, nominal material information used during the design stage, or performance data from preceding PTs.  While capturing both known (measurable) and unknown (nominal or stochastic) individual characteristics is a critical starting point, this only marks the initialization of the DT modeling process, as outlined in black in Figure \ref{fig:DT_workflow_generic}.

As with the study of biological twins — where identical genetic starting material does not guarantee identical outcomes — DTs must account for manufacturing variability, external perturbations, operating environments, and other system variability that act on the PT during its life. To this end, the DT model and its state must be updated periodically to reflect the measured in-service usage and evolution of the PT. This requisite co-evolution occurs through structural modifications to the DT model form and parameter recalibration as informed by new data, as outlined in red in Figure \ref{fig:DT_workflow_generic}.  In this step, there exists an implicit assumption that the DT and PT are not only co-evolving, but also that smoothness in time of the PT is mimicked in the DT. Hence, the DT must not only be responsive but also smoothly adaptive. Lastly, a two-way communication is necessitated in which the DT acquires measured PT usage and state, which are in turn used to generate updated predictions for the PT and used as a basis of decision making (\eg for maintenance or replacement), as outlined in green in Figure \ref{fig:DT_workflow_generic}. The integration of all these requirements and objectives defines both the promise and the frontier of DT research.

\begin{figure}[b!]
	\centering
	\includegraphics[trim=0 10cm 3cm 0, clip,width=1.1\linewidth]{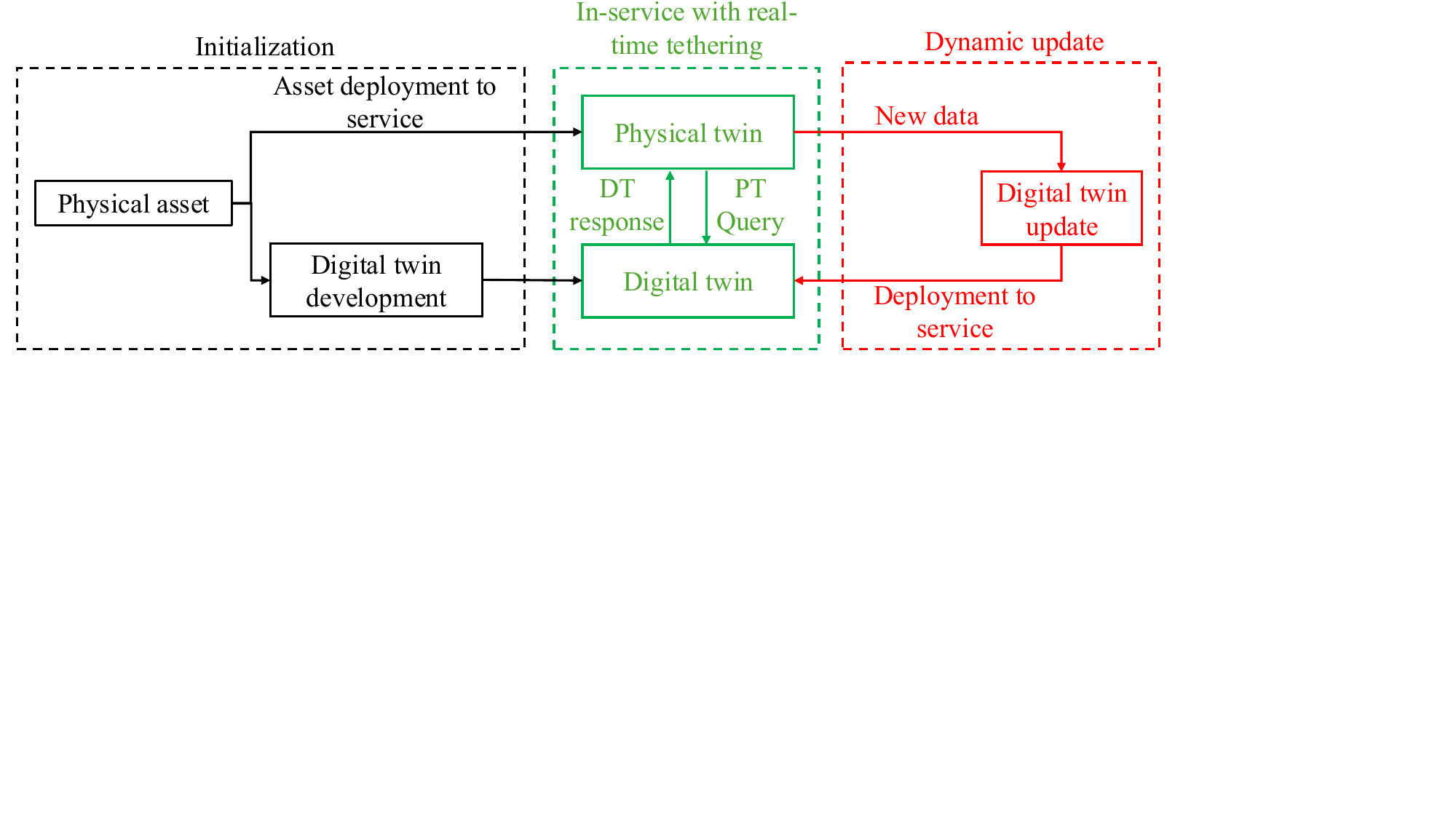}
	\caption{Schematic of workflow of a typical DT system with real-time tethering and dynamic update capability.}
	\label{fig:DT_workflow_generic}
\end{figure}

The general DT concept is abstract, so any quantitative assessment requires an application problem. Here, we select the original motivating application: aircraft structural life prediction \cite{tuegel2011}. The objective of this application problem is to use a DT to establish a condition-based maintenance schedule, as opposed to current conservative methods of scheduled maintenance that are based entirely on flight hours but not individual usage or state, \eg safe life or damage tolerance approaches \cite{FAA_AC_23_13A,FAA_AC_25_571_1D}.  Establishing a reliable DT in this application has direct implications for improved reliability, safety, and cost-effectiveness: an accurate DT model can reduce the frequency of unnecessary maintenance interventions (where issues can be introduced), thereby minimizing system downtime and associated costs. Since the seminal study presented by Tuegel \etal \cite{tuegel2011}, which focused on the use of ``ultrahigh fidelity models,'' research has extended to simulated demonstrations along with the incorporation of surrogate modeling and uncertainty quantification. Liao \etal \cite{Liao2020} assessed the shift from deterministic individual aircraft tracking to a probabilistic, flight-by-flight approach using Bayesian updating and probabilistic fatigue crack growth models. They demonstrated that DTs can reduce conservatism in maintenance planning, improve fatigue life predictions, and enable risk-based decision-making tailored to individual aircraft. Millwater \etal \cite{Millwater2020} presented probabilistic methods for risk assessment in DT frameworks, detailing computational strategies for estimating failure probabilities, remaining useful life, and inspection intervals, highlighting the importance of real-time data integration and surrogate modeling. Because fatigue cracking in aerospace materials is inherently a multi-scale problem, several researchers have employed multi-scale DT modeling strategies. As an example, Whelan and McDowell \cite{Whelan2020} model microstructure-sensitive fatigue behavior in Ti-6Al-4V. Their study highlights quantifying epistemic uncertainty from model form and parameters, using statistical volume elements (SVEs) and fatigue indicator parameters (FIPs) to assess fatigue resistance. Yeratapally \etal and Leser \etal present a two-part DT feasibility study that establishes a probabilistic, multi-scale framework for fatigue life prediction in Aluminum alloy 7075-T651. In Part I, Yeratapally \etal \cite{Yeratapally2020} develop a DT approach that couples microstructurally small crack modeling—using crystal plasticity finite element analysis of SVEs with microstructurally large crack (MLC) modeling via linear elastic fracture mechanics, calibrated through Monte Carlo and Markov Chain Monte Carlo methods. In Part II, Leser \etal \cite{Leser2020} integrate in-situ diagnostics using digital image correlation and Bayesian inference to iteratively calibrate the DT model as damage becomes observable, significantly reducing uncertainty in prognostic predictions. Within this application domain, past research has largely focused on Bayesian methods for updating DT model parameters, given new observations during service life, while investigations into the structural modifications to the DT model form remain open.

Our contribution herein is an engineered system that acts as a DT demonstrator, combining and extending derivative-informed surrogate modeling, dynamics updating, and modified sparse linear algebra solution technologies. 
Our interest in derivative-informed Gaussian processes (GPs) is sparked, in part, by recent advances in {\em in situ} automatic differentiation of finite element method (FEM) simulation fields \cite{VelasquezANMR25,NavarroVAMMR26,LogakannanABXZKMH26}. These FEM fields with their corresponding derivative information act directly or as proxies for the physical twin product \cite{Tao18062019,SunZZ24}. When this field and derivative information is available (e.g., from acquiring displacement, velocity, and acceleration data), either directly as part of on-board physical sensors or via these simulation science advances, we employ derivative-informed GPs constructed using a modified dynamic sparse Cholesky linear system solver
to leverage the dual advantages of high predictive accuracy and the ability to incorporate new data dynamically at minimal computational cost.
We present work that advances DT modeling by extending GP surrogates in three key directions. First, we incorporate derivative information into the GP formulation,
up to and including fourth-order derivative information,
enabling the surrogate to more effectively capture local sensitivities and improve predictive fidelity, particularly in high-dimensional and multi-physics contexts. Second, we utilize sparse Cholesky factorization methods to handle derivative-enhanced covariance structures, thereby preserving computational tractability even as model complexity increases. Prior studies, \cite{chen_sparse_2024, schafer_sparse_2021,chen_solving_2021,kang_correlation-based_2023}, have introduced related ideas to include derivatives within sparse Cholesky factorization. However, our contribution of an engineering system that serves as a DT demonstrator, combining various ideas and aligning them effectively in the DT system, is a meaningful step within the DT research community. This extension allows the surrogate to scale to large datasets while retaining the accuracy benefits of derivative augmentation. Finally, we develop a dynamic update algorithm tailored for the digital twin setting, in which new sensor data and operational measurements arrive sequentially. This algorithm ensures that the GP surrogate can be updated efficiently in real time without retraining from scratch, allowing the DT to evolve in tandem with its physical counterpart. Collectively, these contributions provide a principled and computationally efficient foundation for deploying adaptive, derivative-informed GP surrogates in digital twin applications, with particular relevance to aerospace and mechanical systems where both accuracy and scalability are critical.

The paper is organized as follows. In Section \ref{sec:math}, we present a mathematical description of GP modeling and how it can be updated to incorporate derivative information. We highlight the increased accuracy capabilities of our derivative-enhanced GP when used to model functions that are sufficiently smooth.
In Section \ref{sec:DT_algorithms}, we summarize a previously published sparse Cholesky algorithm with provable properties for solving a GP system \cite{chen_sparse_2024} and present our modifications of this algorithm to account for derivative-enhanced GP. Given our interest in using derivative-enhanced GPs in a DT context, we present in Section \ref{sec:dynamic_GP_for_DT} an extension of the sparse Cholesky algorithm applicable in the dynamic (streaming) context.  We present several variants of the algorithm and demonstrate which works best for DT problems. In Section \ref{sec:App_DT}, we combine our contributions and show how derivative-enhanced GPs solved using our dynamic sparse Cholesky algorithm can be used to solve a real-world DT application. We summarize our work and provide application-oriented observations concerning our DT engineered system solution in Section \ref{sec:summary}.

\section{GP Surrogate Modeling for DT Applications}
\label{sec:math}

We start by laying out the basic building blocks of Gaussian Process (GP) modeling that we will employ and providing a summary of well-known theoretical results that helped guide our work. In particular, we present in Section \ref{sec:kernel_methods_interpolation} a review of kernel methods. In Section \ref{sec:GPmodeling}, we then present a mathematical review of GPs as they are used for surrogate modeling. In Section \ref{sec:GPnoisederivatives}, the generalized case of GP modeling with noisy data is presented with our extension of GPs to utilize both function values and their derivatives up to any arbitrary derivative order $d$. In Section \ref{sec:GPderiv}, we present the specific case of modeling noise-free data within our contribution of derivative-enhanced GPs. Finally, in Section \ref{sec:GPverification}, we present numerical verification experiments to demonstrate the correctness of our algorithm and to highlight the superior convergence properties of derivative-enhanced GPs when applied to problems with sufficient smoothness.

\subsection{Kernel methods}
\label{sec:kernel_methods_interpolation}

We begin by defining the input domain as $\Omega \subset \mathbb{R}^p$. Let $\mathbf{x}^{(i)}$ be samples from $\Omega$, i.e. $\mathbf{x}^{(i)} \in \Omega \subset \mathbb{R}^p$. Assume that the data are given as $\mathcal{D}_{\text{train}} = (\mathbf{x}^{(i)}, f(\mathbf{x}^{(i)}))$ for i = 1, 2, ...., N, where $f(\mathbf{x}^{(i)})$ represents the value of the observed function at $\mathbf{x}^{(i)}$ and $N$ is the number of (unique) samples. We denote the training dataset as follows:
\[
\textbf{X} = \begin{bmatrix} \mathbf{x}^{(1)} \\ \mathbf{x}^{(2)} \\ \vdots \\ \mathbf{x}^{(N)} \end{bmatrix} \in \mathbb{R}^{N \times p}, \quad
\mathbf{f} = \begin{bmatrix} f(\mathbf{x}^{(1)}) \\ f(\mathbf{x}^{(2)}) \\ \vdots \\ f(\mathbf{x}^{(N)}) \end{bmatrix} \in \mathbb{R}^{N}.
\]

Using the notation of the training data defined above, the squared exponential (SE) kernel can be derived from the  Reproducing Kernel Hilbert Space (RKHS), where a positive definite kernel defines an inner product between functions. The SE kernel can be defined as:
\[
k(\mathbf{x}^{(i)},\mathbf{x}^{(j)})=\exp\left(-\frac{|\mathbf{x}^{(i)}-\mathbf{x}^{(j)}|^2}{2\delta^2}\right),
\]

\noindent where $\delta$ is the kernel length scale that controls the smoothness of the function. In this work, $\delta$ is set by an optimizer that finds the best $\delta$ values within the range $[0.001, 1000]$ based on the various hyperparameters of our scheme. Prior studies, \cite{chen_sparse_2024, schafer_sparse_2021,chen_solving_2021,kang_correlation-based_2023}, of sparse Cholesky methods mostly focus on the use of the Mat\'{e}rn kernels \cite{Dowling2021HidaMatrnKH} due to their reduced smoothness and favorable sparsity behaviors. However, we employ the SE kernel because higher-order derivatives (up to and including $4^{th}$-order in this work) require that the kernels be sufficiently differentiable. Extensions to the Mat\'{e}rn kernel with appropriate smoothness parameters remain an important direction for future work.

Using the kernel function $k(\,\cdot\,,\,\cdot\,)$ defined above, we can form the $(N \times N)$ covariance matrix, $\mathbf{K}_{[f,f]}$, computed using the training points, $\{ \mathbf{x}^{(i)} \}_{i=1}^N$. If we assume that our approximation is of the form:

\[
u(\mathbf{x}^{*}) = \sum_{i=1}^{N} \hat{u}_i \, \phi_i(\mathbf{x}^{*}),
\]

\noindent where $\hat{u}_i$ denotes the coefficients of our expansion and $\phi_i(\,\cdot\,)$ denotes our basis functions, then our coefficient vector is given by the expression $\hat{\mathbf{u}} = \mathbf{K}^{-1} \mathbf{f}$ and $\phi_i(\mathbf{x}^{*}) = k(\mathbf{x}^{*}, \mathbf{x}^{(i)})$, where $\mathbf{K}^{-1}$ is the inverse of the covariance matrix. For noisy observations, the posterior variance at a test point $\mathbf{x}^{*}$ is given by the following expression:
\[
\mathrm{Var}[u(\mathbf{x}^{*})] = k(\mathbf{x}^{*}, \mathbf{x}^{*}) - \mathbf{K}(\mathbf{x}^{*}, \textbf{X}) \mathbf{K}(\textbf{X}, \textbf{X})^{-1} \mathbf{K}(\textbf{X}, \mathbf{x}^{*}),
\]
\noindent where $\mathbf{K}(\mathbf{x}^{*}, \mathbf{X}) = [k(\mathbf{x}^{*}, \mathbf{x}^{(1)}), \dots, k(\mathbf{x}^{*}, \mathbf{x}^{(N)})]$. However, if the observations at the training points are noise-free, \(\mathrm{Var}[u(\mathbf{x}^{*})] = 0\), which is equivalent to the interpolation using radial basis functions (RBFs). All experiments in this work are conducted using the squared exponential kernel.

\subsection{Gaussian Process (GP) Modeling}
\label{sec:GPmodeling}

A GP is a stochastic process defined as a collection of random variables indexed by time, space, or a more general input domain, such that any finite collection of these variables follows a joint multivariate Gaussian distribution. This property makes GPs particularly powerful for modeling unknown functions in a nonparametric Bayesian framework. GPs can be modeled by placing a prior over the latent function, \textbf{f}, as:
\[
f(x) \sim \mathcal{GP}(\mu(\cdot), \mathrm{cov}(\cdot, \cdot)),
\]

\noindent where $\mu(\,\cdot\,)$ is the mean function of the process and $\mathrm{cov}(\,\cdot\,, \,\cdot\,)$ is its covariance function. In practice, one often sets $\mu(\,\cdot\,) = 0$ and uses the kernel function as the covariance function, i.e., $\mathrm{cov}(f(\mathbf{x}^{(i)}), f(\mathbf{x}^{(j)})) = k(\mathbf{x}^{(i)}, \mathbf{x}^{(j)})$. As discussed in \cite{fang_solving_2024}, under the GP prior, the function values at \textbf{f} follow a multi-variate Gaussian distribution, $\mathcal{P}$($\mathbf{f}$) = $\mathcal{N}(\mathbf{f}\,|\, \mathbf{0}, k(\mathbf{x}^{(i)}, \mathbf{x}^{(j)}))$. This is commonly referred to as GP projection. Let us assume that we want to compute the distribution of the function value at any input, $\mathbf{x}$, namely $\mathcal{P}(f(\mathbf{x})\,|\, \textbf{f})$. Since \textbf{f} and $f(\mathbf{x})$ are both assumed to follow a multivariate Gaussian distribution, we obtain a conditional Gaussian:
\[
\mathcal{P}(f(\mathbf{x})\,|\,\textbf{f}) = \mathcal{N}\big(f(\mathbf{x})\,|\,\mu(\mathbf{x}), \sigma^2(\mathbf{x})\big),
\]
\noindent where the conditional mean and variance, respectively, are given by
\[
\mu(\mathbf{x}) = \mathrm{cov}(f(\mathbf{x}), \textbf{f})\, \mathbf{K}^{-1}\textbf{f},
\]
and
\[
\sigma^2(\mathbf{x}) = \mathrm{cov}(f(\mathbf{x}), f(\mathbf{x})) - \mathrm{cov}(f(\mathbf{x}), \textbf{f})\,\mathbf{K}^{-1}\,\mathrm{cov}(\textbf{f}, f(\mathbf{x})).
\]
In the expression above, $\mathrm{cov}(f(\mathbf{x}), \textbf{f}) = k(\mathbf{x}, \textbf{X}) = [k(\mathbf{x}, \mathbf{x}^{(1))}, \ldots, k(\mathbf{x}, \mathbf{x}^{(N))}]$ and $\sigma(\cdot)$ denotes the standard deviation.

Since we are using a squared-exponential kernel in this work, the GP prior enforces smoothness and infinite differentiability on the latent function. In the case of traditional GPs, we assume noisy measurements represented as follows:
\[
y_{(i)} = f(\mathbf{x}^{(i)}) + \epsilon_i, \quad \epsilon_i \sim \mathcal{N}(0, \sigma_{n, i}^2)
\]
\noindent where each observation has its own (known or estimated) noise variance, $\sigma_{n, i}^2$, allowing for heteroscedastic noise.

We assume we are handling uncorrelated additive noise of the form:

\[
\textbf{y} = \textbf{f} + \boldsymbol{\epsilon}. 
\]

\noindent where  

\[
\textbf{y} = \begin{bmatrix} y_{(1)} \\ y_{(2)} \\ \vdots \\ y_{(N)} \end{bmatrix} \in \mathbb{R}^{N}, \quad
\boldsymbol{\epsilon} = \begin{bmatrix} \epsilon_{(1)} \\ \epsilon_{(2)} \\ \vdots \\ \epsilon_{(N)} \end{bmatrix} \in \mathbb{R}^{N}.
\]

Since both \textbf{f} and $\boldsymbol{\epsilon}$ are assumed to be Gaussian, \textbf{y} is also Gaussian. Thus, \textbf{y} can be denoted as:
\[
\textbf{y} \sim \mathcal{N}(0, \mathbf{K}_{ff} + \sigma_{n, i}^2 \textbf{I})
\]

\noindent where $\mathbf{K}_{ff}$ is the GP covariance matrix computed using the squared-exponential kernel and $\textbf{I}$ is the identity matrix. This expression can be further simplified to the following:

\[
\textbf{y} \sim \mathcal{N}(0, \mathbf{K}_{ff} + \mathbf{R})
\]

\noindent where, $\mathbf{R} = \sigma_{n, i}^2 \textbf{I}$, is the diagonal heteroscedastic noise matrix.

The mean of the GP posterior acts as the approximator of the latent function. In our case of GP with independent heteroscedastic noise, the posterior mean can be expressed as follows:

\begin{equation}
\bar{f}(\mathbf{x}) = \mathbf{K}(\mathbf{x}^{*}, \mathbf{X}) \big[\mathbf{K}(\mathbf{X}, \mathbf{X}) + \mathbf{R} \big]^{-1} \mathbf{y},
\label{eq:interpolation_eq}
\end{equation}

\noindent where $\mathbf{K}(\mathbf{x}^{*}, \mathbf{X})$ denotes the covariance vector between the test point $\mathbf{x}^{*}$ and the training data \textbf{X}. Note that upon comparison with the noise-free expression previously given, the matrix $\mathbf{K}(\mathbf{X}, \mathbf{X}) + \mathbf{R}$ contains the additional $\mathbf{R}$ term used to model the presence of noise (an observation relevant to our inversion and sparsification discussion below).

\subsection{Derivative-Enhanced Gaussian Process (GP) Surrogate Modeling for Noisy Data}
\label{sec:GPnoisederivatives}
Assuming that we have access to derivative information of $\mathbf{f}$ up to an arbitrary order $d$, the accuracy of the GP can be improved by incorporating these derivatives into the training procedure \cite{solak_derivative_nodate,wu_exploiting_2018,wang_explicit_2020,eriksson_scaling_2018,padidar_scaling_nodate,mukherjee_dgp-lvm_2025}. It is important to note that the kernel function used to model the GP should be sufficiently smooth and differentiable, which is consistent with our choice of the squared-exponential kernel. An alternative choice found in the literature is the family of Mat{\'e}rn kernels \cite{Dowling2021HidaMatrnKH}.

A GP that leverages derivative information can be formulated as $\mathbf{F}\sim \mathcal{GP}(0, \mathbf{K}_{[f,\nabla f, \nabla^2f,..., \nabla ^d f]} + \mathbf{R})$, where $\mathbf{F}$ is a vector that contains noisy observations of $\mathbf{f}$ and its derivatives, $\mathbf{K}_{[f,\nabla f, \nabla^2f,..., \nabla ^d f]}$ is the covariance matrix with derivatives, and $\mathbf{R}$ is the diagonal matrix containing the noise variances of each observed quantity. The formulation of $\mathcal{GP}$ is structured as follows:

\begin{equation}\label{eq:GP_der_noisy}
\begin{bmatrix} 
f \\ 
\nabla f \\ 
\nabla^{2} f \\ 
\vdots \\ 
\nabla^{d} f 
\end{bmatrix}  
\sim 
\mathcal{GP}\left(
0,
\begin{bmatrix} 
\mathbf{K}_{[f,f]} + \sigma_f^2 \mathbf{I} & \mathbf{K}_{[f,\nabla f]} & \mathbf{K}_{[f,\nabla^2 f]} & \cdots & \mathbf{K}_{[f,\nabla^d f]} \\
\mathbf{K}_{[\nabla f,f]} & \mathbf{K}_{[\nabla f,\nabla f]} + \sigma_{\nabla f}^2 \mathbf{I} & \mathbf{K}_{[\nabla f,\nabla^2 f]} & \cdots & \mathbf{K}_{[\nabla f,\nabla^d f]} \\
\vdots & \vdots & \vdots & \ddots & \vdots \\
\mathbf{K}_{[\nabla^d f,f]} & \mathbf{K}_{[\nabla^d f,\nabla f]} & \mathbf{K}_{[\nabla^d f,\nabla^2 f]} & \cdots & \mathbf{K}_{[\nabla^d f,\nabla^d f]} + \sigma_{\nabla^d f}^2 \mathbf{I}
\end{bmatrix}
\right)
\end{equation}

\noindent where $\nabla^df$ represents the $d^{th}$ derivative of function $\mathbf{f}$. The matrix $\mathbf{K}_{[\nabla^n f,\nabla^mf]}$ corresponds to the covariance of the $n^{th}$ and $m^{th}$ derivative observations; the elements are calculated using the derivatives of the kernel. Details regarding derivatives of the squared-exponential kernel can be found in \cite{solak_derivative_nodate,wu_exploiting_2018,mukherjee_dgp-lvm_2025}. For $d>1$, let $\nabla^df$ denote a column vector containing (unique) derivative terms. For example, in our notation, $\nabla^2f$ of a 2D function is written as:
\begin{equation}
    \nabla ^2f=\left[\frac{\partial^2f}{\partial x_1^2},\frac{\partial^2f}{\partial x_1 \partial x_2},\frac{\partial^2f}{\partial x_2 ^2 }\right]^T.
\end{equation}

The covariance matrix presented in \eq~\ref{eq:GP_der_noisy} can be interpreted as an exact GP covariance matrix perturbed by a diagonal regularizer (e.g., {\em nugget}) $\sigma_f^2 \mathbf{I}$, $\sigma_{\nabla f}^2 \mathbf{I}$, $\cdots$, $\sigma_{\nabla^d f}^2 \mathbf{I}$. 

To aid the reader, consider the case when one has both the field and first-order derivative information.  \eq~\ref{eq:GP_der_noisy} can subsequently be written as follows:

\begin{equation}\label{eq:first_order_GP_der_noisy}
\begin{bmatrix} 
f \\ 
\nabla f 
\end{bmatrix}  
\sim 
\mathcal{GP}\left(
0,
\begin{bmatrix} 
\mathbf{K}_{[f,f]} + \sigma_f^2 \mathbf{I} & \mathbf{K}_{[f,\nabla f]} &  \\
\mathbf{K}_{[\nabla f,f]} & \mathbf{K}_{[\nabla f,\nabla f]} + \sigma_{\nabla f}^2 \mathbf{I}
\end{bmatrix}
\right)
\end{equation}

\noindent i.e., the kernel matrix for relating both the primary correlation structure as well as that of the first derivatives is given by:

\begin{equation}\label{eq:first_order_kernel_GP_der_noisy}
\mathbf{K}_{\text{der}} =
\left[
\begin{matrix}
\mathbf{K}_{[f,f]} + \sigma_f^2 \mathbf{I}
& \mathbf{K}_{[f,\nabla f]} 
  \\

\mathbf{K}_{[\nabla f,f]} 
& \mathbf{K}_{[\nabla f,\nabla f]} + \sigma_{\nabla f}^2 \mathbf{I}
 \\

\end{matrix}
\right]
\end{equation}

\noindent For more details regarding the interpolatory structure of derivative-informed GPs, we refer the reader to Appendix B.3.

\subsection{Noise-Free Derivative-Enhanced Gaussian Process (GP) Surrogate Modeling}
\label{sec:GPderiv}

When there is no noise present in the data, i.e., when the noise variance terms in \eq~\ref{eq:GP_der_noisy} are set to zero, \eq~\ref{eq:GP_der_noisy} can be modified for the noise-free case as:

\begin{equation}\label{eq:GP_der}
\begin{bmatrix} 
f \\ 
\nabla f \\ 
\nabla^{2} f \\ 
\vdots \\ 
\nabla^{d} f 
\end{bmatrix}  
\sim 
\mathcal{GP}\left(
0,
\begin{bmatrix} 
\mathbf{K}_{[f,f]} & \mathbf{K}_{[f,\nabla f]} & \mathbf{K}_{[f,\nabla^2 f]} & \cdots & \mathbf{K}_{[f,\nabla^d f]} \\
\mathbf{K}_{[\nabla f,f]} & \mathbf{K}_{[\nabla f,\nabla f]} & \mathbf{K}_{[\nabla f,\nabla^2 f]} & \cdots & \mathbf{K}_{[\nabla f,\nabla^d f]} \\
\vdots & \vdots & \vdots & \ddots & \vdots \\
\mathbf{K}_{[\nabla^d f,f]} & \mathbf{K}_{[\nabla^d f,\nabla f]} & \mathbf{K}_{[\nabla^d f,\nabla^2 f]} & \cdots & \mathbf{K}_{[\nabla^d f,\nabla^d f]}
\end{bmatrix}
\right).
\end{equation}

For the case when we have both primary and first-order derivative information, 
\eq~\ref{eq:GP_der} can be written as follows:

\begin{equation}\label{eq:first_order_GP_der}
\begin{bmatrix} 
f \\ 
\nabla f 
\end{bmatrix}  
\sim 
\mathcal{GP}\left(
0,
\begin{bmatrix} 
\mathbf{K}_{[f,f]} & \mathbf{K}_{[f,\nabla f]} &  \\
\mathbf{K}_{[\nabla f,f]} & \mathbf{K}_{[\nabla f,\nabla f]}
\end{bmatrix}
\right)
\end{equation}

\noindent i.e., the kernel matrix for relating both the primary correlation structure as well as that of the first derivatives is given by:

\begin{equation}\label{eq:first_order_kernel_GP_der}
\mathbf{K}_{\text{der}} =
\left[
\begin{matrix}
\mathbf{K}_{[f,f]} 
& \mathbf{K}_{[f,\nabla f]} 
  \\

\mathbf{K}_{[\nabla f,f]} 
& \mathbf{K}_{[\nabla f,\nabla f]}
 \\

\end{matrix}
\right]
\end{equation}

The incorporation of derivative information into the GP model increases the size and the complexity of the covariance matrix. Since the numerical stability and conditioning of the GP system depend directly on the properties of the covariance matrix, it is important to establish the properties of the derivative-informed covariance matrix.  The following lemma guarantees the symmetry and positive definiteness of the covariance matrix (mentioned in \eq~\ref{eq:GP_der}) and provides a theoretical basis for the conditioning considerations discussed in Section \ref{sec:App_DT}. 

\vspace{1em} 
LEMMA 2.4.1. \textit {[Positive definiteness of the derivative-informed covariance matrix]\label{lem:B1}
Let $\mathbf{K}_{\text{der}}$ be the covariance matrix of a GP constructed from the function $f$ and the function $f$ derivatives $\nabla f, \dots, \nabla^d f$ up to order $d$, using a smooth, positive-definite kernel $k(\cdot, \cdot)$. Then $\mathbf{K}_{\text{der}}$ is symmetric and positive definite.}

\begin{proof}
See Appendix C.1.
\end{proof}

The incorporation of derivative observations is expected to improve the information available to the GP model. The following lemma formalizes this intuition by showing that the posterior uncertainty of the GP prediction cannot increase when higher-order derivative information is included. This result provides a theoretical basis for the improved predictive performance investigated in later parts of this work.

\vspace{1em}
LEMMA 2.4.2. \textit{[Reduction of posterior variance with derivative information]\label{lem:B2}
Let $\hat f^d$ denote the GP predictor constructed using derivative observations up to the order $d$. Then, the posterior variance at a test point, $x^*$, satisfies:
\[
\sigma_d^2(x^*)
\le
\sigma_{d-1}^2(x^*).
\]
Consequently,
\[
\mathbb E_{x^*}
\left[\sigma_d^2(x^*)\right]
\le
\mathbb E_{x^*}
\left[\sigma_{d-1}^2(x^*)\right].
\]
i.e., including higher-order derivatives can only reduce (or maintain) the posterior uncertainty of the GP prediction.}
\begin{proof}
See Appendix C.2.
\end{proof}

In addition to the availability of derivative information, the effectiveness of derivative-informed GP models also depends on how strongly this information influences predictions away from the observation locations. For the squared-exponential (SE) kernel, both the covariance terms and their derivative counterparts exhibit rapid spatial decay as the distance between points increases. The following lemma formalizes this property and provides the (necessary) theoretical insight into the locality of information propagation in derivative-informed GP models.

\vspace{1em}
LEMMA 2.4.3. \textit{[Exponential decay of the squared-exponential kernel derivatives]
\label{lem:2.3.3} Let $k:\mathbb{R}^p\times\mathbb{R}^p\to\mathbb{R}$ be the squared-exponential (SE) kernel defined as 
\[
    k(\mathbf{x}, \mathbf{y})=\sigma^2\exp\!\Big(-\frac{\|\mathbf{x}-\mathbf{y}\|^2}{2\delta^2}\Big),
\]
\noindent and let $d$ be the order of derivatives, then for any multi-indices  $\alpha,\beta$ with $|\alpha|,|\beta|\le d$, there exist constants $C_{\alpha,\beta}>0$ and $\gamma=\tfrac{1}{2\delta^2}>0$ such that for all $\mathbf{x},\mathbf{y}\in\mathbb{R}^p$
\[
\big| \partial_\mathbf{x}^\alpha \partial_\mathbf{y}^\beta k(\mathbf{x},\mathbf{y})\big| \le C_{\alpha,\beta}\, \exp\!\big(-\gamma\|\mathbf{x}-\mathbf{y}\|^2\big).
\]
Therefore, covariance entries and their derivative blocks decay exponentially with the squared inter-point distance.}

\begin{proof}
See Appendix C.3.
\end{proof}

In this work, the maximum number of derivatives is set to four, and the squared-exponential kernel was used to calculate the elements of the $\mathbf{K}_{[f,\nabla f, \nabla^2f,..., \nabla ^d f]}$ matrix.  %To the best of the authors' knowledge, GP with $d>2$ is unique in this work. 

Under the noise-free regression assumption -- i.e., we observe the true function values without additive measurement noise -- the GP prior implies a joint Gaussian distribution over the training outputs $\mathbf{f}$ and the test outputs $\mathbf{f}^*$ at $M$ test points ($\mathbf{X}^*$):

\begin{equation}\label{eq:GP_Post}
\begin{bmatrix}\mathbf{f} \\ \mathbf{f}^*\end{bmatrix} = \mathcal{GP}\left (0,\begin{bmatrix} \mathbf{K}_{[f,f]} & \mathbf{K}_{[f,f^*]} \\\mathbf{K}_{[f^*,f]} & \mathbf{K}_{[f^*,f^*]}\end{bmatrix}\right),
\end{equation}

\noindent where $\mathbf{K}_{[f,f]}$ is the covariance matrix ($N \times N$) computed between the training points, $\mathbf{K}_{[f^*,f^*]}$ is the covariance matrix ($M \times M$) computed between the testing points and $\mathbf{K}_{[f,f^*]}$ (and its transpose, $\mathbf{K}_{[f^*,f]}$) are the cross-covariance matrices ($N \times M$) and ($M \times N$) respectively between the training and testing points. 

Conditioning on the observed training data yields the predictive posterior for the latent function at the test inputs:

\[
(\mathbf{f}^* \,\big|\, \mathbf{X}\, , \mathbf{f}\, , \mathbf{X}^*) \sim
\mathcal{N}\Bigl(\bar{\mathbf{f}}^*, \, \mathrm{cov}(\mathbf{f}^*)\Bigr),
\]

\noindent with

\[
\bar{\mathbf{f}}^* = \mathbf{K}_{f^*,f} \mathbf{K}_{f,f}^{-1} \mathbf{f}, 
\qquad
\mathrm{cov}(\mathbf{f}^*) = \mathbf{K}_{f^*,f^*} - \mathbf{K}_{f^*,f} \mathbf{K}_{f,f}^{-1} \mathbf{K}_{f,f^*}.
\]

Since we are in a noise-free setting, there is no observation noise variance, $\sigma_{n, i}^2$, added to the diagonal of $\mathbf{K}_{[f, f]}$, and the predictive mean $\bar{\mathbf{f}}^*$ exactly interpolates the training data at any training input $(\mathbf{x}^{(i)}), (\bar{\mathbf{f}}^*(\mathbf{x}^{(i)})=\mathbf{f}(\mathbf{x}^{(i)}))$. The predictive covariance collapses to zero at the training points, reflecting the certainty about the true function values at those locations.

We would like to remind the reader that the case of a noise-free derivative-enhanced GP is poorly conditioned: adding more data than the kernel's complexity makes the system numerically unstable, leading to challenges when attempting to invert it. Thus, noise (nugget) is added to make the system more stable, as is verified through numerical experiments in Section~\ref{sec:GPverification}.

This formulation makes GPs particularly appealing for deterministic function approximation (e.g., interpolating solutions of differential equations, modeling smooth physical phenomena without measurement noise), where the GP serves as a nonparametric interpolator with built-in uncertainty quantification away from the training data.

\subsection{Numerical Verification Experiments}
\label{sec:GPverification}
GPs augmented with derivative measurements up to $4^{th}$ order are verified on four numerical functions: 1D, 2D, 3D Griewank functions, and 3D Rosenbrock function. The primary reason for choosing the above functions is that they are challenging from an interpolation perspective while still maintaining smooth derivatives. Additionally, the dimensionality of the Griewank function can be increased without a significant change in the function form, which allows us to understand how the GP with derivatives scales with dimensionality. For all these functions, experiments are conducted by increasing the number of training points and the order of derivative information at each training point. The training dataset, $\mathcal{D}_{train}$, is generated by selecting equi-spaced points within the domain. The range of the input features is set at $[-\pi,\pi]$ for the Griewank functions (1D, 2D, and 3D) and $[-5,10]$ for the Rosenbrock function. Using the sampled $\mathbf{X}_{train}$ and $\mathbf{f}_{train}$, we compute the derivatives of $\mathbf{f}_{train}$ with respect to the input features up to the $4^{th}$ order analytically. The analytical expressions for the derivatives of the Griewank function can be found in \cite{Gri}. The trained GP is tested on a dataset that includes $1000$ randomly generated points, $\mathcal{D}_{test}=\{\mathbf{X}_{test},\mathbf{f}_{test}\}$, within the trained domain. During training, the kernel length scale, $\delta$, is optimized for lower prediction error on the test dataset. We note that $\mathbf{f}_{test}$ does not include any derivatives, and all the prediction errors are reported in mean squared error (MSE). The primary reason for reporting prediction errors in MSE rather than RMSE is that MSE shows differences in error magnitudes clearly, especially at smaller scales (e.g., for the order of $10^{-19}$). Since the goal of the plots is to compare the decrease in prediction error with the number of training points and derivative orders, the MSE provides a clearer separation of trends on the y-axis. In contrast, RMSE would compress these differences by taking a square root, making the patterns less visible. 

\begin{figure}[h!]
	\centering
	\includegraphics[width=1.\linewidth]{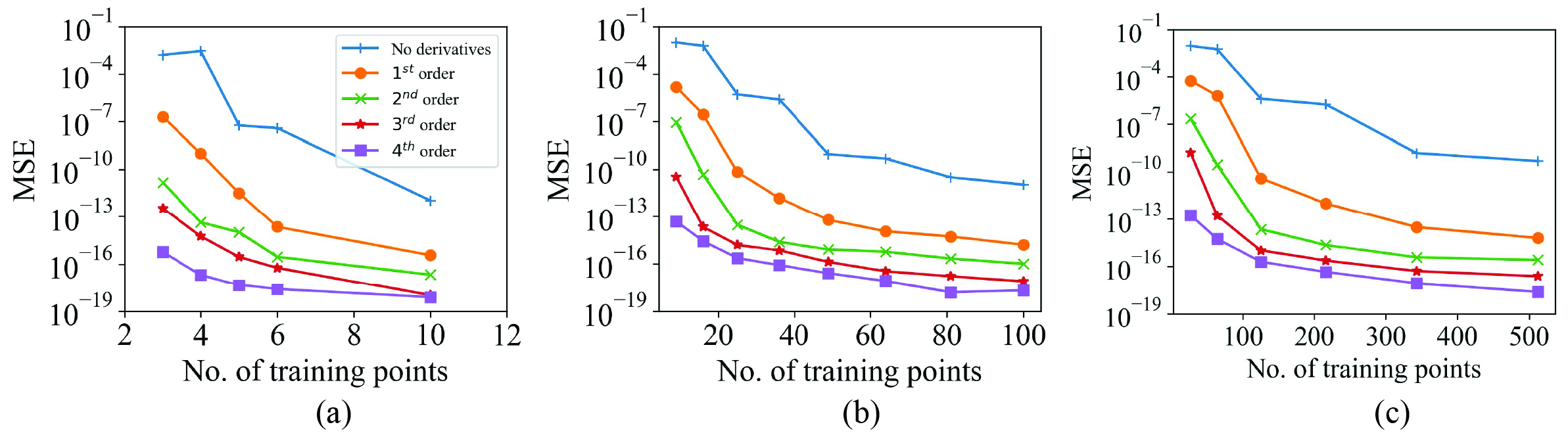}
	\caption{Results of numerical experiments for varying number of training points and derivative order (a) 1D Griewank function, (b) 2D Griewank function, and (c) 3D Griewank function.}
	\label{fig:E_GP_error}
\end{figure}

\begin{figure}[h!]
	\centering
	\includegraphics[width=0.5\linewidth]{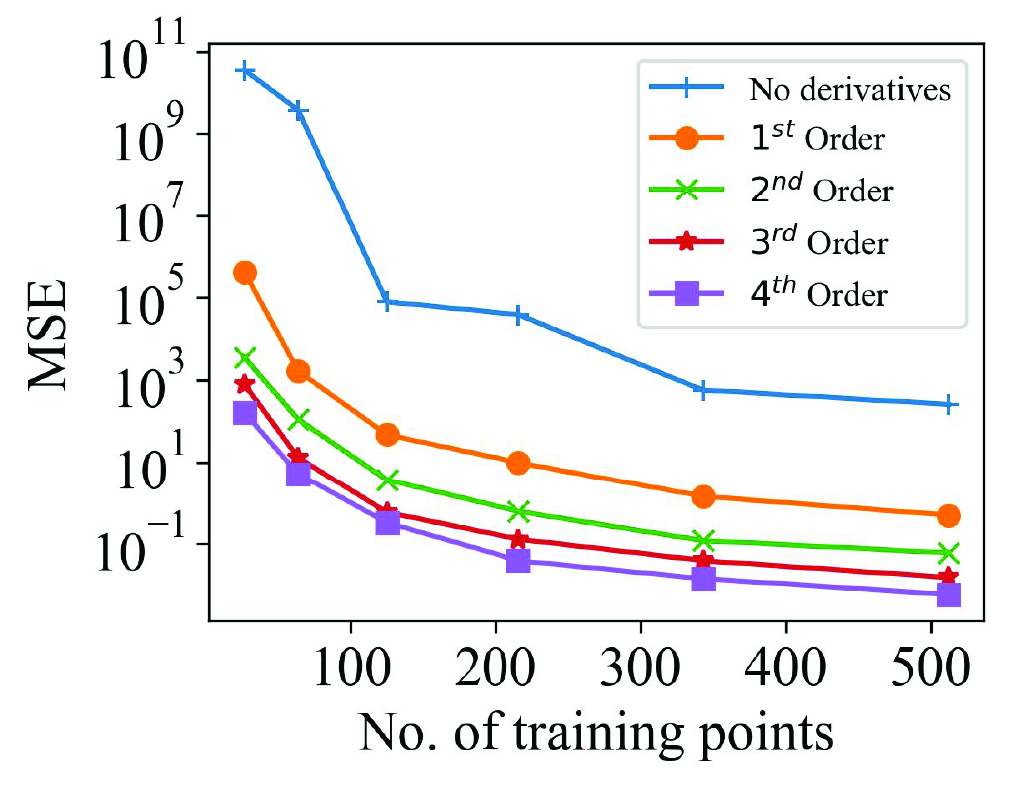}
	\caption{Results of numerical experiments on 3D Rosenbrock function for varying number of training points and order of derivatives. Note that the order of MSE errors is larger due to the steeper and higher magnitude nature of the Rosenbrock function. Although the magnitude of MSE errors is larger, the GP model performs equally well in relative terms when compared with the predictive performance of the Griewank function in \fig \ref{fig:E_GP_error}.}
	\label{fig:E_GP_error_Ro}
\end{figure}

Results of the experiments are shown in \fig \ref{fig:E_GP_error}. For all the studied functions, the prediction error reduced when the number of training points and the order of derivatives are increased. For the 1D Griewank function with three points, the prediction error reduced from approximately $10^{-3}$, without derivatives, to $10^{-15}$ when trained with $4^{th}$ order derivatives. As the number of training points increased, the prediction error reduced for GP with and without derivatives. However, the prediction error starts to plateau as the number of training points increases beyond six for the models trained with derivatives. This is due to the fact that the prediction error is already in the range of $10^{-17}$, and further increase in the number of training points would not improve the accuracy further. Similar observations can be noticed for 2D and 3D Griewank functions.  For the 3D Griewank function trained with 27 points, increasing order of derivatives reduced the prediction error from $10^{-2}$ (for no derivatives) to $10^{-13}$ (for $4^{th}$ order). For the Rosenbrock function with 27 training points, including derivative information reduced the prediction error from $10^{10}$ to less than $10^{2}$ when $4^{th}$ order derivatives are included in training. A similar observation can be made for higher $N$ values.

\begin{figure}[h!]
	\centering
	\includegraphics[width=1.0\linewidth]{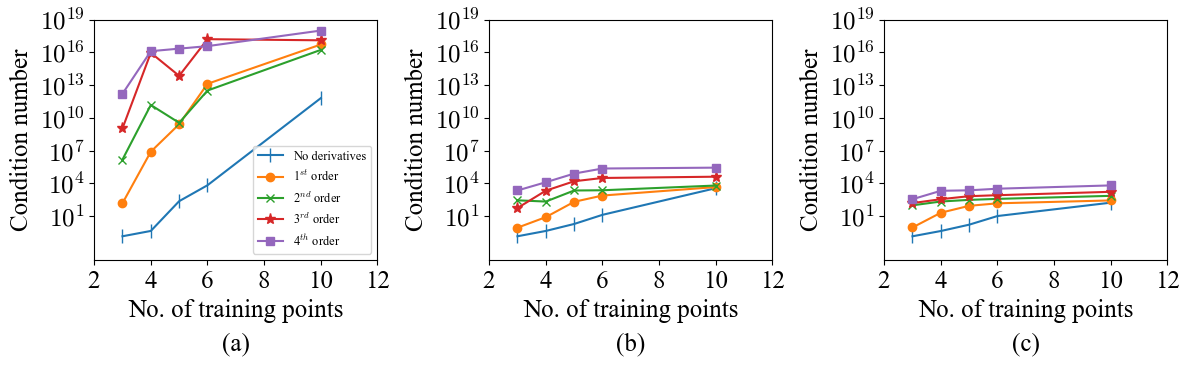}
	\caption{Results of numerical experiments for matrix conditioning on 1D Griewank function for varying number of training points, order of derivatives, and varying noise. (a) No noise, (b) Increasing noise, and (c) Decreasing noise.}
	\label{fig:1d_noisy_condition}
\end{figure}

Furthermore, to investigate the effect of adding noise to the GPs augmented with derivative measurements on numerical stability and predictive performance, we perform additional experiments. We examine the behavior of the matrix condition number (in $L_2$) of the noisy covariance matrix, $\mathbf{\Sigma} = \mathbf{K} + \mathbf{R}$, under varying nugget parameters, $\sigma^{2}$.

Figure \ref{fig:1d_noisy_condition} shows the results of the numerical experiments for the effect of varying noise levels on the condition numbers. In the case of increasing noise, an incremental 1\% increase in noise levels for each derivative order is added from 1\% noise for $0^{th}$ order derivative to 5\% noise for $4^{th}$ order derivative. 

Conversely, for the case of decreasing noise, we added 5\% noise for $0^{\text{th}}$ order derivatives, 4\% noise for $1^{\text{st}}$ order derivatives, 3\% noise for $2^{\text{nd}}$ order derivatives, 2\% noise for $3^{\text{rd}}$ order derivatives, and 1\% noise for $4^{\text{th}}$ order derivatives, i.e., there is a 1\% decrease in noise levels with an increase in the derivatives order.

In the case of decreasing noise, the covariance matrices exhibit small condition numbers, leading to higher predictive errors as shown in Figure \ref{fig:1d_noisy_mse}. In contrast, for increasing noise, the covariance matrices exhibit large condition numbers, resulting in lower predictive errors. This happens because the nugget and the scaling of the derivatives dominate the predictive error. In decreasing noise, the small nugget results in insufficient regularization of higher-order derivatives. However, in the case of increasing noise, the larger nugget for higher-order derivatives stabilizes the covariance matrices, which reduces the predictive error despite a higher condition number. Prior studies, \cite{chen_sparse_2024, schafer_sparse_2021,chen_solving_2021,kang_correlation-based_2023}, laid the theoretical foundations for the relationship between numerical stability and noise levels. We would like to remind the reader that \cite{chen_sparse_2024, schafer_sparse_2021,chen_solving_2021,kang_correlation-based_2023} provide the theoretical foundations of nugget adaptivity.  The results reported herein are consistent with those works. The MSE begins to plateau in Figure \ref{fig:1d_noisy_mse} because the statistical component of noise dominates the overall error, and further regularization does not yield any improvement in accuracy. It should be noted that, unless otherwise stated, no artificial noise is added to the function values or derivative observations used in the numerical experiments presented in this work. The noise terms introduced in this section (and throughout this paper) correspond only to the assumed nugget terms incorporated into the covariance matrix to improve numerical conditioning and stabilizing the matrix inversion.

\begin{figure}[h!]
	\centering
	\includegraphics[width=1.0\linewidth]{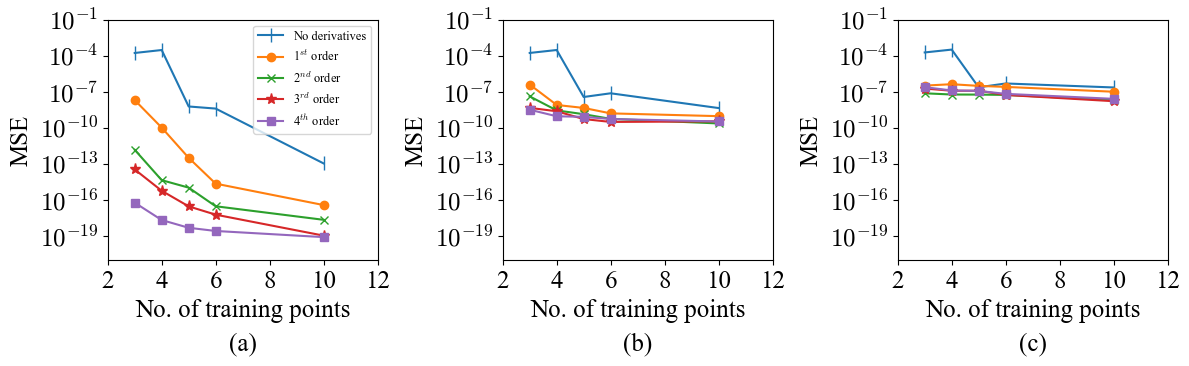}
	\caption{Results of numerical experiments for the predictive performance (measured by MSE) on 1D Griewank function for varying number of training points, order of derivatives, and varying noise. (a) No noise, (b) Increasing noise, and (c) Decreasing noise.}
	\label{fig:1d_noisy_mse}
\end{figure}

%%%%%%%%%%%%%%%%%%%%%%%%%%%%%%%%%%%%%%%%%%%%%%%%%%

%
\section{Sparse Cholesky for Derivative-Enhanced GP}
\label{sec:DT_algorithms}

%%%%%%%%%%%%%%%%%%%%%%%%%%%%%%%%%%%%%%%%%%%%%%%%%%%%
% \subsection{Gaussian Process Inversion using Cholesky Factorization}
In Section \ref{sec:math}, we presented both traditional GP modeling and our enhancement using derivative information. As previously discussed, the incorporation of derivatives improves the accuracy of our GP approximation, given sufficient smoothness of the underlying function we are approximating. However, derivative-enhanced GPs come at a cost: the inclusion of derivative terms leads both to a significant increase in the size of the covariance matrix and an increase in its conditioning as a function of the number of points $N$, dimension $p$, and derivative order $d$. 

To quantify the growth in computational complexity associated with derivative-informed GP models, it is useful to determine how the number of covariance terms scales with the problem dimension and derivative order. The following lemma establishes this scaling relationship and motivates the sparsification methods discussed later in this work. The number of derivative terms, $N_d$, to be incorporated in the covariance matrix is given by the following.

LEMMA 3.0.1. \textit{[Covariance matrix size and sparsity]\label{lem:B4}
Given $N$ training points drawn from a $p$-dimensional space and with derivatives up to the order $d$, the number of covariance terms is given by the following:}
\begin{equation}
N_d = \binom{p+d-1}{d} N,
\label{eq:4}
\end{equation}
\textit{
\noindent where the resulting (updated) covariance matrix ${\bf K}_{\text{der}} \in \mathbb{R}^{N_d \times N_d}$ is symmetric positive-definite (s.p.d.) with a block structure corresponding to the derivatives.}

\begin{proof}
See Appendix C.4.
\end{proof}

While derivative observations improve the approximation properties of the GP model, they also introduce significant numerical challenges. In particular, derivative-informed covariance matrices exhibit deteriorating conditioning as the number of derivative terms increases, leading to instability in matrix inversion and a degradation of numerical accuracy in GP inference. This effect becomes more pronounced as a function of derivative order $d$ and the kernel length scale $l$, both of which influence the spectral properties of the covariance matrix. To characterize these numerical effects, it is important to quantify the role of model parameters in conditioning the derivative-informed covariance matrix. The following lemma provides a formal definition of the condition number for the derivative-augmented covariance matrix and highlights its dependence on the derivative order and the kernel length scale. This lemma will later be referenced in this work to motivate the need for nugget regularization and improved numerical stabilization.

LEMMA 3.0.2. \textit{[Conditioning of derivative-informed covariance matrix]\label{lem:B3}
For a derivative-informed covariance matrix ${\bf K}_{\text{der}}$ with order $d$ derivatives and kernel length scale $l$, the condition number is given by
\[
\kappa({\bf K}_{\text{der}}) = \|{\bf K}_{\text{der}}\|_2 \cdot \|{\bf K}_{\text{der}}^{-1}\|_2,
\]
which increases with $d$ and decreases with increasing $l$. Given that the covariance matrix is s.p.d., this expression can be rewritten as:
\[
\kappa({\bf K}_{\text{der}}) = \frac{\lambda_{max}}{\lambda_{min}} ,
\]
where $\lambda_{max}$ and $\lambda_{min}$ denote the maximum and minimum eigenvalues, respectively, of the covariance matrix.
}
\begin{proof}
See Appendix C.5.
\end{proof}

Assuming all the derivative terms are available, computation of ${\bf K}_{\text{der}}^{-1}$ while solving the exact GP scales as $O((N \times N_d)^3)$ in time and as $O((N\times N_d)^2)$ in memory. (We use `exact' to denote the covariance matrix prior to any sparsification approximations as presented below and assuming Gaussian elimination). Clearly, efficient computation of the action of ${\bf K}_{\text{der}}^{-1}$ is desired. As discussed earlier, the size of the covariance matrix with derivatives increases significantly with respect to $N$, $d$, and $p$.  Solving for such large matrix systems can be computationally intractable \cite{liu_when_2020}, and hence there is a need to efficiently approximate the covariance matrix to help alleviate these scalability issues. Solving these large matrix systems can be approximated through several methods \cite{heaton_case_2019}, such as low rank approximation methods \cite{musco_recursive_2017,chen_randomly_2025}, sparse approximation \cite{vecchia_estimation_1988,furrer_covariance_2006,datta_hierarchical_2016,datta_nonseparable_2016,smola_sparse,cao_variational_2023}, and others \cite{rahimi_random_nodate,quinonero-candela05a}. Out of the several studied methods, work related to the Vecchia approximation has gained popularity in the world of GPs applied to computational and data science problems \cite{vecchia_estimation_1988,stein_approximating_2004,datta_hierarchical_2016, sun_statistically_2016,katzfuss_scaled_2021,huan_sparse_2025}. Within this body of work, one of the critical aspects upon which these approximation methods are built is the choice of the ordering and conditioning set \cite{guinness_permutation_2018, kang_correlation-based_2023, katzfuss_general_2021,schafer_sparse_2024}. In this work, we utilize sparse Cholesky factorization of the exact GP \cite{schafer_sparse_2021,chen_solving_2021}, which has proven to be an efficient approximation method with near-linear computational complexity. \cite{chen_sparse_2024, schafer_sparse_2021,chen_solving_2021,kang_correlation-based_2023} have previously made attempts to incorporate derivatives into the sparse Cholesky formulations. However, there are some differences between our approach of extending sparse Cholesky factorization to include derivative observations of arbitrary order $d$. \cite{chen_sparse_2024, schafer_sparse_2021,chen_solving_2021,kang_correlation-based_2023} formulate the derivatives as constraints in the Hilbert space, and they define the derivatives as embedded inside the functionals, while describing the derivatives as only specific combinations (such as the Laplacian operator), which is not necessary in our formulation. Additionally, our formulations describe the latent function, $\mathbf{f}$, to be random and describe derivatives as random processes (which are encoded within the covariance matrix). We would like to acknowledge that we did not entirely introduce the concept of derivative-informed GPs. However, our contribution is a DT-specific engineering system (see Section~\ref{sec:App_DT}), which is built by aligning all the relevant mathematical building blocks. In this section, we provide a brief overview of the sparse Cholesky factorization algorithm \cite{schafer_sparse_2021} and how we extend it to include derivative observations. We then show verification results using our updated approach -- highlighting the impact of, and interplay between, the various parameters of the algorithm such as the sparsification factor, number of training points, etc.

%%%%%%%%%

\subsection{Sparse Cholesky Factorization of the GP's Precision Matrix}
\label{sec:Sparse_GP_review}

The objective of the sparse Cholesky algorithm is to build a sparse Cholesky factor of the precision matrix. The precision matrix, in the context of computational methods, refers to the inverse of the covariance matrix obtained as part of our GP model. In this section, we provide a brief review of the steps involved in building a sparse Cholesky factor of the precision matrix of the GP. For additional details, readers are directed to the original work upon which our work is based \cite{schafer_sparse_2021,chen_solving_2021}. 

The first step in building a sparse Cholesky factor of the precision matrix involves ordering the set of training points, $\{\mathbf{x}^{(i)}, i \, \epsilon \, I\}_{i=1}^N$, using maximum-minimum distance (MMD) ordering \cite{schafer_sparse_2021,chen_solving_2021,kang_correlation-based_2023}. In MMD ordering, the sequence of the points is chosen based on the maximum of the minimum distances from the set of unordered points, and the index of the ordered points is stored as a vector $\mathbf{P}$, as shown below:
\begin{equation}\label{eq:P_mmd}
    \mathbf{P}(q+1)=arg \ max_{i \, \epsilon \,I \setminus \{1,..,q\} } \, dist (\mathbf{x}^{(i)},\{\mathbf{x}^{(1)},...,\mathbf{x}^{(q)}\})
\end{equation} 
\noindent The length scale of the ordered points is stored in a vector $\mathbf{l}$, given as:
\begin{equation}\label{eq:l_mmd}
    \mathbf{l}^{(i)}=dist(\mathbf{x}^{\mathbf{P}(i)},\{\mathbf{x}^{\mathbf{P}(1)},...,\mathbf{x}^{\mathbf{P}(i-1)}\}.
\end{equation}

\noindent The intuition behind this step is that the ordering is performed by selecting the points furthest from the previous point; thus, the reordered sequence contains points that are ``not too close to each other." The aforementioned ordering has been shown to produce Cholesky factors with near sparsity, the proof of which can be found in \cite{schafer_sparse_2021,chen_solving_2021}. The primary reason to use a distance-based metric for ordering (such as MMD ordering), instead of a correlation-based metric is to exploit the spectral decay (screening effect) of the kernel as was suggested in the work by \cite{chen_sparse_2024, schafer_sparse_2021,chen_solving_2021,kang_correlation-based_2023}.

Once the ordering is determined, the sparsity set $(S)$ is determined by:
\begin{equation}
    S_{\mathbf{P},\mathbf{l},\rho}=\{(i,j)\, \rotatebox[origin=c]{270}{$\cup$} \, I \times I: i \leq j, \,dist(\mathbf{x}^{\mathbf{P}(i)},\mathbf{x}^{\mathbf{P}(j)}) \leq \rho \mathbf{l}^{(j)} \},
\end{equation}
\noindent where $\rho$ influences the size of the sparsity set. The sparsity set we obtain can be aggregated into groups based on both the ordering and geometric location, denoted by $S_{\mathbf{P},\mathbf{l},\rho,\lambda}$, where $\lambda$ is set as $1.5$ as suggested by \cite{schafer_sparse_2021}. These aggregated groups are termed supernodes, $\mathcal{SN}$. Each $\mathcal{SN}$ consists of a list of parent and child indices, where the term parent refers to the index of the columns in the matrix and the term child denotes the indices of non-zero entries in the column. The concept of supernodes is particularly useful in this work, as they allow us to reuse the computed Cholesky factors for a set of rows and columns within the matrix. This offers a significant computational advantage in our digital twin framework, which will be discussed in detail later.

Based on the determined ordering and sparsity pattern (with aggregation), the sparse matrix is obtained by KL minimization, given by the following expression:

\begin{equation} 
\mathbf{U} = \operatorname*{arg\,min}_{\mathbf{\hat U} \in S_{\mathbf{P},\mathbf{l},\rho,\lambda}}\mathcal{D}_{KL}\left( \mathcal{N}(0,\mathbf{K}) \middle\| \mathcal{N}\left(0,(\mathbf{\hat U}\mathbf{\hat U}^T)^{-1}\right)\right).
\end{equation}

The above equation has a closed-form solution which can be found in \cite{schafer_sparse_2021,chen_solving_2021}. In this work, we follow the steps mentioned in this section to generate a sparse approximation, ${\bf U}$, of the precision matrix. The algorithm is implemented in Python (a GitHub link to the codebase will be made available after reviews are completed and the paper is accepted for publication)\comment{cite the GitHub page}, and the code results are compared with the original work's results by \cite{chen_solving_2021}. We primarily focus on noise-free data in this work (and leave the case of noisy data for future work).

\subsection{Derivative-Enhanced Sparse Cholesky GP for Noise-Free Data}
\label{sec:Sparse_GP_deriv}

In this section, the sparse Cholesky GP algorithm presented in Section \ref{sec:Sparse_GP_review} is extended to include derivative measurements up to an arbitrary derivative order, $d$. As discussed in Section \ref{sec:math}, incorporating derivatives does not violate the symmetric positive-definite property of the covariance matrix; thus, extending the idea of sparse Cholesky GP with Cholesky factorization to incorporate derivatives is valid. A critical point to be addressed when incorporating the derivative observations into the sparse Cholesky GP approximation is how the derivatives are incorporated into the ordering, $\mathbf{P}$, and its corresponding sparsity set. In an exact GP scenario, the way derivatives are placed in the matrix formation does not have any effect on the mathematical accuracy, as matrix ordering does not change the spectrum of the operator; however, that is not the case for sparse Cholesky GP. The following sections address how derivatives are included in building the sparse Cholesky GP. The DT application of derivative-enhanced sparse Cholesky GP for noise-free data is detailed in Section \ref{sec:App_DT}.

\subsubsection{Ordering Derivatives and Supernodes with Derivatives}
\label{sec:Supernodes_with_derivatives}
Given a set of functional and derivative measurements, $\mathbf{F}$, up to an arbitrary derivative order, $d$, at all training points, we studied four different methods of ordering with derivatives. In this section, we discuss only one method in detail, which we call ``point-wise ordering algorithm 1". The readers are referred to Appendix A.1 for the rest of the three methods, namely ``point-wise ordering algorithm 2", ``measurement-wise ordering algorithm 1", and ``measurement-wise ordering algorithm 2". In Appendix A.1, we provide a detailed comparison between all four methods. 

In ``point-wise ordering algorithm 1", the derivative measurements are grouped with the points in an array-of-structures format. For better understanding, we illustrate the structure of the covariance matrix that has the function values $f(x)$ and its corresponding first-order derivative measurement $\nabla f(x)$ in \fig~\ref{fig:K_withder}. Assuming $N$=10, the plot shows the structure of the $\mathbf{K}_{der}$ matrix when $f$ and $\nabla f$ are grouped by points. Here, $\mathbf{F}$ is ordered as, \\ $[f^{(\mathbf{P}(1))},\nabla f^{(\mathbf{P}(1))},f^{(\mathbf{P}(2))},\nabla f^{(\mathbf{P}(2))}, ....,f^{(\mathbf{P}(N))},\nabla f^{(\mathbf{P}(N))}]$. 

\begin{figure}[h!]
    \centering
    \begin{minipage}{0.35\linewidth}
        \centering
        \includegraphics[width=\linewidth]{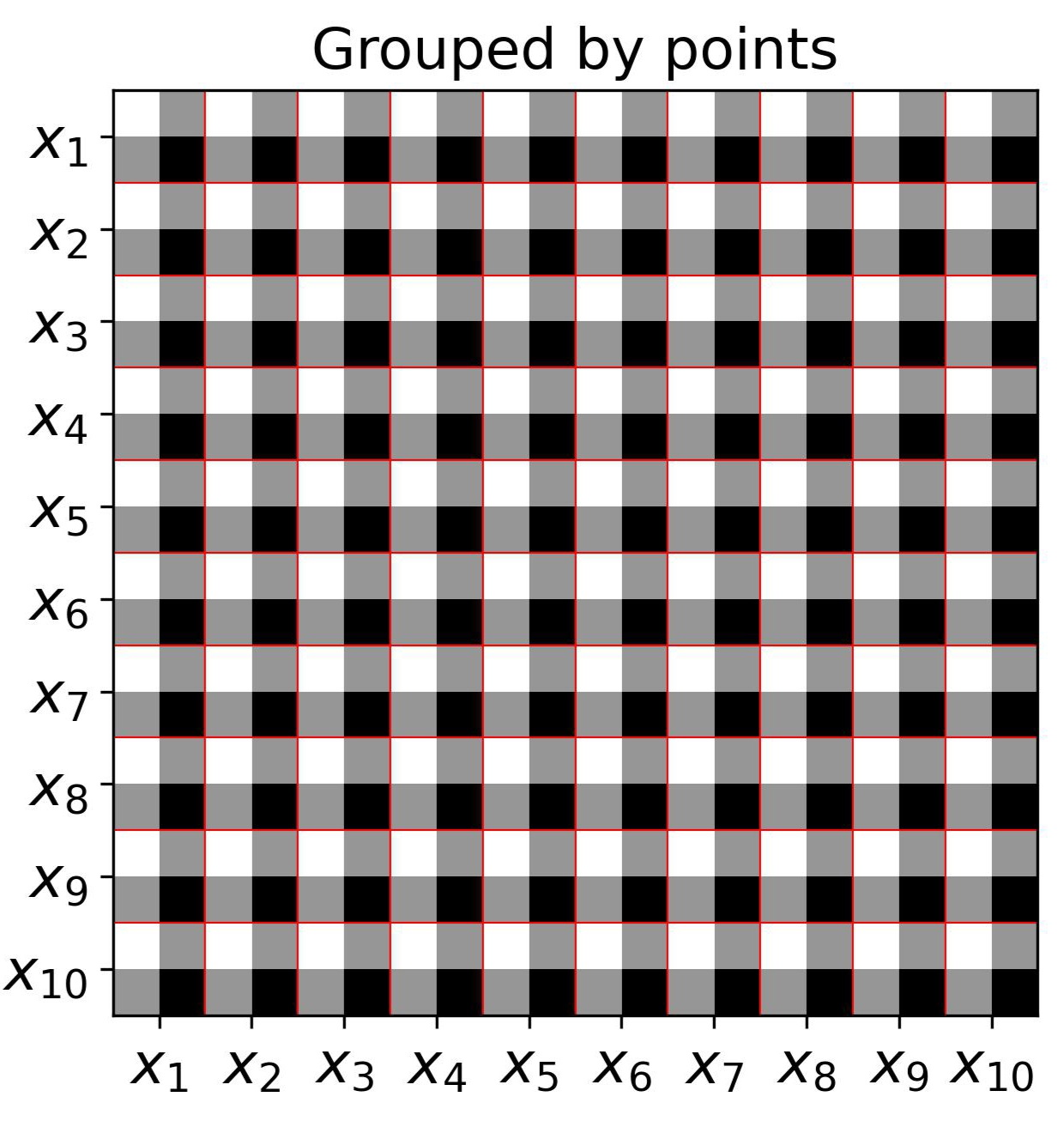}
    \end{minipage}
    \hspace{2mm} 
    \begin{minipage}{0.15\linewidth}
        \centering
        \includegraphics[width=\linewidth]{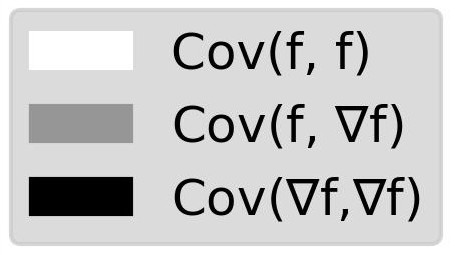}
    \end{minipage}
    \caption{The figure illustrates the point-wise ordering algorithm 1 of incorporating derivative measurements in the formation of the $\mathbf{K}_{der}$ matrix. The derivative measurements are grouped by points as they are placed next to the functional observations. Note that these functional observations are ordered according to $\mathbf{P}$.}
    \label{fig:K_withder}
\end{figure}

The initial ordering $\mathbf{P}$ was obtained using Eq. \ref{eq:P_mmd} without any derivative measurements, and then we extend  $\mathbf{P}$ to incorporate derivative measurements to obtain $\mathbf{P}^d$. A subscript $po-1$ is added to $\mathbf{P}^d$ to refer to the ordering grouped by the point-wise ordering algorithm 1. The algorithm used to obtain $\mathbf{P}^d_{po-1}$ is shown in Algorithm \ref{alg:p_point_1}. The $\mathbf{P}^d_{po-1}$ is obtained by iterating through $\mathbf{P}$ and adding the index of each derivative measurement immediately after the point measurement in the sequence. 

\begin{center}
\begin{minipage}{0.46\textwidth}
\begin{algorithm}[H]
	\caption{Constructing the $\mathbf{P}^d_{po-1}$ array}
	\label{alg:p_point_1}
	\begin{algorithmic}[1]
            \State \textbf{Input:} $\mathbf{P}$ \text{from MMD ordering} 
            \State \textbf{Output:} $\mathbf{P}^d_{po-1}$
            \Statex
		\State $td \gets \lfloor N_d / N \rfloor$
		\For{$i \gets 1$ to $N$}
		\For{$j \gets 1$ to $td$}
		\State $k \gets i \cdot td + j$
		\State $\mathbf{P}^d_{po-1}[k] \gets \mathbf{P}[i] + N \cdot j$
		\EndFor
		\EndFor
	\end{algorithmic}
\end{algorithm}
\end{minipage}
\end{center}

The supernodes, $\mathcal{SN}$, are originally obtained without any derivative measurement, through the procedure described in Section \ref{sec:Sparse_GP_review}. The existing supernodes, $\mathcal{SN}$, are then updated to include the derivative measurements to obtain $\mathcal{SN}^d_{po-1}$. In $\mathcal{SN}$, the set of parents and children consists of the index of the elements in $\mathbf{P}$. To obtain $\mathcal{SN}^d_{po-1}$, indices in each set of both parent and child are expanded to include the derivative measurements of the corresponding indices. Note that $\mathcal{SN}^d_{po-1}$ is a list of multiple supernodes that are used to build the sparse matrix. 

\subsection{Numerical Verification Experiments: Derivative-Enhanced Sparse Cholesky GP with Noise-Free Data}

We verify our derivative-enhanced sparse Cholesky GP algorithm on the 1D, 2D, and 3D Griewank functions for varying numbers of training points and orders of derivatives. We assume no noise is present in the training data (either in the function values or their derivatives). Additionally, we also study the accuracy of the sparse Cholesky GP with different $\rho$ values to understand the influence of the sparsity of the matrix on the prediction accuracy. 

We present in Figure~\ref{fig:S_GP_error} the results of our experiments for the Griewank functions with the sparsification factor set to $\rho =10$. The training and testing data are the same as those used for training the exact GP reported in Section \ref{sec:math}. For the reported hyperparameters, prediction from the sparse Cholesky GP is equal to the exact GP (\fig~\ref{fig:E_GP_error}), and the observations reported from the results of the exact GP apply to the sparse Cholesky GP as well. As the number of training points and the order of the derivative increase, the prediction error reduces significantly. Note that the grouping method has no appreciable effect on the prediction error, as the sparsity of the matrix is relatively small for the reported hyperparameters. Additionally, the higher dimensionality of the input function leads to a slightly higher predictive error, as shown in Figure~\ref{fig:S_GP_error}.

\begin{figure}[H]
	\centering
	\includegraphics[width=1.\linewidth]{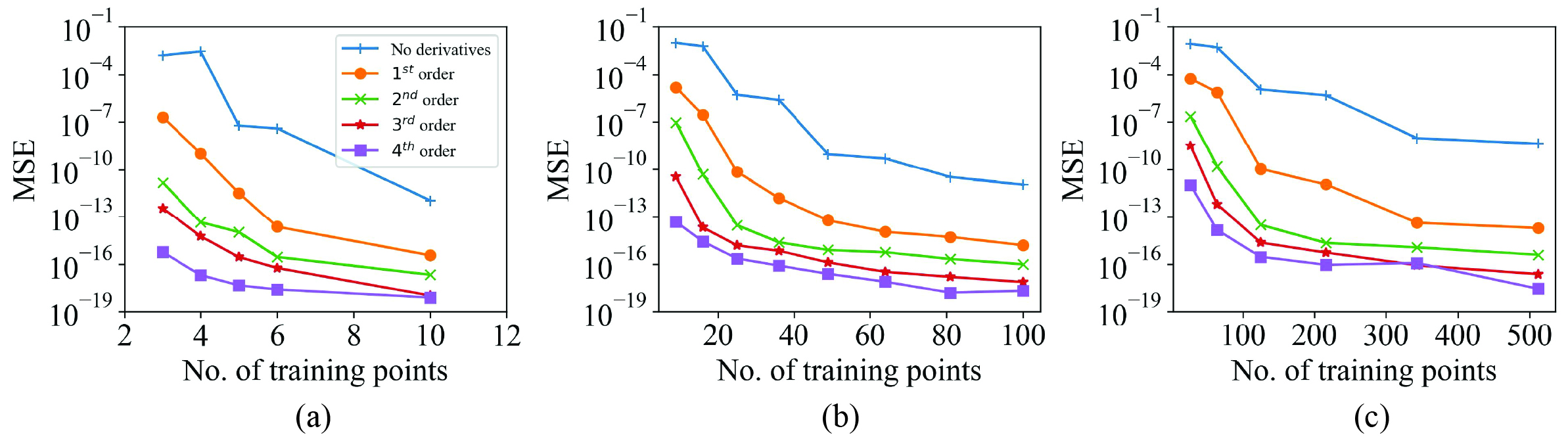}
	\caption{Prediction error from sparse Cholesky GP for 1D (a), 2D (b), and 3D (c) Griewank functions}
	\label{fig:S_GP_error}
\end{figure}

We know that the ordering of the derivatives within the matrix plays no role in the formal accuracy of the method due to spectral equivalency under permutations; however, this property does not hold when the matrix is made sparse. We perform additional experiments by varying $\rho$ values to compare the predictive performance for the point-wise ordering algorithm 1 and the measurement-wise ordering algorithm 1. Similarly, we perform experiments by varying the number of training points, $N$, to compare the predictive performance of the point-wise ordering algorithm 1 and the measurement-wise ordering algorithm 1. These additional results are reported in Appendix A.2.

From the experiments given in Appendix A.1, it is evident that the point-wise ordering algorithm 1 and the measurement-wise ordering algorithm 2 show better predictive performance compared to the other proposed methods. However, in the case of sparse Cholesky GP for our DT application, point-wise ordering algorithm 1 offers a computational advantage, as it allows adding additional functional and derivative observations by including additional columns in the matrix. Thus, point-wise ordering algorithm 1 is chosen as the main method in this work. The details of our dynamic algorithm are presented in Section \ref{sec:dynamic_GP_for_DT}.

%%%%%
%
\section{Dynamic Sparse Cholesky GP For Use in Digital Twin Application}
\label{sec:dynamic_GP_for_DT}

In a DT system, surrogate models, like GPs, are employed to predict the target property of interest for the given state of the physical system. These surrogates are often trained using an initial set of data representing the physical system and then deployed in service as a DT. The growth in the number of derivative-enhanced observations follows Lemma 3.0.1, which shows that the covariance matrix size increases combinatorially with dimension $p$ and derivative order $d$. This motivates the need for sparse and scalable methodologies. One of the critical aspects of DT is that the surrogate model should have the ability to be updated in a smooth fashion to accommodate any changes in the physical system. In other words, the surrogates need to be dynamically updated without the need for extensive re-training. Additionally, the state of the physical system can change significantly during the operation, and the trained surrogate may not perform well enough even with regular dynamic updates. Inevitably, the surrogate model may need to be retrained. The DT system should have the ability to detect when to trigger re-training. Considering the above-discussed requirements for the surrogates in mind, we propose a dynamic sparse Cholesky GP algorithm, which has the ability to be dynamically updated or retrained when new information is available. 

\subsection{Dynamic Derivative-Enhanced Sparse Cholesky GP} \label{sec:dynamic_S_GP}

As discussed in Section~\ref{sec:DT_algorithms}, a major consideration when implementing GPs for DT applications is the choice of linear solver used for ``inverting'' the kernel matrix system.  Earlier we introduced the idea of using Cholesky as an appropriate solver method given the symmetric positive definite properties of the kernel matrix, even when additional derivative information is added (see Lemma 2.4.1 in Sec~\ref{sec:math}). In addition to matrix properties, variants on Cholesky has been used in several other areas \cite{Polok2013IncrementalBC, Hecht_Updated_sparse_cholesky} where dynamic updates of GP systems were required. In addition to the traditional benefits of Cholesky, the sparse Cholesky algorithm given in Section~\ref{sec:DT_algorithms} yields additional benefits because of the connection it draws between groupings of points in space and collections of columns within the kernel matrix. This is where grouping by points offers a computational advantage because when new data with derivative information is available, new data can be included in the matrix by adding an additional column without re-computing the whole matrix. However, the major questions to be addressed are where the new column should be placed in the matrix and the sparsity of the new column. In order to address these questions, we rely on the idea of a dynamic supernode,  which allows us only to re-evaluate the Cholesky factors of a small number of columns when building the matrix $\mathbf{U}$, thus eliminating the need for re-evaluating the complete sparse matrix. %We term this approach as fast update of sparse GP, and the details of this approach are discussed in the following section \ref{sec:fast_gp}. 

Note that the dynamic supernode in this work should not be confused with the dynamic supernode concept used in matrix update and downdate \cite{davis_dynamic_2009}. Additionally, incremental factorization methods have been extensively studied in other domains, such as \cite{Polok2013IncrementalBC, Hecht_Updated_sparse_cholesky}. These methods share the common objective of avoiding complete re-factorization when new information becomes available. Our work is related to this computational philosophy; however, our focus is on developing a dynamic update framework for derivative-enhanced GP models within a DT setting (Section \ref{sec:App_DT}). In this section, we discuss two different approaches to generating and updating our dynamic supernodes.
%and the steps involved in them. 
For the sake of simplicity, the approaches below are described for derivative-free measurements. However, they can be extended to include any arbitrary order of derivatives. We primarily describe the approaches below for noise-free measurements. Additionally, dynamic sparse Cholesky GP is only applied for derivatives grouped by points, as it reduces the number of supernodes that need to be re-evaluated. Since the dynamic sparse Cholesky GP methods discussed in this section will be applied for the DT application problem in Section \ref{sec:App_DT}, incoming data will be added to the existing data, which will allow the DT to get better aligned with the PT, i.e., discarding previous acquired data may lead to the loss of information about the underlying system behavior. Lastly, we would like to remind the reader that although the squared-exponential kernel shows rapid spectral decay (which makes it suitable for low-rank approximations), our focus in this work (especially in Section~\ref{sec:dynamic_GP_for_DT}) is on a scalable sparse formulation that leads to streaming updates in an effective manner. The sparse Cholesky method provides us with a natural way for localized streaming updates through dynamic supernodes, which is not possible in standard low-rank GP formulations.

\subsubsection{Supernode Update: Approach 1 (SU-Approach1)} 

We begin our discussion of the supernodes updating by assuming that we start with an initial set of points from which we have generated our kernel matrix and our initial permutation $\mathbf{P}$. Initially, we obtain $\mathbf{P}$ from the MMD ordering mentioned in Sections \ref{sec:Sparse_GP_review} and \ref{sec:Sparse_GP_deriv}. We generate supernodes, $\mathcal{SN}$, from the initial set of training points, $\mathcal{D}_{train}$. The supernodes, $\mathcal{SN}$, are generated by first constructing the sparsity pattern associated with the reordered kernel matrix according to the permutation $\mathbf{P}$. During the MMD ordering, each point has a length scale, $\mathbf{l}^{(i)}$, associated with it (\eq~\ref{eq:l_mmd}), which is used along with $\rho$ to form supernodes (see Section \ref{sec:Sparse_GP_review}). Specifically, for each reordered point, $i$, the neighbor set $\mathcal{N}(i)$ is defined as
\[
\mathcal{N}(i) 
= \left\{\, j < i \;:\; 
\| \mathbf{x}_{P(i)} - \mathbf{x}_{P(j)} \|
\leq \rho \, \mathbf{l}^{(i)} 
\right\},
\]
where $\rho$ is the sparsification parameter and $\mathbf{l}^{(i)}$ is the insertion radius associated with the $i^{\text{th}}$ pivot in the MMD ordering. This neighbor structure defines the nonzero pattern of the sparse Cholesky factor. Supernodes are then constructed by grouping consecutive columns that share identical sparsity structures below the diagonal. Precisely, a supernode, $\mathcal{S}_k \in \mathcal{SN}$, is defined as a maximal contiguous set of column indices such that 
\[
\mathrm{pattern}(i) \setminus \{i\}
=
\mathrm{pattern}(i+1) \setminus \{i+1\},
\]
for all consecutive indices $i$ within the supernode. 

In order to do the dynamic update, we split $\mathbf{P}$ into two arrays, $\mathbf{P}_{\text{fix}}$ and $\mathbf{P}_{\text{dyn}}$. The set $\mathbf{P}_{dyn}$ is obtained by taking $20\%$ of elements from the tail end of $\mathbf{P}$. In other words, the size, $M$, of $\mathbf{P}_{dyn}$ is about $20\%$ of $N$. After obtaining the fixed and dynamic sets, the supernodes with the parent set from $\mathbf{P}_{fix}$ are considered fixed supernodes, $\mathcal{SN}_{fix}$. Alternatively, the supernodes with parents from $\mathbf{P}_{dyn}$ are considered dynamic. In this approach, the parents and children of all the dynamic supernodes are merged together to form a single dynamic supernode, $\mathcal{SN}_{dyn}$. Doing so, we eliminate the need to pick the supernodes to which the new point will be added. 

In our setting, a parent node refers to the pivot element (precisely, a column index) assigned to each training point under the MMD ordering. After applying the MMD ordering, each point, $\mathbf{x}_{P(i)}$, is associated with a position $i$ in the elimination order. The parent of the node $i$ is the ``identifier" used in the elimination tree induced by the sparse Cholesky factorization of the reordered matrix. Therefore, the parent node gives the ordering-dependent representative that determines the way in which fill-in propagates in the elimination tree. Therefore, the parent node is the index in the elimination tree (induced by $\mathbf{P}$) that governs the sparsity propagation and the supernode(s) grouping. As described above,  the fixed/dynamic split is defined by partitioning $\mathbf{P}$. Hence, the ``parent set" (of nodes) of a supernode corresponds to the subset of $\mathbf{P}$ that defines its elimination ordering (and, as a consequence, its dependency structure in the Cholesky factorization).

Once the fixed and dynamic sets are determined, the fixed supernodes $\mathcal{SN}_{fix}$ would not be disturbed when an additional data point is available; only $\mathcal{SN}_{dyn}$ is re-evaluated. $\mathcal{SN}_{\mathrm{dyn}}$ is re-evaluated by updating only the local sparsity structure associated with the new data points and the previously identified dynamic region (of data points). Let $\mathbf{x}_{\mathrm{new}}$ denote a new data point. We first determine its interaction set with respect to the existing training configuration by constructing its neighbor set using the same sparsification rule as in the initial factorization,
\[
\mathcal{N}(\mathrm{new}) 
= \left\{\, j \in \mathcal{D}_{\mathrm{train}} \;:\;
\| \mathbf{x}_{\mathrm{new}} - \mathbf{x}_j \|
\leq \rho \, \mathbf{l}^{(j)} \right\}
\cup \mathrm{kNN}(\mathbf{x}_{\mathrm{new}}).
\]

The indices corresponding to the affected dynamic supernodes are identified via the union of supernodes associated with  $\mathbf{P}_{\mathrm{dyn}}$, and supernodes whose column indices are reachable from the neighbors of $\mathbf{x}_{\mathrm{new}}$ under the current elimination structure. This defines a restricted index set $\mathcal{R}$, which determines the portion of the factorization which must be updated. 

The sparsity pattern of the updated system is updated locally by appending $\mathbf{x}_{\mathrm{new}}$ and recomputing only the affected column patterns:
\[
\mathrm{pattern}(\mathbf{x}_{\mathrm{new}}) 
= \mathcal{N}(\mathrm{new}),
\]
and propagating fill-in updates to all ancestors of nodes in $\mathcal{R}$ according to the elimination tree structure induced by $\mathbf{P}$. As mentioned earlier, no modification is made to $\mathcal{SN}_{\mathrm{fix}}$.

Finally, the dynamic supernodes are reconstructed by performing a local re-grouping within $\mathcal{SN}_{\mathrm{dyn}}$. Columns whose updated sparsity patterns remain identical after insertion are merged into existing supernodes, while columns whose patterns differ from their predecessors are split into new supernodes. Illustration of the above-discussed steps of creating fixed and dynamic dataset supernodes is shown in \fig~\ref{fig:dynamic_update}. On the left, we can see the $N \times N$ sparse matrix obtained using the initial ordering $\mathbf{P}$ and $\mathcal{SN}$. The gray and orange columns in the middle figure show columns from the fixed and dynamic supernodes, respectively. Finally, all the supernodes with dynamic parents are merged to form $\mathcal{SN}_{dyn}$, shown as orange in the right-most figure in  \fig~\ref{fig:dynamic_update}. If the derivative information is available as part of training dataset, $\mathcal{D}_{train}$, it can be added to $\mathbf{P}_{fix}$ and $\mathbf{P}_{dyn}$, and subsequently $\mathcal{SN}_{fix}$ and $\mathcal{SN}_{dyn}$ are updated using the procedure described in Section \ref{sec:Supernodes_with_derivatives}.

\begin{figure}[h!]
	\centering
	\includegraphics[trim=0 9.25cm 0 0, clip,width=0.9\linewidth]{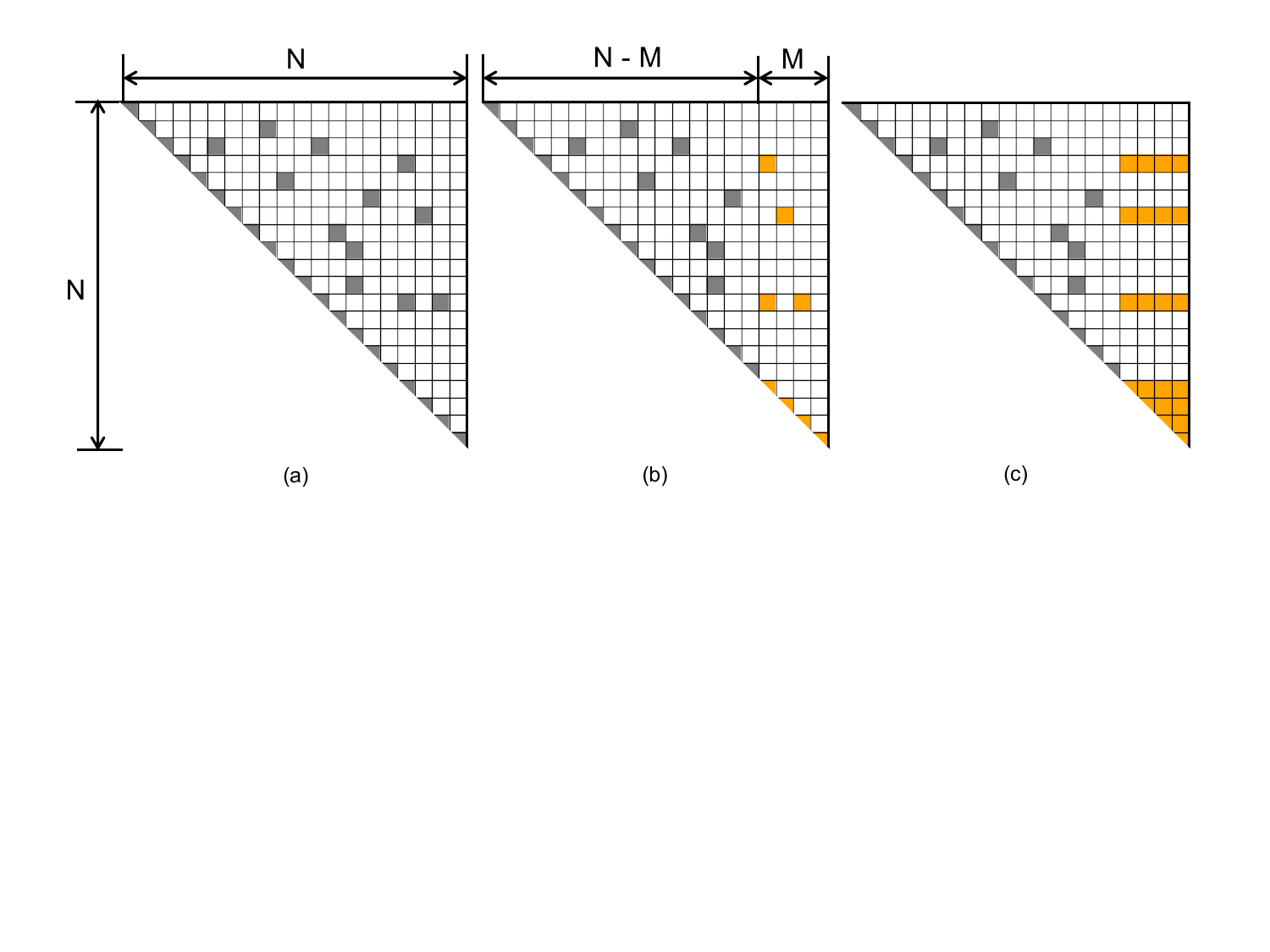}
	\caption{Illustration of generating fixed and dynamic supernodes to generate the sparse matrix. The diagonal entries shown in gray and orange correspond to parents of fixed supernodes, $\mathcal{SN}_{fix}$, and dynamic supernodes, $\mathcal{SN}_{dyn}$, respectively.}
	\label{fig:dynamic_update}
\end{figure}

When a new measurement is available, the generated dynamic supernode is updated in the following manner, and an illustration of the update process is shown in \fig~ \ref{fig:dynamic_update_2}. When new data is available, it is added to the dynamic dataset $\mathbf{P}_{dyn}$, and the set is then reordered based on the MMD ordering scheme, and the parent of the $\mathcal{SN}_{dyn}$ is updated based on the new order. The Cholesky factors of the updated $\mathcal{SN}_{dyn}$ are evaluated and used to build the sparse matrix $\mathbf{U}$. This process is repeated until the model needs to be retrained. The criteria for re-training will be discussed in Section \ref{sec:fast_gp}. When the model gets retrained, a new dynamic set, $\mathbf{P}_{dyn}$, is created as all the available data points, including the fixed set, are subjected to re-ordering. 

\begin{figure}[h!]
	\centering
	\includegraphics[trim=0 9.cm 0 0, clip,width=1.\linewidth]{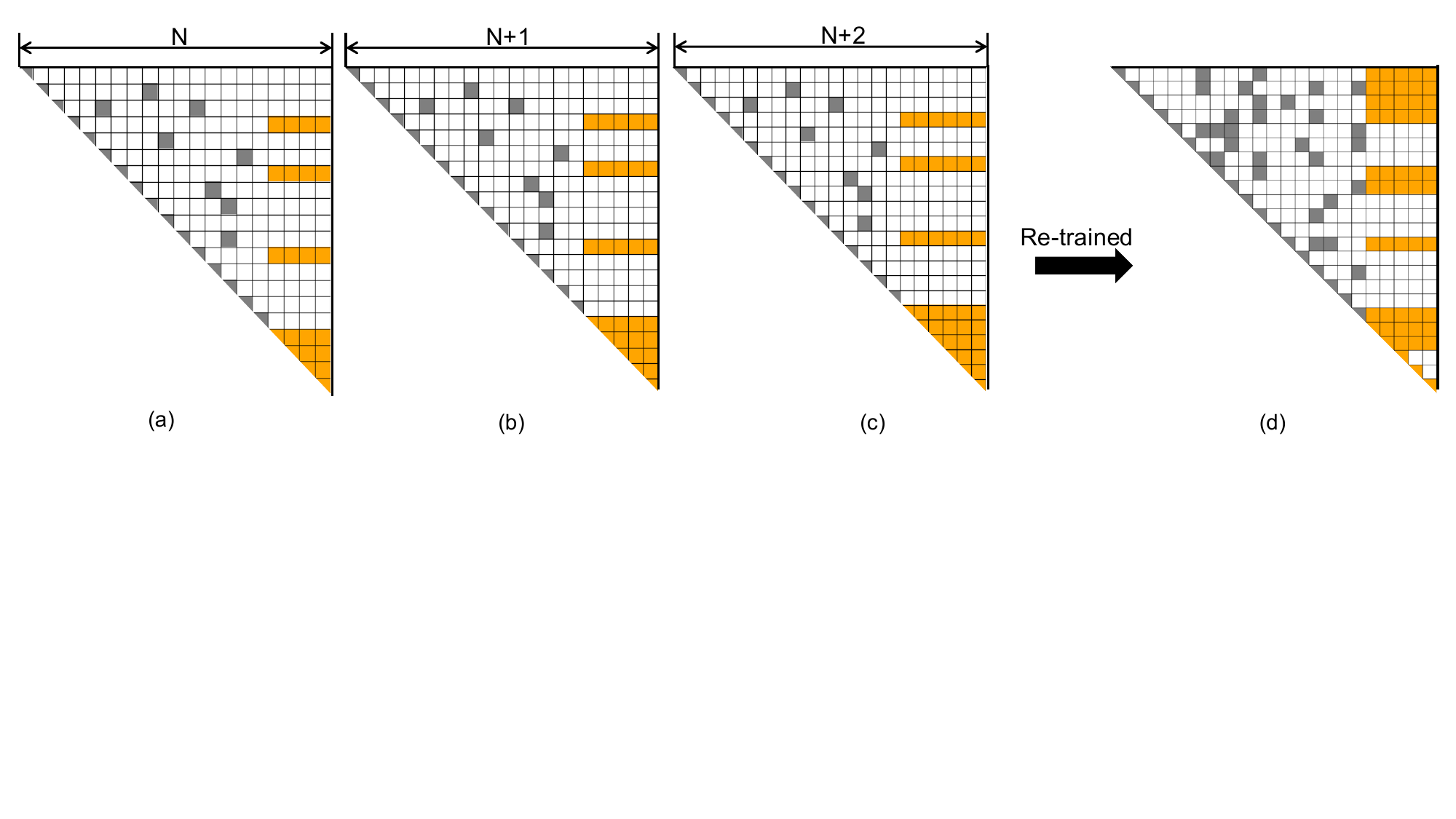}
	\caption{Illustration of the dynamic update of sparse Cholesky GP using SU-Approach1 with re-training. A $N \times N $ sparse matrix with fixed and dynamic set is shown in (a), new points are included in the dynamic supernodes (b and c), and finally, the model is re-trained, during which a new set of fixed and dynamic sets is created, shown in (d).}
	\label{fig:dynamic_update_2}
\end{figure}

\begin{figure}[H]
	\centering
	\includegraphics[width=0.4\linewidth]{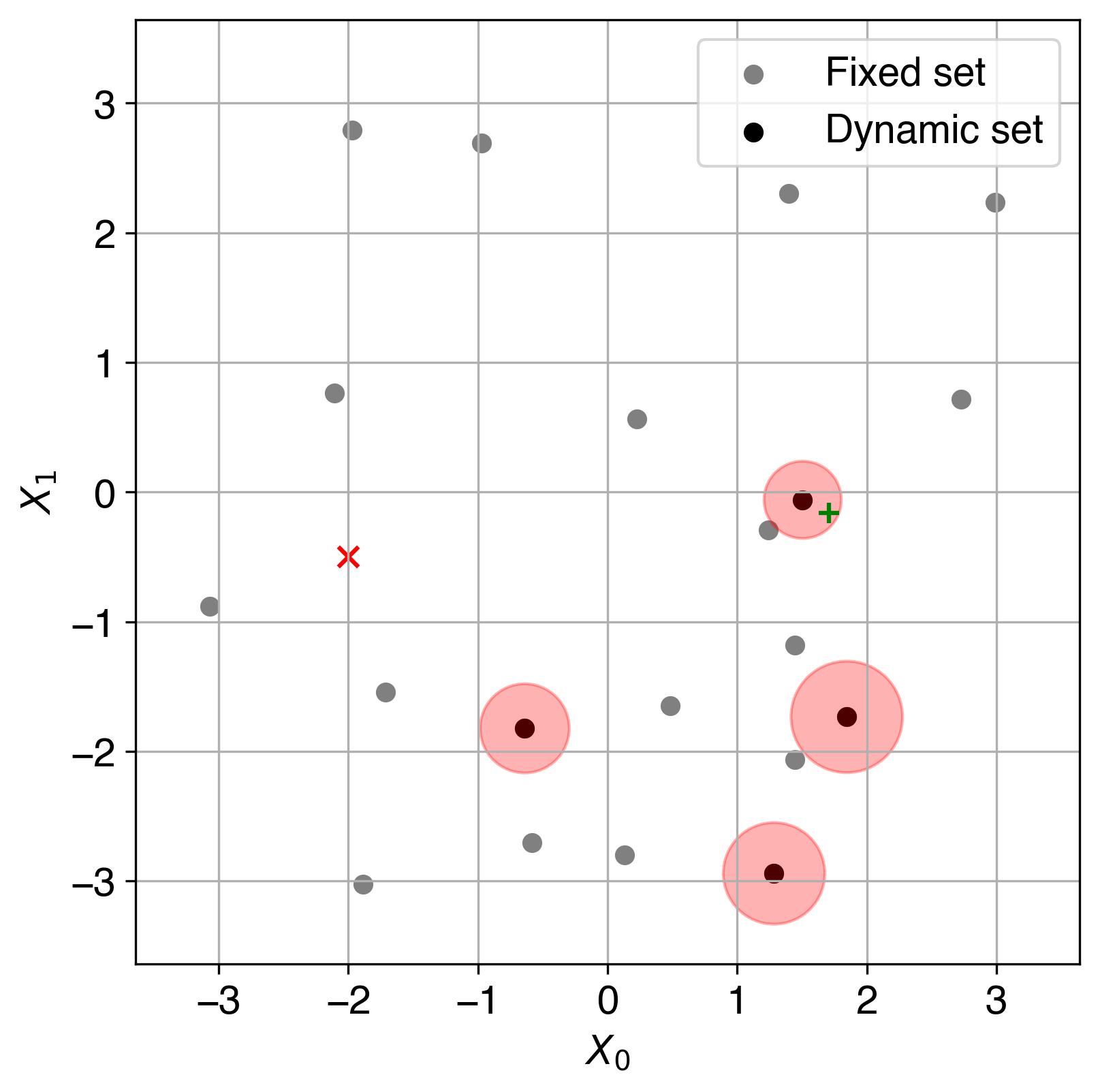}
	\caption{The figure shows the distribution of fixed (gray points) and dynamic set (black points) in a random dataset. When a new point (green point) falls within the radius ($\rho \cdot \mathbf{l}^{(i)}$), an existing supernode is updated. On the contrary, if any of the new point (red point) does not fall within the radius, a new supernode is created.}
	\label{fig:geo_point}
\end{figure}

\begin{figure}[H]
	\centering
	\includegraphics[clip,width=0.75\linewidth]{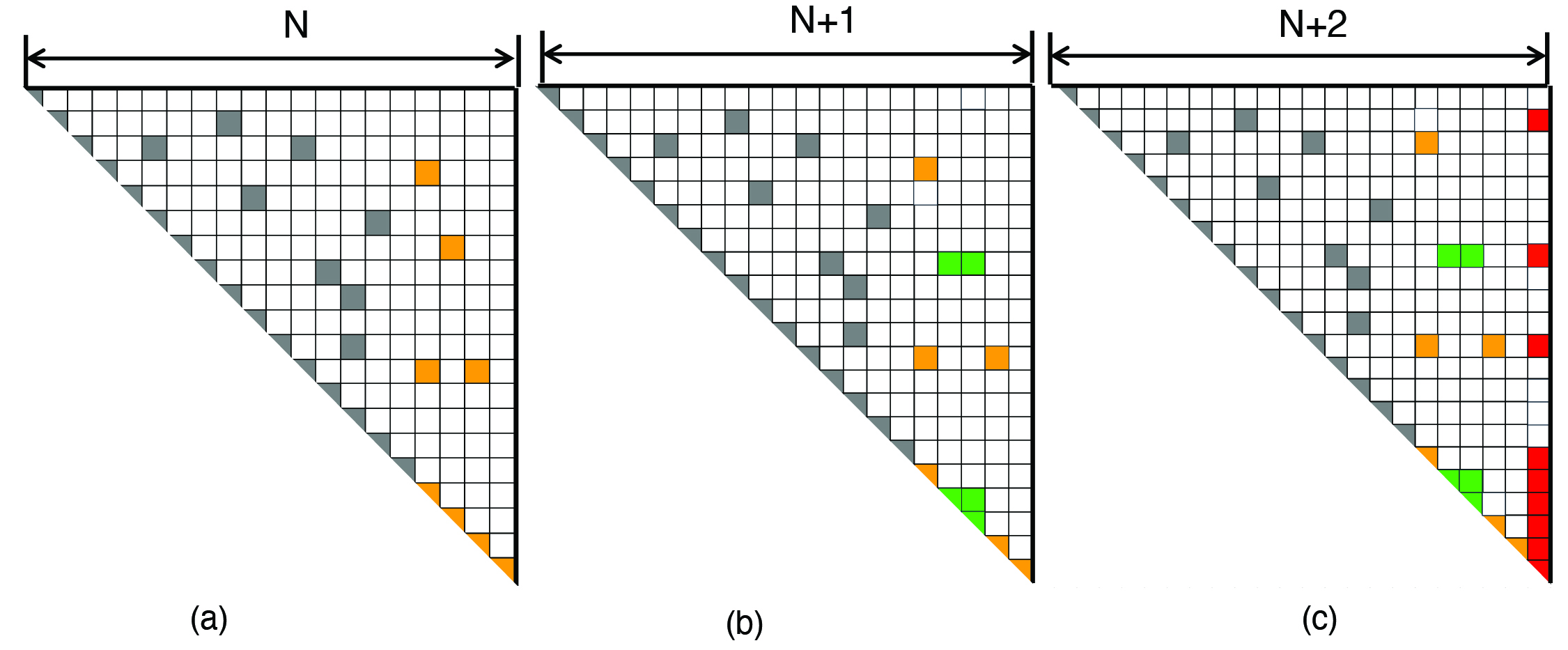}
	\caption{Illustration of the dynamic update of the GP using SU-Approach2. On the left (a), the initial $N \times N$ sparse matrix with fixed and dynamic set highlighted in gray and orange, respectively. (b) Since the green point falls within the set radius, the dynamic supernode is updated, as shown in green. and (c) a supernode is created to accommodate a new point that does not fall within the set radius of any points in the dynamic set. }
	\label{fig:dynamic_update_geo}
\end{figure}

\subsubsection{Supernode Update: Approach 2 (SU-Approach2)}

In this approach, we obtain $\mathbf{P}_{fix}$ and $\mathbf{P}_{dyn}$ through $N-M$ and $M$ split of $\mathbf{P}$, the same way as the previous approach. Once obtained, the supernodes with elements from $\mathbf{P}_{fix}$ and $\mathbf{P}_{dyn}$ are considered  $\mathcal{SN}_{fix}$ and $\mathcal{SN}_{dyn}$ supernodes, respectively. Unlike the previous approach, the dynamic supernodes are not merged to form one dynamic supernode; in other words, multiple $\mathcal{SN}_{dyn}$ can exist in this approach. Once the dynamic supernodes are formed during the initial training, the newly available points are added to one of the dynamic supernodes, or a new supernode is created, which is decided based on the geometric location of the new point with respect to the dynamic set. During the MMD ordering, each point has a length scale, $\mathbf{l}^{(i)}$, associated with it (\eq~\ref{eq:l_mmd}), which is used along with $\rho$ to form supernodes (see Section \ref{sec:Sparse_GP_review}). When the newly available point falls within the radius $(\rho \cdot \mathbf{l}^{(i)})$ of any of the points in the dynamic set, then the new point is added to the $\mathcal{SN}_{dyn}$ and the Cholesky factor is re-evaluated. If the new point does not fall within the radius of any of the points in the dynamic set, a new supernode is created. Figure~\ref{fig:geo_point} illustrates fixed points (gray), dynamic points (black), and a red circle shows the radius of the circle ($\rho \cdot \mathbf{l}^{(i)}$) for each of the dynamic points. If a newly available point (green) lies within the radius of any of the dynamic set, then the corresponding supernode is updated, and factors from other dynamic supernodes (and fixed ) are reused. Visualization of this is provided in \fig~\ref{fig:dynamic_update_geo} (b).  If the new point (red) does not fall within the specified radius of any of the points in the dynamic set, then a new supernode is created, red column in \fig~\ref{fig:dynamic_update_geo}.

\subsection{Fast Derivative-Enhanced Sparse Cholesky GP Updating and Re-training} \label{sec:fast_gp}
\alg  \ref{alg:dynamic_GP} shows our fast derivative-enhanced sparse Cholesky GP updating and re-training algorithm. The criteria for re-training are dependent on several factors. In this work, we choose three different criteria for re-training. First, when the state of the physical twin changes, the bounds of the input features may fall beyond the range of the previously trained DT surrogate. Therefore, it is crucial to identify the outliers when the new information is available from the physical twin. Secondly, when the sparse Cholesky GP is dynamically updated, it may not result in a better surrogate than the deployed surrogate, i.e, the prediction error of the dynamically updated surrogate is more than the existing prediction error on a standard test dataset. In that case, the updated surrogate is not deployed, and the new data is stored and utilized while re-training. Additionally, a fixed budget can be set for the amount of unused new data, and re-training can be triggered when the budget is reached. Third, the deployed surrogate can be assumed to be diverging when the prediction error of the dynamically updated surrogate increases continuously for a set of newly added points. Therefore, a fixed number of continuous divergences is used to trigger re-training of the model.  We now present the details of our algorithm.

Based on the initial set of data, $\mathcal{D}_{train}$, we train a surrogate model, $\mathcal{M}(\theta, \mathcal{D}_{train})$, which is a sparse Cholesky GP with or without derivative information. The hyperparameters, $\theta$, are optimized to reduce the prediction error, $L(\mathcal{M})$, on the test dataset,  $\mathcal{D}_{test}$. The surrogate, $\mathcal{M}$, with optimized hyperparameters, is deployed as the digital twin. The prediction error from the deployed surrogate is set to $L_{best}$. The additional information obtained from the physical twin is added to a dynamic dataset, $\mathcal{D}_{stream}$, which will be used to dynamically update or re-train the deployed digital twin, $\mathcal{M}$. For every new point, $\mathcal{D}_{new}$, from the dynamic set, the algorithm checks whether one of the following criteria is met to trigger retraining. 1) Every $x_{new}$ will be checked if it is an outlier compared to the existing dataset using the outlier detection algorithm (\alg B.1 mentioned in Appendix B). The algorithm calculates the distance between the points used in the current state of GP using the k-nearest neighbor (k-NN) method. We employ a distance-based metric for outlier detection to determine whether a new data point is outside the nominal training domain. Additionally, since we want our work to exploit the screening effect \cite{chen_sparse_2024} of the kernel, a distance-based metric is the most useful. Based on the calculated distance, the threshold for outlier detection, $\tau$, is determined using the hyperparameter, $\eta_{out}$. The new point, $x_{new}$, is classified as an outlier when the distance between the new point and existing points is greater than the calculated threshold, $\tau$. 2) If the number of unused data points, $\eta_{unused}$, is greater than the preset budget, $\eta_{budget}$.  3) If the number of continuous divergence, $\eta_{div}$, is greater than the preset limit, $\eta_{div\_th}$. If one of the above three criteria is met, then a complete re-training of the model is performed with a new $X_{train}$, which is a concatenation of the existing $X_{train}$ and $X_{stream}$. Note that $\mathcal{D}_{stream}$ is created for the sake of numerical experiments; in practice, whenever new data,  $\mathcal{D}_{new}$, is available, it will be immediately used to update the model. 

\begin{algorithm}[H]
	\caption{Fast sparse Cholesky GP update with Outlier-based Retraining}
	\label{alg:dynamic_GP}
	\begin{algorithmic}[1]
		\State \textbf{Input:} Initial training data $\mathcal{D}_{train}=\{X_{train}, f_{train}\}$, test data $\mathcal{D}_{test}$, stream of new data points $X_{stream}$
		\State \textbf{Notation:} $\mathcal{M}(\theta, \mathcal{D})$ is a GP model with hyperparameters $\theta$. $L(\mathcal{M})$ is the model's mean squared error on $\mathcal{D}_{test}$.
		
		\Statex
		\State \textbf{Initialize Model:}
		
		\State $\mathcal{M} \gets \text{Train GP with } \theta^* \text{ and } \mathcal{D}_{train}$\Comment{Where $\theta^*$ is optimized  hyperparameters}
		\State $L_{best} \gets L(\mathcal{M}),\ \eta_{div}=0, \eta_{unused}=0$

		\Statex
		\For{$i = 1$ to $|\mathcal{D}_{stream}|$}
		\State $x_{new} \gets X_{stream}[i]$
		\If{\textsc{IsOutlier}($x_{new}, X_{train}$) \text{or} \textsc{ $\eta_{unused}> \eta_{budget}$}} \text{or} \textsc{ $\eta_{div}> \eta_{div\_th}$} 
		%\Statex \Comment{\textbf{--- Full Retrain ---}}
		\State $X_{add} \gets \{x \in X_{stream}[1 \dots i] \mid x \notin X_{train} \}$
		\State $X_{train} \gets X_{train} \cup X_{add}$ and update $f_{train}$
		\State $\mathcal{M}(\theta_{new}^*) \gets \Call{FullRetrain}{X_{train},f_{train}}$
			\If {$L(\mathcal{M}(\theta_{new}^*)) <  L_{best}$}
			\State $\theta^* \gets \theta_{new}^*$
			\Else
		\Repeat
		\State $\rho+=1$
		\State $\mathcal{M}(\theta_{new}^*) \gets \Call{FullRetrain}{X_{train},f_{train}}$
		\State $\theta_{new}^* \gets \arg\min_{\theta} L(\mathcal{M}(\theta, \mathcal{D}_{train}))$
		\Until{$L(\mathcal{M}(\theta_{new}^*)) <  L_{best}$ or Sparsity of $\mathcal{M}(\theta_{new}^*) > \text {Lower-bound sparsity}$}
		\EndIf
		\State $\theta^* \gets \theta_{new}^*$
		\State $\mathcal{M} \gets \text{Train GP with } \theta^* \text{ and } \mathcal{D}_{train}, , L_{best} \gets L(\mathcal{M})$

		\Else
		%\Statex \Comment{\textbf{--- Incremental Update ---}}
		\State $\mathcal{M}_{cand} \gets \textsc{FastUpdate}(\mathcal{M}, x_{new})$ \Comment{Dynamic update}
		\State $L_{cand} \gets L(\mathcal{M}_{cand})$
		\If{$L_{cand} < L_{best}$}
		\State $\mathcal{M} \gets \mathcal{M}_{cand}, L_{best} \gets L_{cand}$ \Comment{Accept the update}
		\State $X_{train} \gets X_{train} \cup \{x_{new}\}$ and update $Y_{train}$
		\Else
		\State $\eta_{unused}+=1$ \Comment{If update is not accepted, $\mathcal{M}$ and $L_{best}$ are unchanged.}
		\If {$L_{cand} / L_{best}>1$} \Comment{Check for continuous divergence}
		\State $\eta_{div}+=1$
		\Else
		\State $\eta_{div}=0$
		\EndIf
		\EndIf
		\EndIf
		\EndFor
	\end{algorithmic}
\end{algorithm}

During retraining, including additional data points does not guarantee an improved surrogate if the $\rho$ is fixed. For a fixed $\rho$, the sparsity of the matrix increases as the number of training points increases. If the prediction error from the retrained model is worse than the $L_{best}$, the $\rho$ value is increased until the retrained model performs better or the sparsity of the retrained model does not fall below a set limit. If the re-training is not triggered, then the sparse Cholesky GP is dynamically updated using one of the approaches discussed in Section \ref{sec:dynamic_S_GP}. We have provided the computational cost analysis for the dynamic sparse GP in Appendix B.2.

\subsection{Numerical Verification Experiments}
The fast update algorithm with dynamic update using SU-Approach1 and SU-Approach2 is verified through numerical experiments (2D Griewank function) and is reported in \fig~\ref{fig:dynamic_results}. In the numerical experiments, we train the initial model using $\mathcal{D}_{train}$, and it is updated using $\mathcal{D}_{stream}$ through dynamic update or re-training based on criteria set in \alg \ref{alg:dynamic_GP}. The size of $\mathcal{D}_{train}$ and $\mathcal{D}_{stream}$ are set at $25$ and $10$ points, respectively. Both datasets are randomly generated, and derivatives up to $4^{th}$-order are included in the dataset. The number of points added to the model from $\mathcal{D}_{stream}$ is shown on the horizontal axis of  \fig~\ref{fig:dynamic_results}, and the prediction error from the initially trained model is shown at $0$ point. During training and update, the model is tested with the same dataset $\mathcal{D}_{test}$. 

Figure~\ref{fig:dynamic_results} (a) shows the results of the experiment where the dynamic update is performed using SU-Approach1. Incorporating derivatives improved the prediction accuracy of the model. As additional data is included, the prediction error of the sparse Cholesky GP is reduced noticeably, irrespective of the order of derivatives included in the training. Similar observations can be made from the results of SU-Approach2, as shown in \fig~\ref{fig:dynamic_results} (b). Upon comparing the errors between the approaches, SU-Approach1 showed a lower prediction error than the error from SU-Approach2. This is due to the fact that for similar hyperparameters, SU-Approach1 exhibits lower sparsity than SU-Approach2, which is due to the formation of one big dynamic supernode by combining all smaller dynamic supernodes. Doing so increases the number of non-zero off-diagonal elements, leading to lower sparsity in SU-Approach1.

\begin{figure}[H]
    \centering
    \includegraphics[width=1\linewidth]{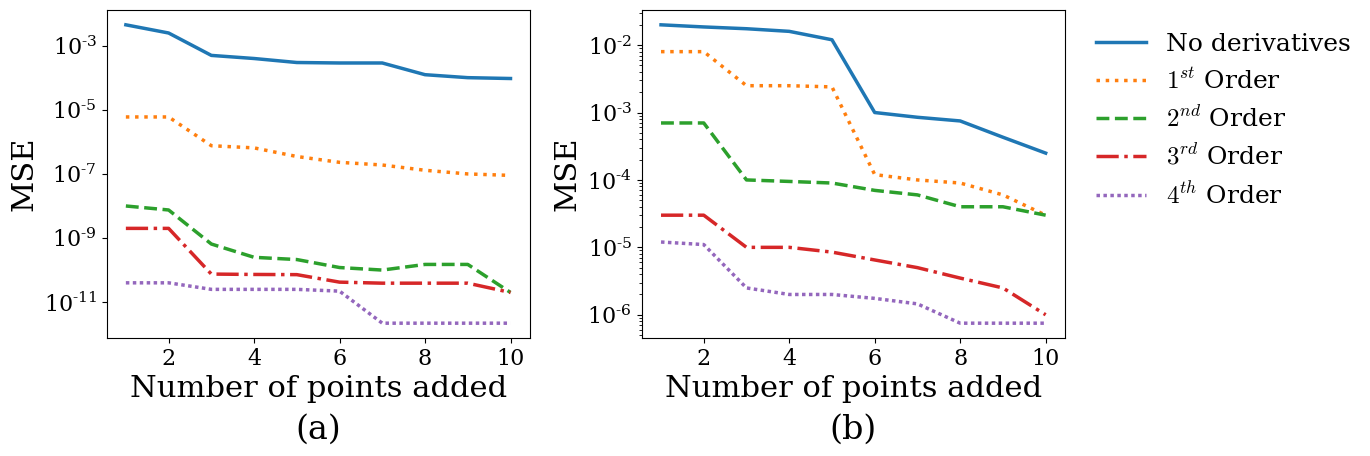}
    \caption{Results from dynamic update of the sparse Cholesky GP using SU-Approach1 (a) and SU-Approach2 (b) tested on a 2D Griewank function with initial $\rho=10$.}
    \label{fig:dynamic_results}
\end{figure}
\section{Application to Digital Twin}
\label{sec:App_DT}

To demonstrate the use of our new dynamic update algorithm containing derivative-enhanced sparse Cholesky modifications, we apply it within a real-world DT framework for predicting fatigue crack growth in an aircraft structure. The work of \cite{LogakannanABXZKMH26} demonstrated that higher-order derivatives are an effective way to train SciML models on an elasticity problem in solid mechanics. Motivated by the work of \cite{LogakannanABXZKMH26}, we applied GPs with higher-order derivatives within the context of DTs for a fatigue crack growth in an aircraft structure. In practice, fatigue cracks can evolve into arbitrary and complex shapes, but in many scenarios, they can be represented as a semi-elliptical surface crack in a thin plate, where a cyclic load applied normal to the crack faces drives fatigue crack growth. This geometry is illustrated in \fig \ref{fig:crack_dim}, where $a$ is the crack depth, $c$ is the crack length on the surface of the plate, and $t$ is the plate thickness. The rate at which the fatigue crack grows on the surface, $\frac{dc}{dN}$, is dependent on the material, applied cyclic loading, and the state of the crack, defined by $a$ and $c$. The objective is to update the DT model using periodic inspections of the crack state. The updated DT model is then used to make improved predictions, relative to the initial model, about the future crack states. The localized effect of inspection updates is consistent with Lemma 2.4.3, which establishes that for the squared-exponential kernel, both covariance and derivative-based covariance terms decay exponentially with inter-point distance. Consequently, derivative observations exert their strongest influence in regions that are close to the corresponding measurement locations, with rapidly diminishing impact as the distance increases. Superscript $PT$ and $DT$ will be used to distinguish between measured values (from the physical twin) and the values modeled (from the digital twin) by the sparse Cholesky of GP, respectively.

\begin{figure}[H]
    \centering
    \includegraphics[width=0.35\linewidth]{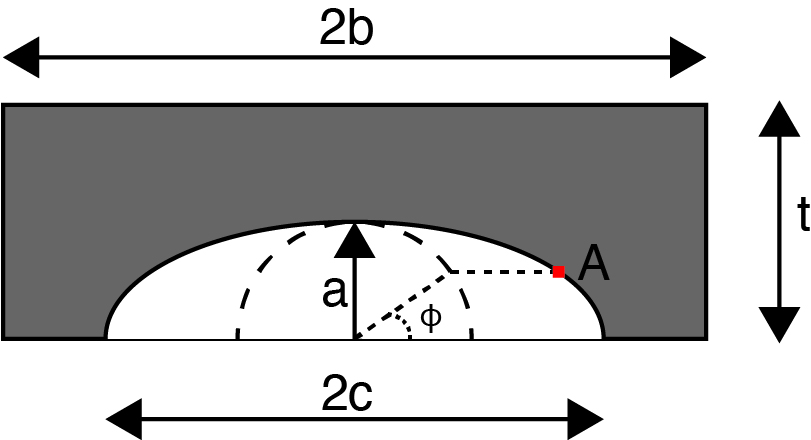}
    \caption{Semi-elliptical surface crack geometry in a finite plate, used to represent a thin aerospace component.}
    \label{fig:crack_dim}
\end{figure}

\subsection{Digital Twin (DT) Workflow} \label{sec:dt_workflow}

The corresponding DT workflow in Figure \ref{fig:DT_workflow} is a detailed version, specific to this application, of the general (abstract) workflow presented in Figure \ref{fig:DT_workflow_generic}. Consistent with the general DT workflow, this application-specific workflow consists of three phases: initialization of the DT model (black outline), a dynamic update given PT observations (red outline), and a real-time tethering between the PT and DT for real-time prediction (green outline) of the crack state. We use subscript $i$ for the dynamic update stage and $j$ for the prognosis stage. In other words, $i$ refers to inspection measurements of the crack state, which are used to update the DT model, while $j$ refers to the prediction of future crack states, given observations and model updates made at $i$.

The process involved in the initialization phase is marked in black dotted lines in \fig \ref{fig:DT_workflow}. This phase involves initial data acquisition and generating the initial DT, which is a sparse Cholesky GP surrogate. The data used to train the DT model can come from lab-scale experiments (\eg nominal material information) or from similar PTs that are already in service (\eg other aircraft in a fleet). The obtained dataset, $\mathcal{D}$, includes the values of $a$, $c$, $\frac{dc}{dN}$, and derivatives of $\frac{dc}{dN}$ with respect to $a$ and $c$ up to an arbitrary order, $d$. Note that $\frac{dc}{dN}$ is referred to as $f$ in Section \ref{sec:math}.

$\mathcal{D}$ is split into training data, $\mathcal{D}_{train}$, and testing data, $\mathcal{D}_{test}$, and used to develop the DT. For the model trained with $d^{th}$ order, $\mathcal{D}_{train}$ includes all the derivative terms up to order $d$, including mixed partial terms. See Section \ref{sec:dt_setup} for details on how these derivatives were obtained. The derivative values on the test dataset, $\mathcal{D}_{test}$, are not used, \ie the trained model is only tested for $\frac{dc}{dN}$ (\ie $f$). Upon completion of this initialization stage, it is important to note that the DT model is nominal in the sense that it does not yet capture any specific details of a particular PT. Once the initial DT model trained and validated, it enters the service alongside the PT where it is updated to include PT-specific details. 

During the inspection points, $i$, a measurement of the crack state ($a$ and $c$) is obtained, which is used to update the DT model (either dynamically or via re-training) by correcting for any discrepancies between observed PT crack growth rate and corresponding DT model predictions. When the DT model is updated, a smooth transition is desired to avoid discontinuities, especially in cases where derivative information is included. It is important to note that in previous work in this area, Bayesian updating methods were used, which was represented by a fixed model form and parameter re-calibration \cite{Yeratapally2020,leser_digital_2020}. In those prior cases, smoothness can be readily maintained. However, dynamically updating or retraining the model, as is done here, requires additional considerations for assessing smoothness. Immediately after initialization, differences between the PT observations and the DT model predictions stem from the fact that the DT model is, at this point, a nominal model that is not yet specific to any particular PT. Initially, DT model updates would be adjusting for as-manufactured differences between a specific PT and nominal case, for example. As flights (service) continue, updates would continue to update for additional PT-specific details that can include specific environments, material behavior, of changes in mechanisms. The newly available data from inspection, $\mathcal{D}_{new}$, is used to dynamically update or retrain the sparse Cholesky GP surrogate using the algorithm \ref{alg:dynamic_GP}, as outlined in red in \fig \ref{fig:DT_workflow}. Similar to $\mathcal{D}_{train}$, $\mathcal{D}_{new}$ includes the values of $a$, $c$, $\frac{dc}{dN}$, and derivatives of $\frac{dc}{dN}$ with respect to $a$ and $c$ up to an arbitrary order, $d$, including all the mixed partial terms for $d>1$.

During service, a real-time tethering between the PT and its DT is enabled for on-the-fly prognosis of the PT, outlined in green in \fig \ref{fig:DT_workflow}. The objective of this step is to predict crack growth in real-time, based on the most up-to-date DT model. At any real-time point, $j$, the DT model can be queried to predict the crack growth increment ($\Delta c$) based on the current state of the crack, expected load ($\Delta \sigma$), and expected number of cycles ($\Delta N$). In this demonstration, the applied load is assumed to be the same throughout each PT flight. However, applied loading represents a DT model input variable that can be measured in real-time, $j$, (\ie unlike crack state, load measurement does not require an inspection at $i$) and, if measured, can be used to make improved DT predictions in this prognosis stage. Finally, to align this demonstration with practice, the DT prognosis steps ($j$) assume no new observations of crack state are made. Consequently, predictions are made using the updated DT model along with an assumption of self-similar crack growth: $\frac{a}{c}$ remains constant. This ratio, however, is updated at the inspection points, $i$, when the PT crack state is measured.

\begin{figure}[H]
    \centering
    \includegraphics[trim=0 0.5cm 0 0cm, clip,width=.85\linewidth]{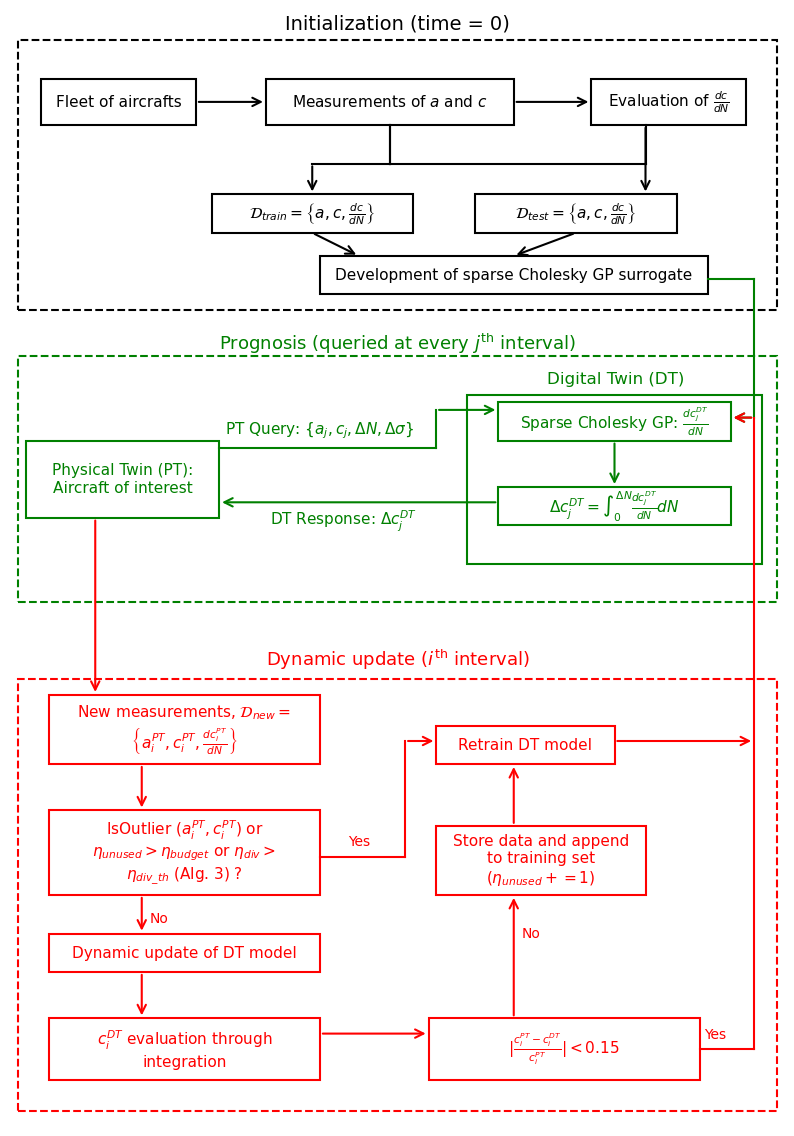}
    \caption{The employed DT workflow with initialization outlined in black, dynamic update of the DT model outlined in red, and real-time prognosis outlined in green. $a^{PT}$ and $c^{PT}$ correspond to the crack state obtained from the PT inspection. $a^{DT}$ and $c^{DT}$ represents the crack state prediction from the DT model.}
    \label{fig:DT_workflow}
\end{figure}

\subsection{Physical Twin (PT) Simulation}
For this demonstration, no physical experiments were completed. Instead, a mechanics-based model was used to simulate PT flights. Assuming constant amplitude cyclic loading, the rate of fatigue crack growth is governed by the crack state ($a$ and $c$), applied loading ($\Delta \sigma = \sigma_{max}-\sigma_{min}$), load ratio ($R= \frac{\sigma_{min}}{\sigma_{max}}$), and material properties ($C$ and $m$). A load ratio, $R=0$, was selected, implying $\sigma_{min} = 0$ and $\Delta \sigma = \sigma_{max}$, or simply $\sigma$ (the subscript `max' is dropped hereafter). Furthermore, the rate at which the fatigue crack grows (\eg $\frac{dc}{dN}$ or $\frac{da}{dN}$) is assumed to follow the Paris law model:

\begin{equation}\label{eq:paris_law}
	\frac{dc}{dN}=C(\Delta K)^m,
\end{equation}

\noindent where $\Delta K$ is cyclic stress intensity factor (described next), and for $R=0$, $\Delta K = K_{max}$. As with $\sigma$, we drop the subscript `max' and refer to $K$ hereafter. The stress intensity factor, $K$, defines the driving force for crack propagation and is a function of load and geometry variable (and not dependent on material properties). Stress intensity factors are generally obtained using high-fidelity computational fracture mechanics, see Ingraffea \cite{Ingraffea2004}, or surrogate models that are most recently developed using a variety of machine learning approaches \cite{GAUTAM2025111387}. Here, we employ the stress intensity factor surrogate model, obtained via symbolic regression, by Merrell \etal \cite{MERRELL2024110432}: 

\begin{equation}
	K=\sigma \cdot \sqrt{\frac{\pi l}{Q}}\cdot f_w(\frac{a}{t},\frac{c}{b}) \cdot M(\frac{a}{t},\frac{a}{c}) \cdot g(\frac{a}{t},\frac{a}{c},\phi)
    \label{eq:sif}
\end{equation}

\noindent where $f_w$ is a finite width correction factor, $M$ accounts for the aspect ratio (\ie $\frac{a}{c}$) of the crack, $g$ accounts for the free surface effects, $l$ is used to measure the perpendicular distance from the point of interest to the closest axis, and $Q$ is the square of the complete elliptic integral of the second kind. Geometrical features $a$, $c$, $t$, $b$, and $\phi$ are defined in Figure \ref{fig:crack_dim}. The equations defining $f_w$, $M$, and $g$ can be found in Equations 18, 19, and 22 of Merrell \etal \cite{MERRELL2024110432}. 

To simulate the PT crack evolution, an initial $a = 0.024$ and $c = 0.012$ $(\frac{a}{c} = 2)$ was selected and inserted into a plate with dimensions $t = 0.1$ in and $b = 0.72$ in. Material properties $C = 5.25 \times 10^{-21}$ and $m = 3.97$ were sampled from a distribution mimicking aluminum 7075-T6 alloy. We refer to these $C$ and $m$ as $C_2$ and $m_2$ to distinguish from the nominal model obtained during the initialization step, which are referred to as $C_1$ and $m_1$. Loading of $\sigma=8500$ psi was then applied.  With this PT, Equation \ref{eq:sif} was used to simulate the stress intensity factor near the surface, $\phi=5^\circ $, and depth, $\phi=90^\circ$. With the $K$ at each point (surface and depth), Equation \ref{eq:paris_law} was then used to simulate the fatigue crack growth rate, which was integrated over $N$ cycles to obtain the simulated PT crack state evolution, $a^{PT}$ and $c^{PT}$. The crack state in the PT is inspected every $\Delta N=5 \times 10^{4}$ cycles, which is then used to update the DT model. The simulated service life of the PT is defined as $N=7.5 \times 10^{5}$ cycles.

\subsection{Digital Twin (DT) Model Setup, No Noise} \label{sec:dt_setup}

The initial DT model was trained within $0.001 \le a \le 0.08$ inches and $0.2 \le \frac{a}{c} \le 2$ inches. This domain is defined such that the ranges of $\frac{a}{c}$, $\frac{a}{t}$, and $\frac{c}{b}$ remain within the valid bounds for $K$, per \eq \ref{eq:sif}. Pairwise $a$ and $c$ data were then obtained by sampling 10 training points and 500 testing points from uniform distributions. The nominal DT model was then defined to consist of a nominal Aluminum alloy with $C_1 = 5.52 \times 10^{-21}$ and $m_1=4$ \cite{HUDSON1969429}. Equation \ref{eq:paris_law} was then used to compute the corresponding $\frac{dc}{dN}$ for each $(a,c)$ pair to form $\mathcal{D}_{train}$ (size 10) and $\mathcal{D}_{test}$ (size 500), \ie representing a nominal prior dataset. Effectively, the differences of $C_2 < C_1$ and $m_2 < m_1$ result in PT crack growth that is significantly slower than the nominal case. 

Obtaining the derivatives of $\frac{dc}{dN}$ with respect to $a$ and $c$, to be included in $\mathcal{D}_{train}$ (see Section \ref{sec:dt_workflow}), could be obtained using numerical differentiation of the existing $(a, c, \frac{dc}{dN})$ data. This approach would likely be most representative of in-practice case, wherein the $(a, c, \frac{dc}{dN})$ data would be measured, while higher-order derivatives would likely not be possible to measure directly. However, numerical differentiation would introduce (well understood) error into this process and potentially obfuscate the desired assessment of the algorithm efficacy. Consequently, in this first no-noise study, the derivatives are obtained analytically by taking all derivatives of Equation \ref{eq:paris_law} with respect to $a$ and $c$. The effect of noise is introduced later, at the end of Section \ref{sec:DT_results}.

Using the nominal datasets, a Cholesky factorization of the GP model (no sparsification, see Section \ref{sec:math}), and a dynamic sparse Cholesky of the GP model with an initial $\rho$ of 20 was trained, providing a baseline comparison for the sparse Cholesky GP's performance. For all DT models (whether sparsified or not), a small value of jitter is added along the diagonal to improve the conditioning of the matrix. As established in Lemma~\ref{lem:B1}.1, although the derivative-informed covariance matrix remains symmetric positive-definite, the inclusion of derivative observations can adversely affect its conditioning. This motivates the need for careful tuning of the nugget term and kernel hyperparameters in all subsequent experiments. Please note that special attention must be given in optimizing the regularization term (e.g., nugget term) and the length scale parameters. This behavior is consistent with Lemma 3.0.2, which characterizes the condition number of the derivative-informed covariance matrix and its dependence on the kernel length scale and derivative order. Once trained, we use the DT model to predict the crack growth rate of the crack in the PT. At every $i$ (inspection) step, the sparse Cholesky GP model is either re-trained or dynamically updated depending on the re-training criteria in \alg \ref{alg:dynamic_GP}, and the SU-Approach1 for the dynamic update is used due to the improved prediction accuracy observed in the numerical experiments. 

Finally, the objective of the DT model updating is to provide individualized predictions of the PT, which implies evolving away from the initial nominal DT model, as necessary. To quantitatively assess this objective, we complete a parallel study in which the initialized Cholesky factorized GP (without sparsification) model (DT) remains fixed during the simulated service life (\ie the initial DT model is not updated with new crack state observations). Then, during prognosis steps, DT model predictions (prognosis) are made from the observed crack state ($a^{PT}$ and $c^{PT}$). In doing so, we are able to report on efficacy of the DT model updating and retraining, specifically, while keeping all other variables fixed.

\subsection{Results} \label{sec:DT_results}
Figure \ref{fig:results_DT_1} shows the predicted crack size, $c$, (left column) and the relative percent difference, $\eta$, between the predicted (DT) and actual (PT) crack growth rates (right column) given by:

\begin{equation}
    \eta=\left|\frac{\frac{dc^{PT}}{dN}-\frac{dc^{DT}}{dN}}{\frac{dc^{PT}}{dN}}\right|\cdot 100. 
\end{equation}

Each plot in Figure \ref{fig:results_DT_1} illustrates results for the DT model trained with orders of derivatives ranging from zeroth to fourth. The yellow and blue points (and lines) indicate the crack size observed in the initial nominal DT model and in the PT inspections, respectively. The vertical dotted lines represent the inspection steps, $i$, where the second vertical dashed line corresponds to the first inspection and update point, $i = 1$. At the first vertical dashed line corresponding to $i = 0$ (at $N = 0$), quality control inspection data could be acquired after manufacturing, but before the PT service life. At this point, crack state data ($a^{PT}$ and $c^{PT}$) could be obtained, but PT-specific crack size data would not be available until $i = 1$. Consequently, since the DT model has not yet been updated to account for any specific PT at $i = 0$, the corresponding initial DT model prediction will be that of a nominal crack size (yellow points and lines) until the first update, $i = 1$. Training of the initial nominal model benefited significantly from including derivatives in the training data, which reduced the DT model error from more than 14\% to less than 2\% for the DT model with $1^{st}$-order derivatives and higher as shown in Figure \ref{fig:results_DT_1}(c).

Figure \ref{fig:results_DT_1}(a) plots the results of the baseline case in which the initial Cholesky factorization GP (no sparsification) model was not updated throughout the DT model service life. In this baseline case, it was expected that the DT model will track the nominal (initial) data and not evolve towards the PT data. As expected, plotting $\eta$ over the service life illustrates no improvement and eventual divergence in accuracy with respect to the PT. In other words, the initial nominal model becomes increasingly inaccurate as the PT service life progresses. Additionally, DT models that were trained using at least first-order derivatives more accurately model the nominal crack size. As shown in the left column of Figure \ref{fig:results_DT_1}(a), there is a significant under-prediction of the fatigue crack size for the DT model trained without derivative data.

When the DT model is updated at each $i$ step, as in Figures \ref{fig:results_DT_1}(b) and \ref{fig:results_DT_1}(c), the prediction from the DT model becomes increasingly accurate as the PT service life progresses. This is evident by comparing $\eta$ values along the right side of Figure \ref{fig:results_DT_1}. For the baseline case of Cholesky factorization GP without sparsification, Figure \ref{fig:results_DT_1}(b), $\eta$ at $i=1$ was 7.20\%, 4.90\%, 3.10\%, 1.80\%, and 0.82\% when trained with $0^{th}$-order, $1^{st}$-order, $2^{nd}$-order, $3^{rd}$-order, and $4^{th}$-order, respectively. For the case of the dynamic sparse Cholesky GP model, as in Figure \ref{fig:results_DT_1}(c), the updates gave the $\eta$ accuracy at $i=1$ of 13.40\%, 9.60\%, 6.00\%, 4.10\%, and 2.33\% for $0^{th}$-order, $1^{st}$-order, $2^{nd}$-order, $3^{rd}$-order, and $4^{th}$-order, respectively. The clear advantage of our dynamic sparse Cholesky GP model lies in the relatively cheap computational cost of $\mathcal{O}(Ms^2)$ as described in Appendix B.2 (compared to the $\mathcal{O}(N^{3})$ computational cost of retraining Cholesky factorization GP without sparsification), while giving $\eta$ accuracy errors that are similar to the case of Cholesky factorization GP without sparsification.

For both the Cholesky factorized GP (without sparsification) with re-training, \ref{fig:results_DT_1}(b) and the dynamic sparse Cholesky GP model, Figure \ref{fig:results_DT_1}(c), the $\eta$ values decrease significantly with fatigue cycles. There is a clear trend: employing higher-order derivatives achieves better accuracy than lower-order derivatives. This trend aligns with Lemma~\ref{lem:B2}.2, which predicts monotonic reduction in posterior uncertainty as additional derivative information is incorporated. For the $\eta$ plot of Figure \ref{fig:results_DT_1}(b), at inspection point, $i=9$ at fatigue cycle $4.5 \times 10^5$, the $\eta$ accuracies were 1.80\%, 1.20\%, 0.80\%, 0.40\%, and 0.20\% for $0^{th}$-order, $1^{st}$-order, $2^{nd}$-order, $3^{rd}$-order, and $4^{th}$-order, respectively.  For the $\eta$ plot of Figure \ref{fig:results_DT_1}(c), at inspection point, $i=9$ at fatigue cycle $4.5 \times 10^5$, the $\eta$ accuracies were 2.10\%, 1.40\%, 0.80\%, 0.50\%, and 0.49\% for $0^{th}$-order, $1^{st}$-order, $2^{nd}$-order, $3^{rd}$-order, and $4^{th}$-order, respectively. This clearly demonstrates the effectiveness of including higher-order derivatives in reducing prediction errors. The observed reduction in prediction error with increasing derivative order is consistent with Lemma~\ref{lem:B2}.2, which shows that incorporating higher-order derivative observations reduces the posterior variance of the GP model. In addition to including higher-order derivatives, the dynamic sparse Cholesky GP method (from Figure \ref{fig:results_DT_1}(c)) gave comparable results to the Cholesky factorized GP model (without sparsification) (Figure \ref{fig:results_DT_1}(b)), but with a computationally cheaper complexity of $\mathcal{O}(Ms^2)$ (instead of the $\mathcal{O}(N^{3})$ computational cost of retraining Cholesky factorized GP model without sparsification, or the $\mathcal{O}(Ns^2)$ computational cost of sparse Cholesky GP). The memory cost of dynamic sparse Cholesky GP is $\mathcal{O}(N s)$ (compared to the memory cost of $\mathcal{O}(N s)$ for sparse Cholesky GP and $\mathcal{O}(N^2)$ for retraining Cholesky factorized GP without sparsification). Note that $M \ll N$ and $s$ denotes the average number of nonzero entries per column in the sparse Cholesky factor. The computational cost analysis for the dynamic sparse Cholesky GP is mentioned in Section \ref{sec:dynamic_GP_for_DT} (referred from Appendix B.2). The runtime to obtain all the data (for all derivative orders) presented in Figure \ref{fig:results_DT_1}(b) for Cholesky factorization GP (without sparsification) with retraining is approximately 25 minutes on 1 CPU core of Intel XeonSP Cascadelake at the Center of High Performance Computing at the University of Utah. The runtime is approximately 3 minutes to obtain all the data (for all derivative orders) presented in Figure \ref{fig:results_DT_1}(c) for dynamic sparse Cholesky GP (with 10 inspection points) on the same hardware resources. These time estimates are done assuming that the nugget term and length scale were already optimized before clocking the wall time.

We also conducted additional experiments to quantify the effect of the dynamic model update frequency. We (roughly) doubled the update frequency (compared to Figure \ref{fig:results_DT_1}(c)), while keeping the total lifespan of the aircraft to be the same. The left column of Figure \ref{fig:results_DT_1}(d) shows the crack size w.r.t. the fatigue cycles, while the right column shows the $\eta$ accuracy w.r.t. the fatigue cycles. The $\eta$ accuracy plot of Figure \ref{fig:results_DT_1}(d) demonstrates that doubling the update frequency increased the accuracy of the predictions compared to the prediction accuracies plotted in Figure \ref{fig:results_DT_1}(c). In Figure \ref{fig:results_DT_1}(d), the $\eta$ errors at the last fatigue cycle ($N = 4.5 \times 10^5$) are 0.92\%, 0.56\%, 0.31\%, 0.14\%, and 0.048\% for $0^{th}$-order, $1^{st}$-order, $2^{nd}$-order, $3^{rd}$-order, and $4^{th}$-order, respectively. Thus, there is an approximate decrease of 60\%-70\% (on average) in the $\eta$ errors when the inspection points are doubled. The decrease in error at the last fatigue cycle for the $4^{th}$-order derivative is approximately 90\%. Figure \ref{fig:double_update_vs_single_update} shows the $\eta$ errors for $0^{th}$-order, $2^{nd}$-order, and $4^{th}$-order derivatives for both 10 inspection points (dashed lines) and 19 inspection points (solid lines). We only plotted $0^{th}$-order, $2^{nd}$-order, and $4^{th}$-order derivatives in Figure \ref{fig:double_update_vs_single_update} for plotting clarity. Clearly, the DT model accuracy improves with increased model update frequency.

Results to this point demonstrate a significant decrease in prediction error when increasing derivative order or increasing inspection frequency, as would be expected. Considering the crack length, $c$, results in Figure \ref{fig:results_DT_1} it is also seen that the true (PT) crack length exceeds that of the predicted (DT) crack length. Aircraft components that are governed by damage tolerance concepts are retired at a specified crack length that is deemed to be unsafe (risk of critical fracture is high). These results demonstrate model prediction that is not conservative: the actual crack length would be (significantly) larger than the predicted crack length. However, using the developed algorithm, both options of increasing derivative order or increasing inspection frequency demonstrate a smooth convergence to the PT behavior which mitigates the non-conservatism.

\begin{figure}[H]
\centering

% Row 1
\begin{subfigure}{0.48\textwidth}
\centering
\includegraphics[width=\linewidth]{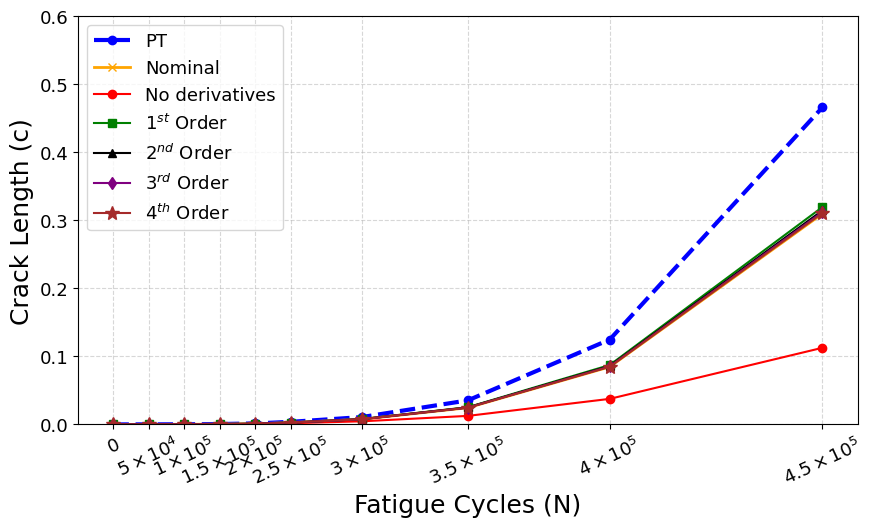}
\end{subfigure}
\hfill
\begin{subfigure}{0.48\textwidth}
\centering
\includegraphics[width=\linewidth]{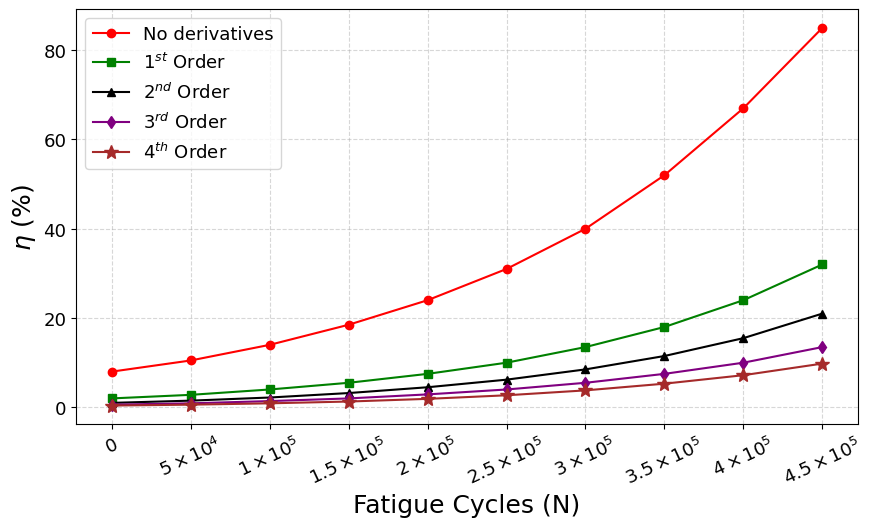}
\end{subfigure}

(a)

\vspace{0.8em}

% Row 2
\begin{subfigure}{0.48\textwidth}
\centering
\includegraphics[width=\linewidth]{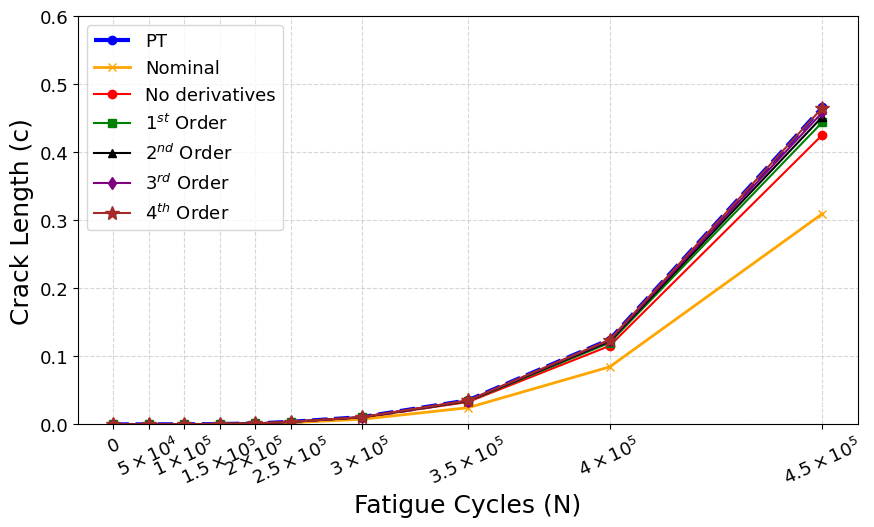}
\end{subfigure}
\hfill
\begin{subfigure}{0.48\textwidth}
\centering
\includegraphics[width=\linewidth]{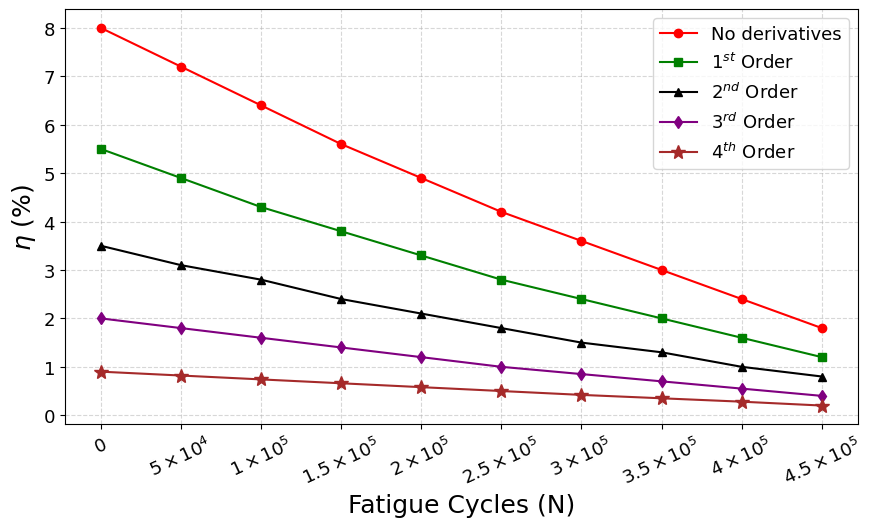}
\end{subfigure}

(b)

\vspace{0.8em}

% Row 3
\begin{subfigure}{0.48\textwidth}
\centering
\includegraphics[width=\linewidth]{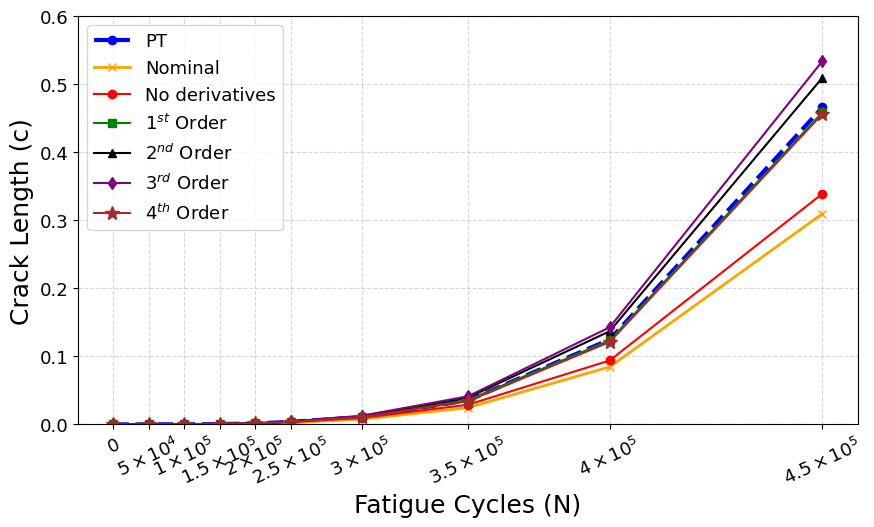}
\end{subfigure}
\hfill
\begin{subfigure}{0.48\textwidth}
\centering
\includegraphics[width=\linewidth]{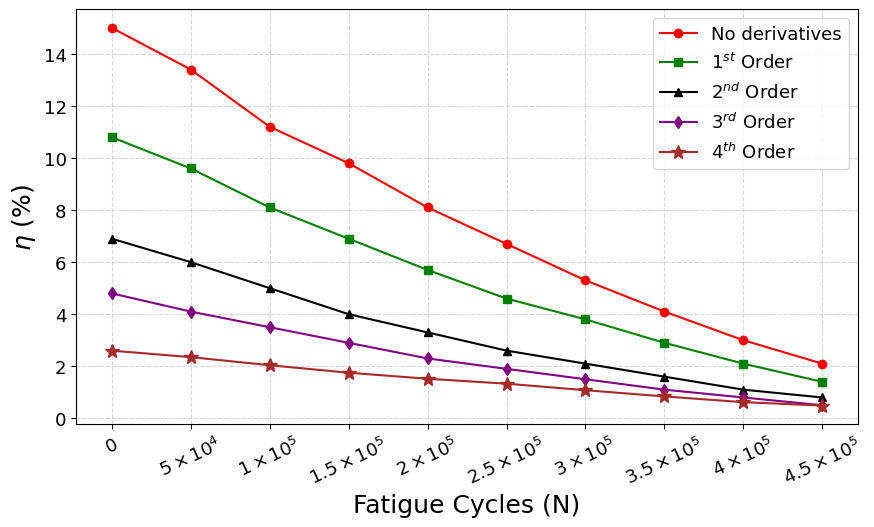}
\end{subfigure}

(c)

\vspace{0.8em}

% Row 4
\begin{subfigure}{0.48\textwidth}
\centering
\includegraphics[width=\linewidth]{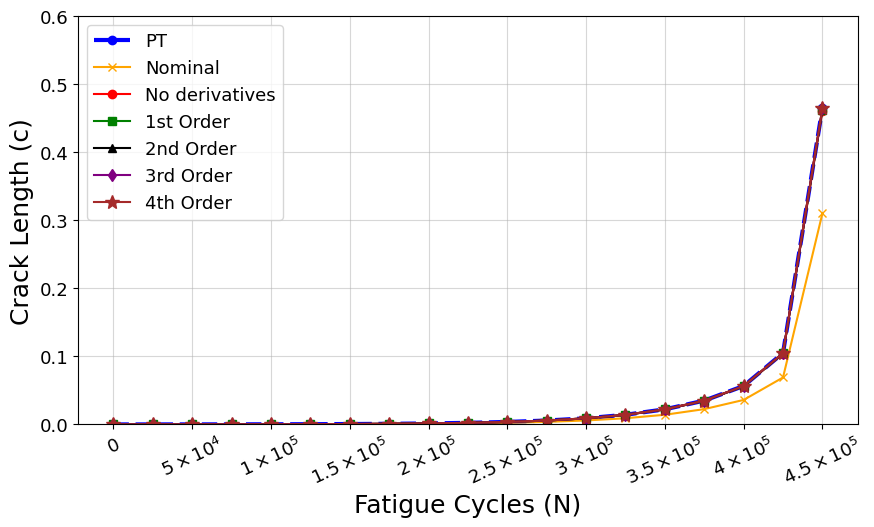}
\end{subfigure}
\hfill
\begin{subfigure}{0.48\textwidth}
\centering
\includegraphics[width=\linewidth]{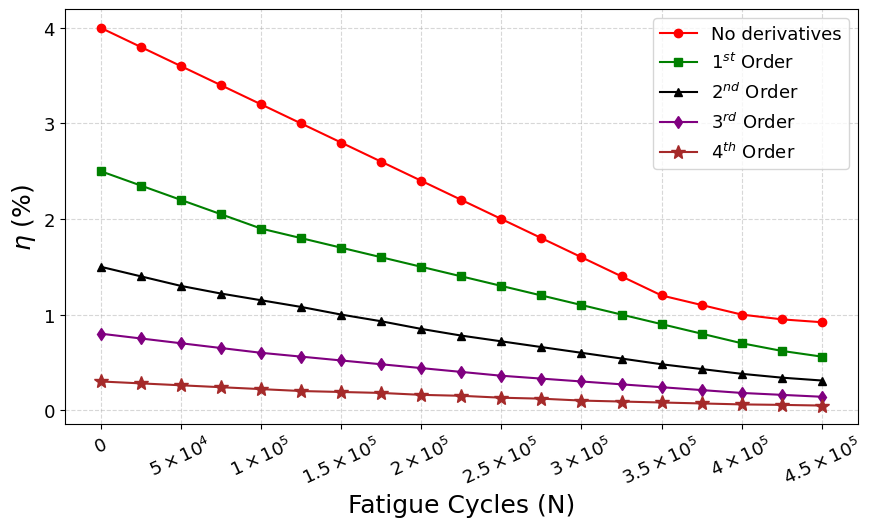}
\end{subfigure}

(d)

\caption{Left column shows the crack size predicted and the right column shows the $\eta$ accuracies w.r.t. the PT. (a) nominal DT, (b) Cholesky factorization GP model (without sparsification) with re-training at every $i$ step, and (c) dynamic sparse Cholesky GP model with initial $\rho$ of 20. (d) dynamic sparse Cholesky GP with initial $\rho$ of 20 and (roughly) double the inspection points as (c).}
\label{fig:results_DT_1}

\end{figure}

\begin{figure}[H]
    \centering
    \includegraphics[width=0.85\linewidth]{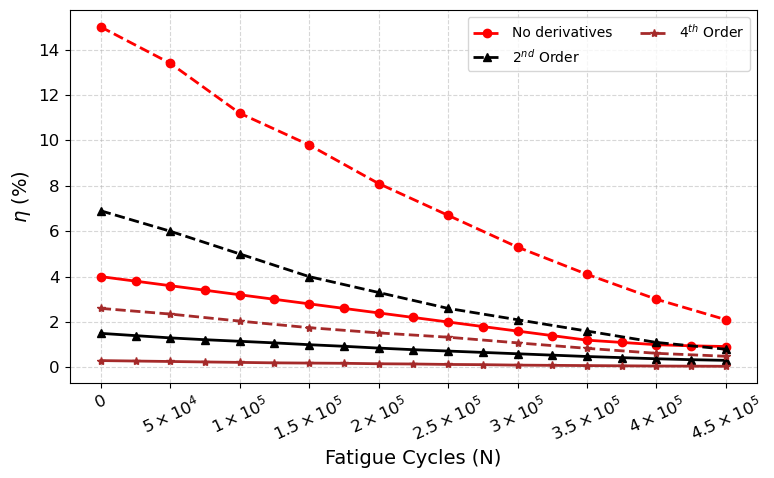}
    \caption{$\eta$ errors before and after (roughly) doubling the update frequencies for dynamic sparse Cholesky GP. Dashed lines correspond to 10 inspection points. Solid lines represent to 19 inspection points.
    }
    \label{fig:double_update_vs_single_update}
\end{figure}

In practice, there will be intrinsic and extrinsic sources of uncertainty for any DT application. Consequently, we conduct additional experiments to demonstrate the robustness of our dynamic sparse Cholesky GP algorithm within the DT application for noisy observations. Specifically, we introduce intrinsic (aleatoric) noise by treating $C$ and $m$ as random variables. This modification captures the more realistic case of inherent behavior variability, which stems from random material and manufacturing variations. To model this case, the PT fatigue crack growth data was generated using a Monte Carlo simulation based on Paris law (Eq.\ref{eq:paris_law}). $C$ was modeled using a lognormal distribution,

\begin{equation}\label{eq:lognormal_C}
	C \sim \mathrm{LogNormal}
    \left(
    \ln \left(5.25\times10^{-21}\right), 0.25^{2}
    \right),
\end{equation}

\noindent which ensures that samples are positive-valued while representing the multiplicative variability commonly observed in fatigue crack growth data. Simultaneously, $m$ was modeled using a normal distribution,

\begin{equation}\label{eq:normal_m}
    m \sim \mathcal{N}
    \left(
    3.2,
    0.15^{2}
    \right).
\end{equation}

For each Monte-Carlo simulation, independent samples of $C$ and $m$ were drawn from their respective distributions and substituted into the Paris law (Eq.\ref{eq:paris_law}). Numerical integration of the resulting crack growth model produced a corresponding crack length history over the component's service life. Repeating this process produced an ensemble of crack growth trajectories that represented the uncertainty associated with fatigue crack propagation. 

Figure \ref{fig:noisy_DT_0_order} shows the $\eta$ accuracy plot. The $\eta$ bounds at $i=0$ ($N=0$) are 4.29\% – 30.82\%, 2.93\% – 21.01\%, 1.95\% – 14.01\%, 1.17\% – 8.40\%, and 0.59\% – 4.20\% for $0^{th}$-order, $1^{st}$-order, $2^{nd}$-order, $3^{rd}$-order, and $4^{th}$-order, respectively. The bounds reduced to 0.45\% – 3.25\%, 0.31\% – 2.21\%, 0.21\% – 1.48\%, 0.12\% – 0.89\%, and 0.06\% – 0.44\% for $0^{th}$-order, $1^{st}$-order, $2^{nd}$-order, $3^{rd}$-order, and $4^{th}$-order, respectively at $i=9$ ($N=4.5 \times 10^{5}$). 

\begin{figure}[H]
    \centering
    \includegraphics[width=1.0\linewidth]{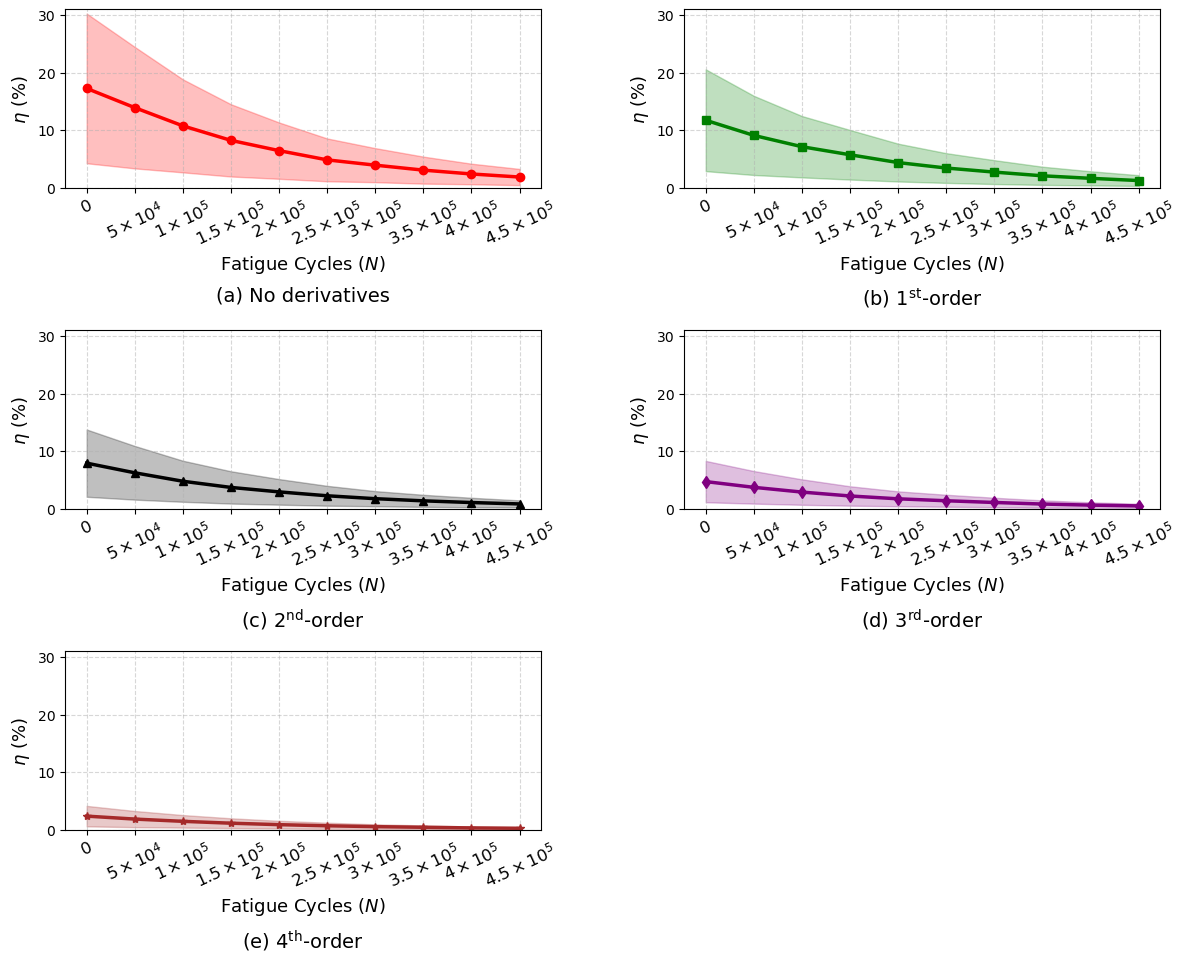}
    \caption{$\eta$ accuracy plots (for each derivative order) for DT simulations using $C$ and $m$ drawn from
    $C \sim \mathrm{LogNormal}\!\left(\ln\!\left(5.25\times10^{-21}\right),\,0.25^{2}\right)$
    and
    $m \sim \mathcal{N}\!\left(3.2,\,0.15^{2}\right)$, respectively.}
    \label{fig:noisy_DT_0_order}
\end{figure}

Additionally, we conducted experiments to demonstrate the effectiveness of our dynamic sparse Cholesky GP method under partial derivatives observations. Figure \ref{fig:partial_observations_DT} shows the $\eta$ accuracy plot for the results of the experiments when a fixed 25\%, 50\%, 50\%, 50\%, and 50\% of the $0^{th}$-order, $1^{st}$-order, $2^{nd}$-order, $3^{rd}$-order, and $4^{th}$-order derivatives are missing, respectively. The figure demonstrates that the $\eta$ error decreases with increasing fatigue cycles and higher-order derivatives have a lower $\eta$ error than the lower-order derivatives. 

\begin{figure}[H]
    \centering
    \includegraphics[width=0.85\linewidth]{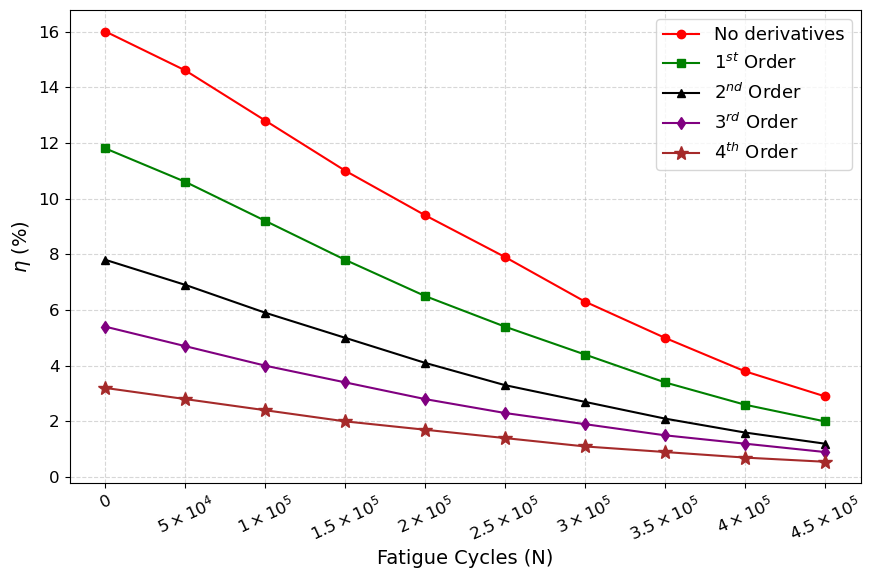}
    \caption{$\eta$ accuracy plot under partial derivatives observations.}
    \label{fig:partial_observations_DT}
\end{figure}

\section{Summary and Conclusions}
\label{sec:summary}
In this work, we demonstrate an end-to-end digital twin (DT) framework to model fatigue crack growth in an aerospace vehicle. Our contribution is an engineered system that acts as a DT demonstrator, combining and extending derivative-informed surrogate modeling, dynamics updating, and modified sparse linear algebra solution technologies. 
We present a dynamically-updated sparse Cholesky algorithm to utilize the updated data and corresponding derivatives from a PT. The results of this study are summarized below:  
\begin{itemize}
    \item Numerical experiments on GPs with derivatives showed a significant reduction in prediction error. For a 3D Griewank function with 27 points, GP models trained with $4^{th}$-order derivatives showed prediction error on the order of $10^{-13}$ when compared to $10^{-3}$ for GP models trained without derivatives. Although incorporating derivatives improves the model prediction accuracy, it comes at the expense of an increased size of the covariance matrix.  
    \item We leverage a sparse Cholesky algorithm that utilizes maximum-minimum ordering and an aggregated sparsity set. This algorithm is then generalized to incorporate derivatives in the training data. The numerical experiments showed that increasing the order of derivative improved the prediction accuracy of the sparse Cholesky factor of the precision model. For sufficiently larger sparsification factor $\rho$, the sparse Cholesky factor of the precision matrix has shown similar prediction accuracy to that of the exact GP. 
    \item We develop and present two different dynamic update algorithms, which enable new data to be added to the sparse Cholesky factor of the precision matrix without requiring complete retraining. Whenever new information is available, it is added to the dynamic supernode and only the Cholesky factors of the dynamic supernode are reevaluated; remaining factors are reused. Such a dynamic update offers a significant computational advantage as it eliminates the need for full matrix evaluation. Numerical experiments showed that the prediction accuracy of the sparse Cholesky-based GP model improved when new data were added to the sparse Cholesky factor of the precision matrix. 
    \item Finally, we apply the developed derivative-informed dynamic sparse Cholesky algorithm to a fatigue crack growth DT problem. Similar to the simpler numerical experiments, incorporating derivatives was observed to significantly increase the DT model prediction accuracy. Additionally, the dynamic update of the sparse Cholesky factor of the precision matrix (DT model) throughout the simulated service life demonstrated the ability to individualize initially nominal predictions to that of a specific PT. Without such updates, predictions from the initial nominal DT model diverged from the PT behavior, while increasing the DT model update frequency continually improved the DT model accuracy. 
\end{itemize}

We conclude by provide a few application-oriented observations concerning our DT engineered system solution that might be of help to fellow practitioners. Some of these observations are well-known within their subarea (e.g., GP modeling or computational linear algebra), but may not rise to the forefront when debugging a DT system.  First, consistent with what was observed in \cite{LogakannanABXZKMH26}, one must be mindful of the accuracy of the derivatives provided by the PT -- in particular, whether the error and/or noise associated with the derivatives is different than the primary field variables. Second, the conditioning of the linear system generated when using derivative information requires focused attention, with a regularization term (e.g., nugget term) requiring optimization being often being required. Third, special attention should be paid not only to the units, but also to the order of magnitude, of the primary variables and their derivative quantities.  When combined with the earlier conditioning comment, one should be mindful that extensive (and expensive) length scale and nugget parameter optimization may be necessarily.  Lastly, the frequency of inspections should be determined based upon not only computational costs and DT constraints but also based on the inherent time scales of critical events in the PT.  

 The algorithm formulation assumes independent noise structure for increasing derivative orders. In practice, this is not always the case, e.g., numerical differentiation might be used, which would enforce a clear dependence. Consequently, a limitation of the presented algorithm is that independent physical attributes should be directly measured by independent sensor types (e.g., measuring displacement, velocity, and acceleration directly). Nevertheless, the application study (aircraft structures) with intrinsic noise sources demonstrates a robustness to noise.  For future work, a study on the effect of dependent noise structures across derivative orders, with algorithm extensions, is recommended.

\section*{Declaration of competing interest}
The authors declare that they have no known competing financial interests or personal relationships that could have appeared to influence the work reported in this paper.

%\section*{Acknowledgements}
%This work has been supported by the AFOSR MURI grant FA9550-20-1-0358 and the NSF DMS 2529112. 
%
\bibliographystyle{unsrtnat}
\bibliography{ref}

%% The Appendices part is started with the command \appendix;
%% appendix sections are then done as normal sections
\clearpage
\appendix

\section*{\MakeUppercase{Appendix A. Ordering Algorithms}}
\label{appA}
\setcounter{algorithm}{0}
\setcounter{figure}{0}
\renewcommand{\thealgorithm}{A\arabic{algorithm}}

\noindent \textbf{A.1.} \textbf{\textit{Additional Details Regarding the Ordering Algorithms Discussed in Section 3}}

This section provides details on the ``point-wise ordering algorithm 2", the ``measurement-wise ordering algorithm 1", and the ``measurement-wise ordering algorithm 2". Additionally, a detailed comparison between the predictive performance of these three methods is provided in contrast to the predictive performance of ``point-wise ordering algorithm 1" presented in Section 3 of this work. As a note to the reader: the work of \cite{chen_sparse_2024, schafer_sparse_2021,chen_solving_2021,kang_correlation-based_2023} presents an ordering similar to our ``measurement-wise ordering algorithm 1".

In measurement-wise ordering algorithm 1, the derivative measurements are grouped separately. Therefore, in measurement-wise ordering algorithm 1, all the $0^{\text{th}}$ order derivatives are grouped together, followed by all the $1^{\text{st}}$ order derivatives, then all the $2^{\text{nd}}$ order derivatives are grouped together, followed by the $3^{\text{rd}}$ order derivatives, and finally all the $4^{\text{th}}$ order derivatives are grouped together. For better understanding, we illustrate the structure of the covariance matrix that has function value $f(x)$ and its corresponding first-order derivative measurement $\nabla f(x)$ in \fig \ref{fig:K_with_der_measurement_1}. The plot shows the structure of the covariance matrix when all the derivative-free measurements, $f$, are ordered first, followed by $\nabla f$, i.e., $\mathbf{F} =[f^{\mathbf{P}(1):\mathbf{P}(N)},\nabla f^{\mathbf{P}(1):\mathbf{P}(N)}]$. For Cholesky factorized GP (no sparsification), the ordering should not have any effect on the distribution, as this method simply permutes the same elements within the matrix.

\begin{figure}[h!]
	\centering
        \renewcommand{\thefigure}{A.1.1}
	\includegraphics[width=0.4\linewidth]{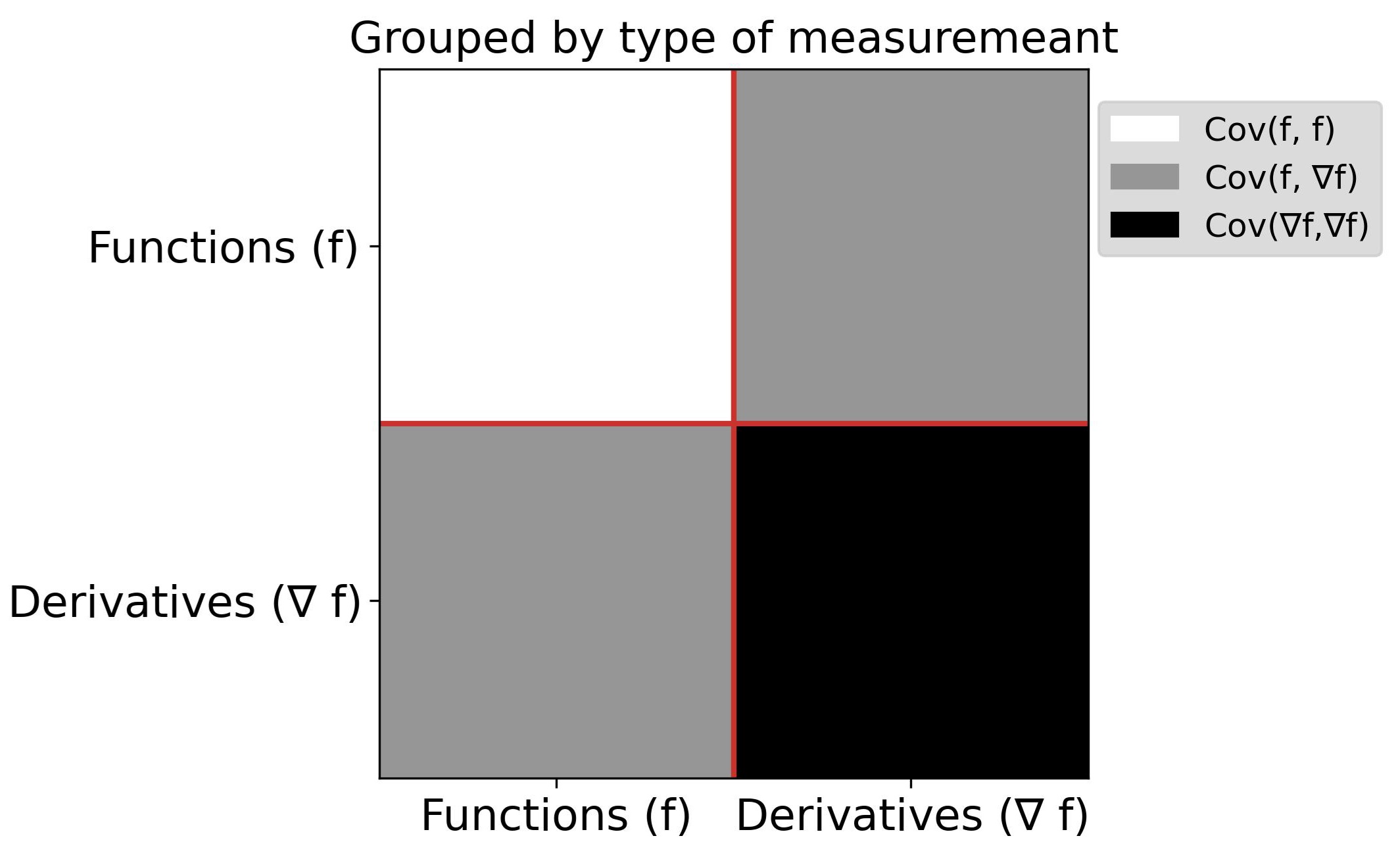}
	\caption{The figure illustrates the measurement-wise ordering algorithm 1 of incorporating derivative measurement in the formation of the $\mathbf{K}_{der}$ matrix. The derivative measurements are placed after the functional observation, following the same ordering. }
	\label{fig:K_with_der_measurement_1}
\end{figure}

For this method, we obtain the initial ordering $\mathbf{P}$ using \eq \ref{eq:P_mmd} without any derivative measurements, and then we extend $\mathbf{P}$ to incorporate derivative measurements to obtain $\mathbf{P}^{d}$. A subscript $me$–1 is added to $\mathbf{P}^{d}$ to refer to the ordering grouped by the measurement-wise ordering algorithm 1. The algorithm to obtain $\mathbf{P}^{d}_{me-1}$ is shown in Algorithm \ref{alg:p_meas}.

\begin{center}
\begin{minipage}{0.46\textwidth}
\begin{algorithm}[H]
    \caption{Constructing the $\mathbf{P}^{d}_{me-1}$ array}
    \label{alg:p_meas}
    \begin{algorithmic}[1]
        \State \textbf{Input:} $\mathbf{P}$ \text{ from MMD ordering}
        \State \textbf{Output:} $\mathbf{P}^{d}_{me-1}$
        \Statex
        \State $td \gets \lfloor N_d / N \rfloor$
        \For{$b \gets 1$ to $td$}
            \State \text{offset} $\gets b \cdot N$
            \For{$k \gets 1$ to $N$}
                \State $\mathbf{P}^{d}_{me-1}[\text{offset} + k] \gets \mathbf{P}[k] + \text{offset}$
            \EndFor
        \EndFor
    \end{algorithmic}
\end{algorithm}
\end{minipage}
\end{center}

To obtain the $\mathbf{P}^d_{me-1}$, the functional observations are first placed as per the MMD ordering, followed by all the derivative observations, following the same ordering. In other words, each derivative measurement is ordered the same as $\mathbf{P}$, and is stacked to $\mathbf{P}^d_{me-1}$. 

$\mathcal{SN}$ are originally obtained without any derivative measurement, through the procedure described in Section \ref{sec:Sparse_GP_review}, and then they are generated to include derivative measurements in them. This is done by generating a new set of parents and children that corresponds to the indices of the derivative measurements, and they are added to the existing supernode, $\mathcal{SN}$, to obtain $\mathcal{SN}^d_{me-1}$. $\mathcal{SN}^d_{me-1}$ is a list of multiple supernodes that are used to build the sparse Cholesky factor of the precision matrix.

We perform experiments by varying $\rho$ values to compare the prediction errors between the measurement-wise ordering algorithm 1 and the point-wise ordering algorithm 1. The results are reported in \fig \ref{fig:S_GP_error_2D_rho}. For both the groupings, increasing the order of derivatives reduces the prediction error when the $\rho$ is sufficiently large, let us call it the saturation point $\rho_s$, which is dependent on the number of training points. At $\rho _s$, the sparsity of the matrix $\mathbf{U}$ reaches the lower bound, and any further increase in $\rho$ is not expected to have a significant effect on prediction.  When the number of training points is 16, the prediction error plateaus after $\rho=4$, and any further increase in $\rho$ does not result in improved prediction accuracy. When $N$ is 36 and 64, the value of $\rho _{s}$ is 5 and 8, respectively.  When the $\rho < \rho _{s}$, the prediction error increases almost linearly in log scale with a decrease in $\rho$. This is because when the matrix becomes sparse, the information of specific point measurements is lost; thus, the model is expected to have a higher prediction error. Interestingly, the increase in error is much more significant when the derivatives are included in training, as noticed by differences in slope for different orders of derivatives. When the matrix becomes increasingly sparse, we lose the information of points along with their derivatives. We know that the model prediction error is reduced significantly when derivatives are included. On the contrary, we are expected to lose accuracy significantly when some derivative information is lost in the sparse. Similar observations can be made for the results shown in \fig \ref{fig:S_GP_error_1D_rho} of the 1D Griewank function as well.

\begin{figure}[H]
	\centering
        \renewcommand{\thefigure}{A.1.2}
	\includegraphics[width=1\linewidth]{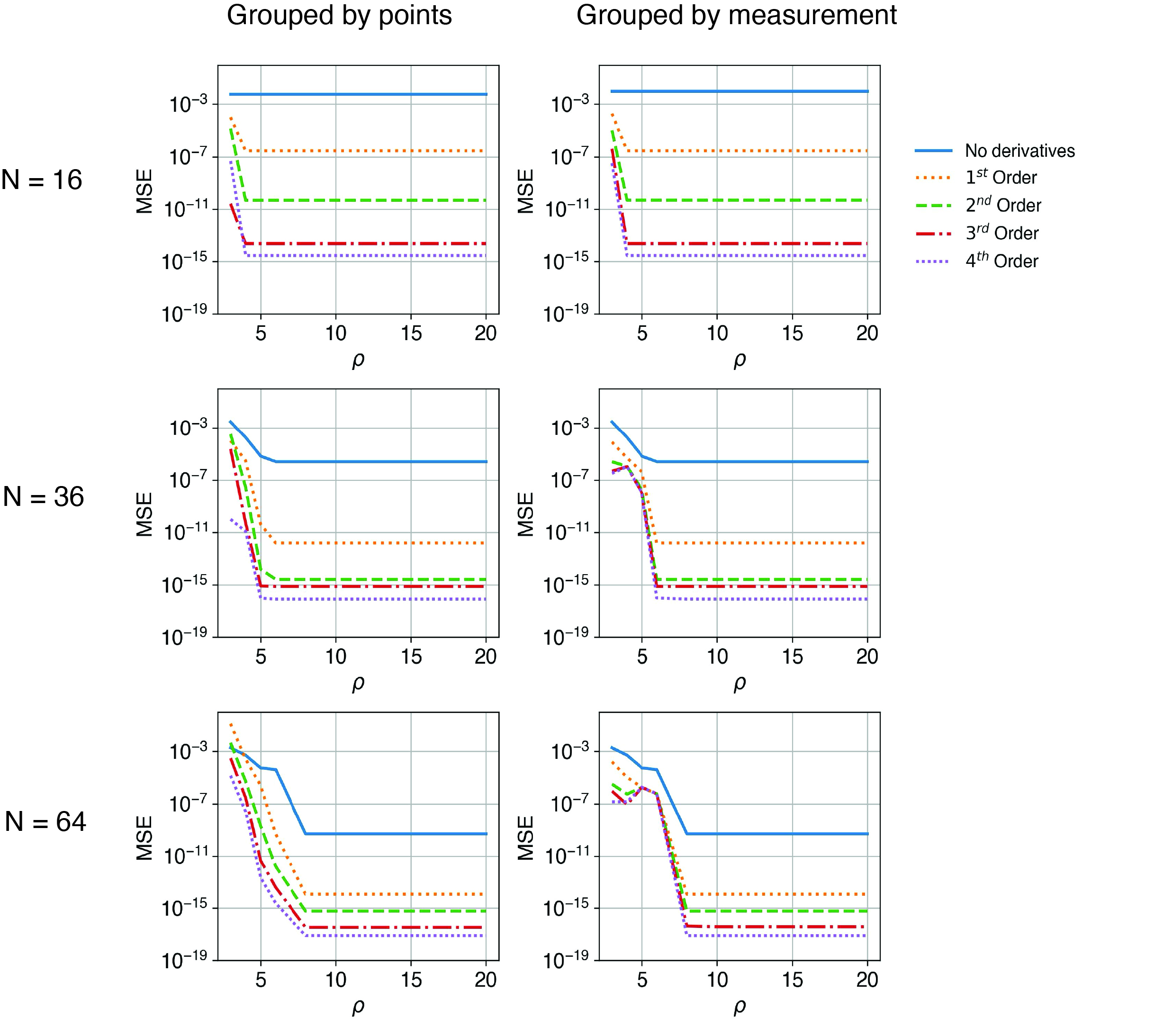}
	\caption{Results of numerical experiments 2D Griewank function from sparse Cholesky GP for different $\rho$ values and order of derivatives. Figures in the left and right columns show the results of the sparse Cholesky factor of the precision matrix when the derivatives are grouped by point-wise ordering algorithm 1 and measurement-wise ordering algorithm 1, respectively.}
	\label{fig:S_GP_error_2D_rho}
\end{figure}

\begin{figure}[H]
	\centering
        \renewcommand{\thefigure}{A.1.3}
	\includegraphics[width=1.2\linewidth]{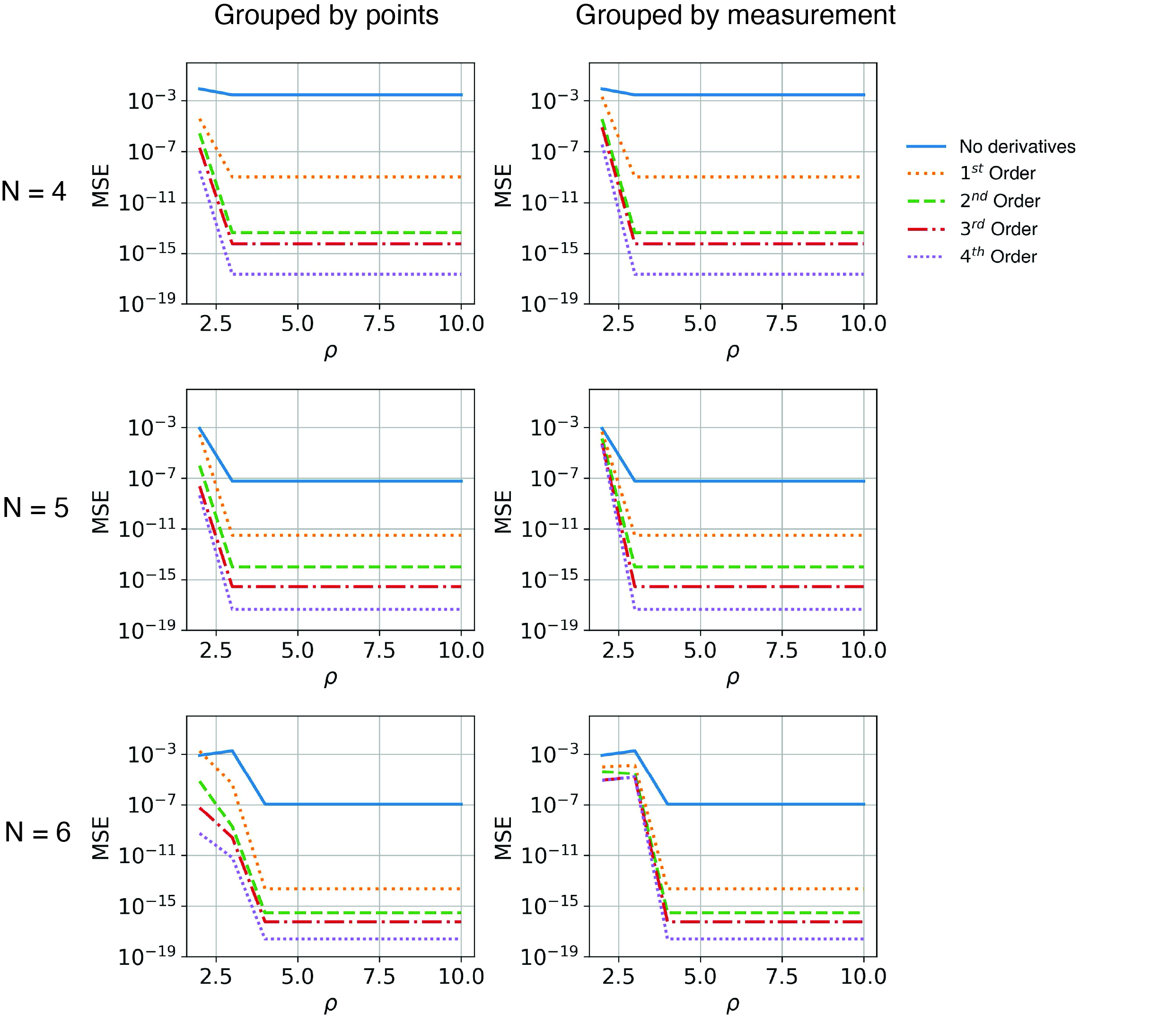}
	\caption{Results of numerical experiments 1D Griewank function from sparse Cholesky GP for different $\rho$ values and order of derivatives.}
	\label{fig:S_GP_error_1D_rho}
\end{figure}

For $\rho < \rho_{s}$, the accuracy of the model is affected by the type of grouping used to include derivative measurements utilized in building the sparse matrix. When the derivatives are grouped by measurement-wise ordering algorithm 1, the prediction error of the model is higher compared to the model when derivatives are grouped by point-wise ordering algorithm 1. For example, \fig \ref{fig:rho_comp} shows the prediction error from measurement grouped by points and measurements for $N=36$ and $\rho=5$. Note that when the derivatives are grouped by the point-wise ordering algorithm 1, the prediction error reduces with an increase in the order of derivatives; however, the reduction in error is smaller when the derivatives are grouped by the measurement-wise ordering algorithm 1. Similar observations can be made for other $N$ and $\rho$ values. Note that the trend of the error is unclear when the matrix is really sparse, for example, $\rho=3$. This suggests that there exists a lower bound of $\rho_L$ below which adding derivatives does not guarantee an improvement in accuracy.

\begin{figure}[H]
	\centering
        \renewcommand{\thefigure}{A.1.4}
	\includegraphics[width=0.8\linewidth]{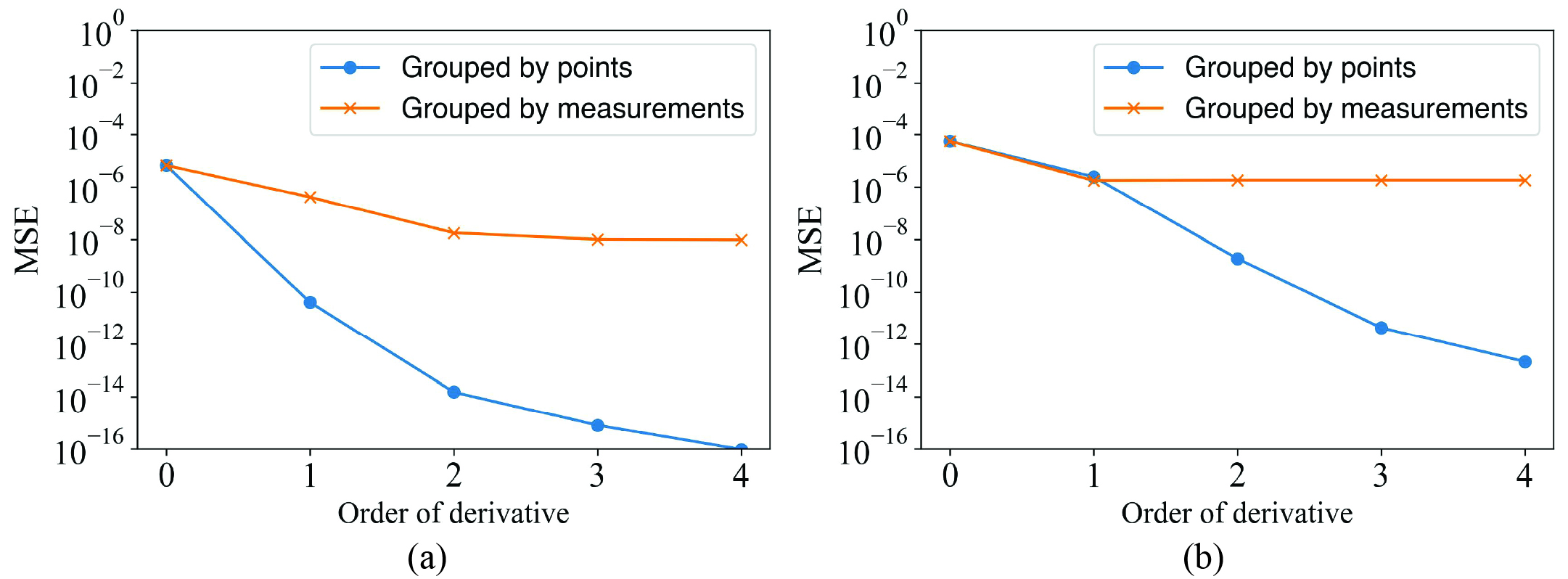}
	\caption{Predictive error of sparse Cholesky GP for point-wise ordering algorithm 1 and measurement-wise ordering algorithm 1 (a) $N=36$ and $\rho =5$, and (b) $N=64$ and $\rho =5$. }
	\label{fig:rho_comp}
\end{figure}

Now, we will turn our attention to two other ordering algorithms. We refer to them as ``point-wise ordering algorithm 2" and ``measurement-wise ordering algorithm 2". In point-wise ordering algorithm 2, we obtain the ordering $\mathbf{P}^d_{po-2}$ by including derivative measurements within Eq.~\ref{eq:P_mmd}. Therefore, point-wise ordering algorithm 2 involves including derivative observations while creating $\mathbf{P}^d_{po-2}$. In contrast, point-wise ordering algorithm 1 involved forcing all the derivative observations of a point to be grouped with that particular point. More precisely, $\mathbf{P}^d_{po-2}$ of point-wise ordering algorithm 2 is created using the following equation:

\[
\mathbf{P}^d_{po-2}(q+1)
=
\arg\max_{i \in I \setminus \{1,\ldots,q\}}
\;
\mathrm{dist}
\Big(
\{\nabla^k f(\mathbf{x}^{(i)})\}_{k=0}^{4},
\{\nabla^k f(\mathbf{x}^{(j)})\}_{j=1,\ldots,q}^{k=0,\ldots,4}
\Big)
\]

\noindent The length scale of the ordered points (for the case of point-wise ordering algorithm 2) is then given by the following:

\[
\mathbf{l}^{(i)}
=
\mathrm{dist}
\Big(
\{\nabla^k f(\mathbf{x}^{\mathbf{P}^d_{po-2}(i)})\}_{k=0}^{4},
\{\nabla^k f(\mathbf{x}^{\mathbf{P}^d_{po-2}(j)})\}_{j=1,\ldots,i-1;\,k=0,\ldots,4}
\Big).
\]

\noindent Once $\mathbf{P}^d_{po-2}$ is created, the supernodes, $\mathcal{SN}^d_{po-2}$, are obtained by the procedure described in Section~\ref{sec:Sparse_GP_review} using $\mathbf{P}^d_{po-2}$. 

For the case of measurement-wise ordering algorithm 2, we obtain $\mathbf{P}^d_{me-2}$ by including derivative measurements at the end of $\mathbf{P}$ (obtained from Eq.~\ref{eq:P_mmd}), i.e., $\mathbf{P}^d_{me-2}$ is obtained by appending derivative measurements, grouped by each derivative type (all $0^{\text{th}}$ order derivatives grouped together, all $1^{\text{st}}$ order derivatives grouped together, etc.), to $\mathbf{P}$ (obtained from Eq.~\ref{eq:P_mmd}). Once $\mathbf{P}^d_{me-2}$ is obtained, $\mathcal{SN}^d_{me-2}$ can be created using $\mathbf{P}^d_{me-2}$ using the procedure described in Section~\ref{sec:Sparse_GP_review}. Both $\mathcal{SN}^d_{po-2}$ and $\mathcal{SN}^d_{me-2}$ are lists of multiple supernodes used to build the sparse Cholesky factor of the precision matrix.

\fig \ref{fig:method_comp_four} shows a detailed comparison between the four algorithms. The figure compares the predictive performances of point-wise ordering algorithm 1, measurement-wise ordering algorithm 1, point-wise ordering algorithm 2, and measurement-wise ordering algorithm 2 with respect to the order of derivatives for varying numbers of training points, N, and varying $\rho$ values. 

\begin{figure}[H]
	\centering
        \renewcommand{\thefigure}{A.1.5}
	\includegraphics[width=1.\linewidth]{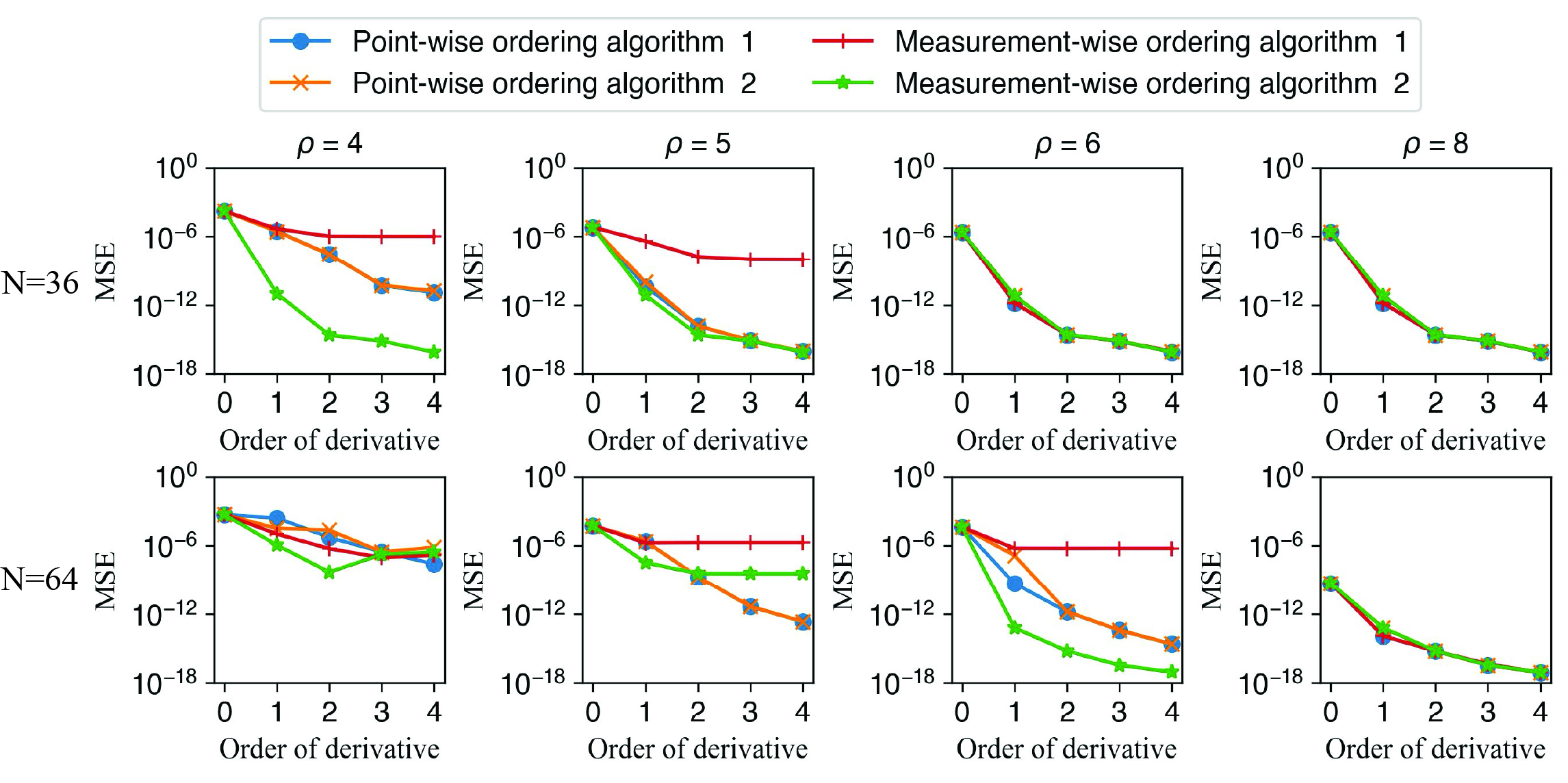}
	\caption{MSE comparison for the four different methods}
	\label{fig:method_comp_four}
\end{figure}

\vspace{1em}
\textbf{A.2.} \textbf{\textit{Additional Results}}

This appendix entails additional experimental results performed by varying $\rho$ values and the number of training points. \fig \ref{fig:S_GP_error_3D_rho} shows the results of the experiments performed by varying $\rho$ for the 3D Griewank function from the sparse Cholesky GP. The experiments are performed for point-wise ordering algorithm 1 and measurement-wise ordering algorithm 1 for a fixed number of training points, N, with varying $\rho$. As the $\rho$ values and the order of derivatives increase, the prediction error reduces for a varying number of training points, N.

\begin{figure}[H]
	\centering
        \renewcommand{\thefigure}{A.2.1}
	\includegraphics[width=0.95\linewidth]{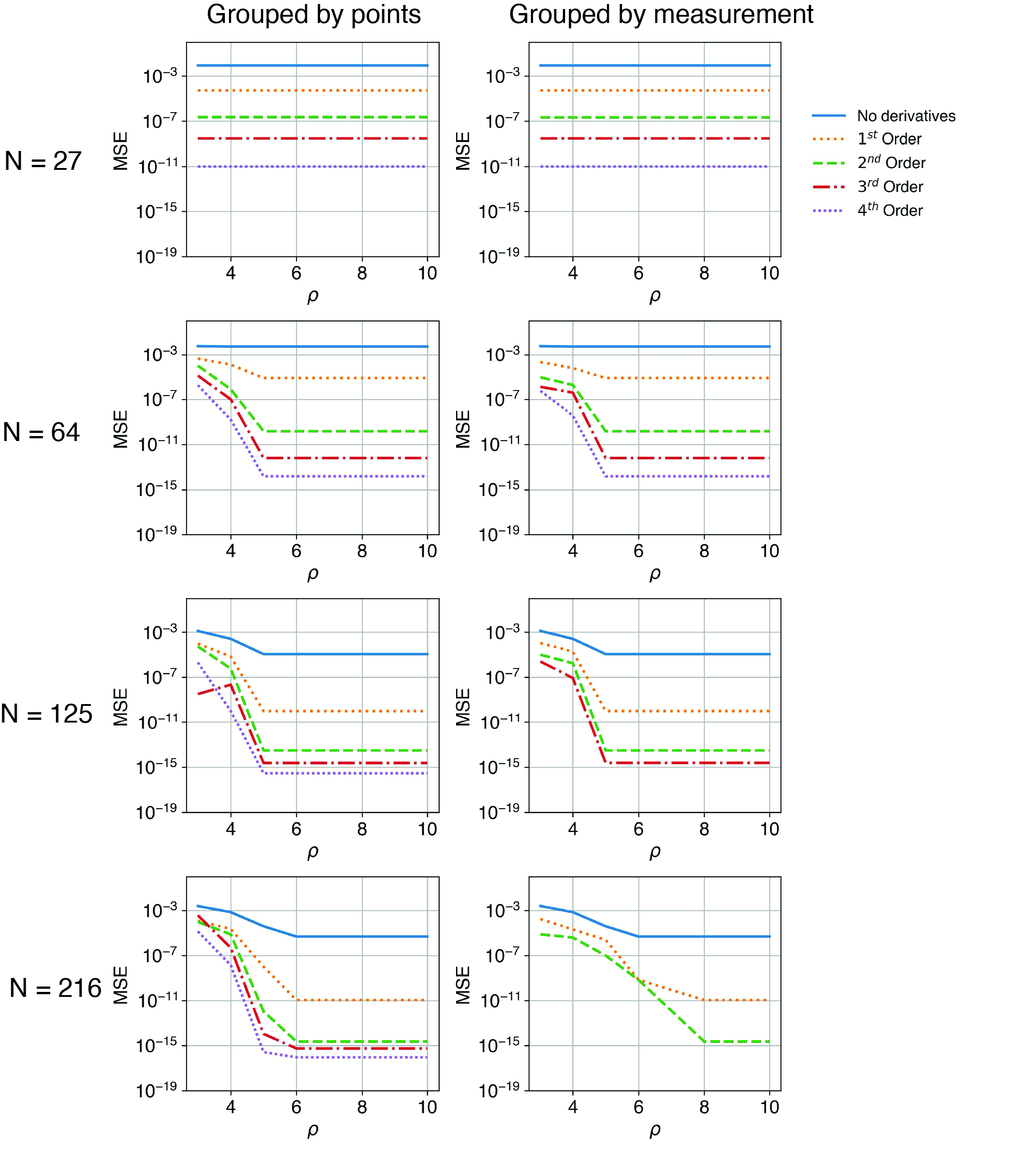}
	\caption{Results of numerical experiments 3D Griewank function from sparse Cholesky GP for different $\rho$ values and order of derivatives}
	\label{fig:S_GP_error_3D_rho}
\end{figure}

 \fig \ref{fig:S_GP_error_2D_N} shows the results of the experiments performed by varying the number of training points, N, for the 2D Griewank function from the sparse Cholesky GP. The experiments are performed for point-wise ordering algorithm 1 and measurement-wise ordering algorithm 1 for a fixed $\rho$ value and varying number of training points. As the number of training points and the order of derivatives increase, the prediction error increases until a threshold of $\rho = 8$ is attained, after which the predictive error shows a decreasing trend. In order for the prediction error to decrease with an increase in the number of training points, $\rho$ needs to satisfy the lower bound, $\rho \gtrapprox \log\left(\frac{N}{\epsilon}\right)$, mentioned in \cite{schafer_sparse_2024}. In \fig \ref{fig:S_GP_error_2D_N}, since we increase the number of training points while keeping $\rho$ fixed, we get an increase in predictive error until the minimum threshold of $\rho = 8$ is reached.

\begin{figure}[H]
	\centering
        \renewcommand{\thefigure}{A.2.2}
	\includegraphics[width=0.92\linewidth]{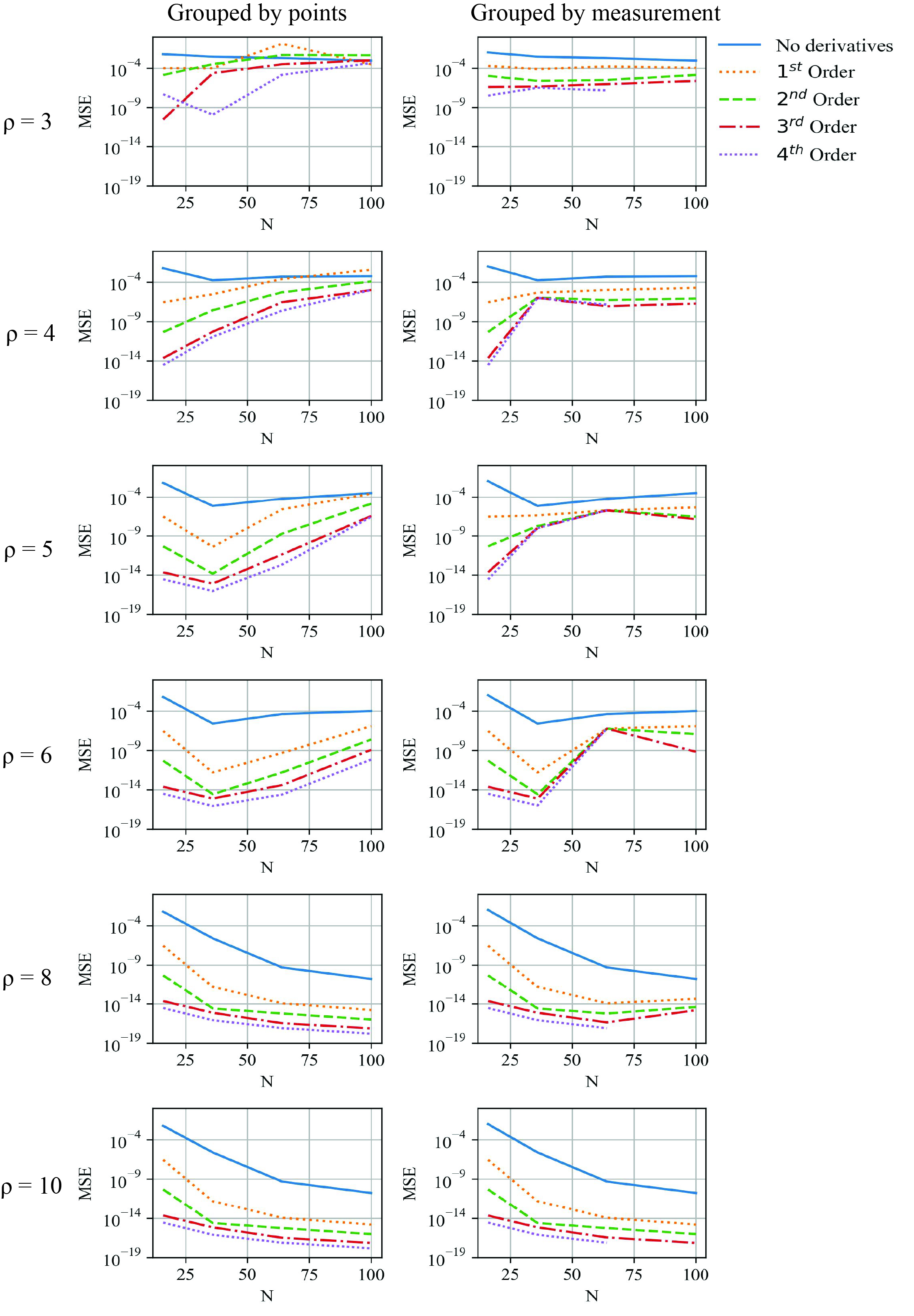}
	\caption{Results of numerical experiments 2D Griewank function from sparse Cholesky GP for different N, $\rho$, and order of derivatives}
	\label{fig:S_GP_error_2D_N}
\end{figure}

\section*{\MakeUppercase{Appendix B. Additional Readings}}
\label{appB}
\setcounter{algorithm}{0}
\setcounter{figure}{0}
\renewcommand{\thealgorithm}{B\arabic{algorithm}}

\noindent \textbf{B.1.} \textbf{\textit{Details of the Outlier Detection Algorithm Mentioned in Section 4}}

\begin{algorithm}[H]
	\caption{Outlier detection }
	\label{alg:outlier_detect}
	\begin{algorithmic}[1]
		\Procedure{IsOutlier}{$x_{new}, X_{train}$}
		\State $X_{exist} \gets X_{train}$ 
		\State Let $D_{k}$ be an empty list and $\eta_{out}$ be the percentile set for outlier detection
		\State Let
		\For{each point $\mathbf{x}_i \in X_{exist}$}
		\State Let $d_k(\mathbf{x}_i, X_{exist})$ be the distance from $\mathbf{x}_i$ to its $k$-th nearest neighbor in $X_{exist}$.
		\State Append $d_k(\mathbf{x}_i, X_{exist})$ to $D_{k}$.
		\EndFor
		\State $\tau \gets \text{Percentile}(D_{k}, \eta_{out})$ \Comment{The outlier threshold}
		\Statex
		
		\State Let $d_k(\mathbf{x}_{new}, X_{exist})$ be the distance from $\mathbf{x}_{new}$ to its $k$-th nearest neighbor in $X_{exist}$.
		\If{$d_k(\mathbf{x}_{new}, X_{exist}) > \tau$}
		\State $  \text{outlier} \gets \text{True} $
		\Else
		\State $\text{outlier} \gets \text{False}$
		\EndIf
		\State \Return outlier
		\EndProcedure
	\end{algorithmic}
\end{algorithm}

\noindent \textbf{B.2.} \textbf{\textit{Details of the Computational Cost Analysis For Section 4}}

The computational cost of the proposed dynamic update algorithms can be analyzed in two different ways. The first corresponds to the practical implementation adopted in this work, where the dynamic supernodes are explicitly re-evaluated after new measurements become available. The second refers to the ideal sparse-update complexity obtained when only the affected columns of the sparse Cholesky factor are updated. Let $N$ denote the total number of training points and let $s$ denote the average number of nonzero entries per column in the sparse Cholesky factor. A complete retraining of the sparse Cholesky GP requires $\mathcal{O}(Ns^2)$ cost with storage complexity $\mathcal{O}(Ns)$.

In SU-Approach1, all dynamic supernodes are merged into a single dynamic supernode, $\mathcal{SN}_{dyn}$. Let $M$ denote the number of points belonging to the dynamic region. Whenever a new point is inserted, the dynamic region is reordered and the Cholesky factors associated with $\mathcal{SN}_{dyn}$ are recomputed. Therefore, the practical update complexity is $\mathcal{O}(Ms^2)$. Since, $M \ll N$, the update remains significantly cheaper than complete retraining. From the perspective of sparse-update, only a small subset of the dynamic supernode is directly affected by the insertion of a new point. Let $k$ denote the number of neighboring points whose sparsity pattern changes due to the insertion. Neighbor identification requires $\mathcal{O}(k\log N)$ cost, while updating the affected Cholesky block requires $\mathcal{O}(k^2)$ cost. Consequently, the ideal sparse-update complexity is $\mathcal{O}(k^2+k\log N)$. This complexity represents the lower bound achievable if only the affected columns of the sparse Cholesky factor are updated. 

For the case of SU-Approach2, dynamic supernodes are maintained separately. Let $M_{loc}$ denote the size of the local dynamic supernode containing the newly inserted point. Since only the affected dynamic supernode is updated, the practical update complexity is $\mathcal{O}(M_{loc}s^2)$. Since $M_{loc} < M$, SU-Approach2 generally requires less computation than SU-Approach1. Similarly, if only the affected columns are updated, the sparse-update complexity becomes $\mathcal{O}(k_{loc}^{2}+k_{loc}\log N)$, where $k_{loc}$ denotes the number of affected neighbors within the local dynamic supernode.  

Thus, the computational comparisons can be summarized as: full retraining ($\mathcal{O}(Ns^2)$), SU-Approach1-practical ($\mathcal{O}(Ms^2)$), SU-Approach1-localized update ($\mathcal{O}(k^2+k\log N)$), SU-Approach2-practical ($\mathcal{O}(M_{loc}s^2)$), and SU-Approach2-localized update ($\mathcal{O}(k_{loc}^{2}+k_{loc}\log N)$). The complexities involving $M$ and $M_{loc}$ correspond to the implementation adopted in this work, where dynamic supernodes are explicitly re-evaluated. The complexities involving $k$ and $k_{loc}$ correspond to the ideal sparse-update cost obtained when only the affected columns of the sparse Cholesky factor are modified. Since, $k_{loc} \le k \ll M_{loc} < M \ll N$, the sparse-update formulation provides the theoretical lower bound on the computational cost, whereas the supernode-based formulation provides a more representative estimate of the computational effort required by the implementation.

\subsection*{\textbf{B.3. Additional Details Regarding the Interpolatory Structure of Derivative-Informed GPs}}
\label{app:interpolatory_structure_GP}
% \addcontentsline{toc}{subsection}{B.4 Additional Details Regarding the Interpolatory Structure of Derivative-Informed GPs}

In the limiting case and given sufficient smoothness, $\lim_{\sigma_f^2,\, \sigma_{\nabla f}^2, \, \cdots, \, \sigma_{\nabla f}^d \to 0} \text{GP}$, we recover the exact interpolation of the noise-free GP, i.e., 

\[
\text{GP}_{\text{noisy}} 
\xrightarrow[\sigma_f^2,\, \sigma_{\nabla f}^2,\, \ldots,\, \sigma_{\nabla^d f}^2 \to 0]{} 
\text{GP}_{\text{noise-free}}.
\]

From a computational perspective, the additive noise terms, $\sigma_f^2 \mathbf{I}$, $\sigma_{\nabla f}^2 \mathbf{I}$, $\cdots$, $\sigma_{\nabla^d f}^2 \mathbf{I}$, act similarly to Tikhonov regularization in kernel ridge regression \cite{Hoerl01021970} where the addition of derivative noise introduces a form of regularization, and correspondingly uncertainties in higher-order derivatives impose smoothness constraints on the posterior mean. This point is relevant to interpreting some of our results in Section \ref{sec:App_DT}.

To extend the formulation in \eq~\ref{eq:interpolation_eq} to the derivative-informed setting, we modify \textbf{X} and $\mathbf{y}$ in \eq~\ref{eq:interpolation_eq} to include the derivatives as follows:

\[
\mathbf{X}_{\text{der}} =
\begin{bmatrix}
\mathbf{X}^{(0)} \\[4pt]
\mathbf{X}^{(1)} \\[2pt]
\vdots \\[2pt]
\mathbf{X}^{(d)}
\end{bmatrix},
\qquad
\mathbf{y}_{\text{der}} =
\begin{bmatrix}
f(\mathbf{X}^{(0)}) \\[4pt]
\nabla f(\mathbf{X}^{(1)}) \\[2pt]
\vdots \\[2pt]
\nabla^d f(\mathbf{X}^{(d)})
\end{bmatrix},
\]
\noindent where \(\mathbf{X}^{(k)}\) denotes the set of input points at which the derivatives of order \(d\) are presented. Using this notation, the posterior mean for the derivative-informed GP setting can then be expressed as:
\[
\bar{f}(\mathbf{x}) = \mathbf{K}_{\text{der}}(\mathbf{x}^{*}, \mathbf{X}_{\text{der}})
\big[\mathbf{K}_{\text{der}}(\mathbf{X}_{\text{der}}, \mathbf{X}_{\text{der}}) + \mathbf{R} \big]^{-1}
\mathbf{y}_{\text{der}},
\]
\noindent where $\mathbf{K}_{\text{der}}$ is the covariance matrix from \eq~\ref{eq:GP_der_noisy} and $\mathbf{K}_\text{der}(\mathbf{x}^{*}, \mathbf{X}_\text{der})$ denotes the derivative-informed covariance vector between the test point, $\mathbf{x}^{*}$, and $\textbf{X}_\text{der}$. 

This formulation, which includes derivative information, preserves the interpolatory structure of the derivative-free GP model, i.e., the posterior mean \(\bar{f}(\mathbf{x})\) remains a kernel-based interpolant in the noise-free setting, and the inclusion of $\mathbf{R}$ maintains regularization. By incorporating derivative information through $\mathbf{K}_{\text{der}}$, the mean function becomes a ``curvature-aware interpolant".

% \begin{algorithm}[H]
% 	\caption{Full Retrain Procedure}
% 	\label{alg:retrain}
% 	\begin{algorithmic}[1]
% 		\Procedure{FullRetrain}{$\mathcal{D}_{train}, \mathcal{D}_{test}, {L_{best}}$,$\theta^*$}
		
% 		\State \Comment{Optimize hyperparameters ($\theta$ =($\delta$, nugget))}
% 		\State $\mathcal{M}_{cand} \gets \text{Train GP with } \theta^* \text{ and } \mathcal{D}_{train}$
% 		\State $L_{cand} \gets L(\mathcal{M}_{cand})$
		
% 		\While{$L_{cand} > {L_{best}}$
% 		\State $\theta_{new} \gets \text{randomly initialization of $\delta$, nugget}$
% 		\State \Comment{Re-run optimization with $\theta _{new}$}
% 		\State $\mathcal{M}_{cand} \gets \text{Train GP with } \theta_{new} \text{ and } \mathcal{D}_{train}$
% 		\State $L_{cand} \gets L(\mathcal{M}_{cand})$
% 		\EndWhile
		
% 		\State $\theta_{new}^* \gets \arg\min_{\theta} L(\mathcal{M}_{cand}(\theta, \mathcal{D}_{train}))$
% 		\State $\mathcal{M} \gets \mathcal{M}_{cand}$ \Comment{Accept the re-trained model}
% 		\State $L_{best} \gets L_{cand}$
% 		\EndProcedure
% 	\end{algorithmic}
% \end{algorithm}

%% Preamble

\newtheorem{theorem}{Theorem}[section]
\newtheorem{lemma}[theorem]{Lemma}
\newtheorem{proposition}[theorem]{Proposition}

\appendix
\renewcommand{\thesection}{C}
\section*{APPENDIX C. PROOFS}
\addcontentsline{toc}{section}{Appendix C}

\noindent \textbf{C.1.} \textbf{\textit{Proof of Lemma 2.4.1.}} 

By construction, the entries of $\mathbf{K}_{\text{der}}$ are
\[
\mathbf{K}_{\text{der}^{i, j}} \;=\; k^{(n_i,n_j)}(\mathbf{x}^{(i)}, \mathbf{x}^{(j)}),
\]
where $k^{(n_i,n_j)}$ is the mixed partial derivative of the kernel, of order $n_i$ in row $i$ and order $n_j$ in column $j$.

\noindent Since $k$ is positive definite and differentiable, all derivative blocks $k^{(n_i,n_j)}$ satisfy
\[
\sum_{i,j} c_i c_j \, k^{(n_i,n_j)}(\mathbf{x}^{(i)}, \mathbf{x}^{(j)}) \;\ge\; 0 \qquad \forall\, c \in \mathbb{R}^N.
\]

\noindent $\mathbf{K}_{\text{der}}$ is symmetric because mixed derivatives commute for smooth kernels:
\[
\frac{\partial^{n_i+n_j} k}{\partial (\mathbf{x}^{(i)})^{n_i}\,\partial (\mathbf{x}^{(j)})^{n_j}}
\;=\;
\frac{\partial^{n_j+n_i} k}{\partial (\mathbf{x}^{(j)})^{n_j}\,\partial (\mathbf{x}^{(i)})^{n_i}}.
\]

\noindent Therefore, $\mathbf{K}_{der}$ is symmetric and positive definite.
\hfill $\Box$

\vspace{1em}
\noindent \textbf{C.2.} \textbf{\textit{Proof of Lemma 2.4.2.}} 

The GP posterior variance at a test point $\mathbf{x}^*$ is given by

\[
\sigma_d^2(\mathbf{x}^*) =
k(\mathbf{x}^*, \mathbf{x}^*)
- \mathbf{K}_{\mathrm{der}, *, d}\,
  \mathbf{K}_{\mathrm{der}, d}^{-1}\,
  \mathbf{K}_{\mathrm{der}, *, d}^\top.
\]

\noindent where $\mathbf{K}_{\mathrm{der}}^{d}$ is the covariance including derivatives up to order $d$, and $\mathbf{K}_{\mathrm{der}, *, d}$ is the covariance between testing and training points. Now, let $\mathbf{K}_{d-1}$ denote the covariance matrix including derivatives up to order $d-1$, and $\mathbf{K}_{\mathrm{der}, *, d-1}$ be the corresponding cross-covariance with $\mathbf{x}^*$. By construction, adding derivatives of order $d$ adds rows and columns to $\mathbf{K}_{d-1}$ to form $\mathbf{K}_{d}$. These additional blocks correspond to the covariance between the new derivative observations and all the previous observations.

\noindent Thus, $\mathbf{K}_{d}$ can be written as a block matrix:
\[
\mathbf{K}_{d} =
\begin{bmatrix}
\mathbf{K}_{d-1} & \mathbf{B} \\
\mathbf{B}^\top & \mathbf{C}
\end{bmatrix},
\]
where $\mathbf{B}$ is the covariance between the order $d$ derivatives and the existing observations, and $\mathbf{C}$ is the covariance between the order $d$ derivatives. 

\noindent Using Lemma 2.4.1, $\mathbf{C}$ is semi-definite. 
Similarly, the cross-covariance $\mathbf{K}_{d}$ can be written as

\[
\mathbf{K}_{*,d} =
\begin{bmatrix}
\mathbf{K}_{*,d-1} & \mathbf{D}
\end{bmatrix}.
\]

\noindent where $\mathbf{D}$ is the covariance between the test point and the order $d$ derivatives.

\noindent Furthermore, the posterior variance can be written using the Schur complement as follows:
\[
\sigma_d^2(\mathbf{x}^*) = k(\mathbf{x}^*, \mathbf{x}^*) - 
\begin{bmatrix} \mathbf{K}_{*,d-1} & \mathbf{D} \end{bmatrix}
\begin{bmatrix} \mathbf{K}_{d-1} & \mathbf{B} \\ \mathbf{B}^\top & \mathbf{C} \end{bmatrix}^{-1}
\begin{bmatrix} \mathbf{K}_{*,d-1} \\ \mathbf{D} \end{bmatrix}.
\tag{C2.1}
\]

\noindent By the property of Schur complements for a positive semi-definite block $\mathbf{C}$:
\[
\begin{bmatrix} \mathbf{K}_{d-1} & \mathbf{B} \\ \mathbf{B}^\top & \mathbf{C} \end{bmatrix} \succeq \mathbf{K}_{d-1},
\]

\noindent and therefore, the following inequality holds:
\[
\mathbf{K}_{*,d} \mathbf{K}_d^{-1} \mathbf{K}_{*,d}^\top \ge \mathbf{K}_{*,d-1} \mathbf{K}_{d-1}^{-1} \mathbf{K}_{*,d-1}^\top.
\tag{C2.2}
\]

\noindent Using Eq. (C2.2) inequality in the posterior variance formula in Eq. (C2.1) leads to the following expression:
\[
\sigma_d^2(\mathbf{x}^*) = k(\mathbf{x}^*, \mathbf{x}^*) - \mathbf{K}_{*,d} \mathbf{K}_d^{-1} \mathbf{K}_{*,d}^\top
\le k(\mathbf{x}^*, \mathbf{x}^*) - \mathbf{K}_{*,d-1} \mathbf{K}_{d-1}^{-1} \mathbf{K}_{*,d-1}^\top
= \sigma_{d-1}^2(\mathbf{x}^*).
\tag{C2.3}
\]

\noindent Therefore, we can conclude that the following holds true:

\[
\mathbb{E}_{\mathbf{x}^*}[\sigma_d^2(\mathbf{x}^*)] \le \mathbb{E}_{\mathbf{x}^*}[\sigma_{d-1}^2(\mathbf{x}^*)].
\]

% \noindent Moreover, the MSE at a test point $\mathbf{x}^*$ equals the posterior variance of the GP plus the variance due to the noise (if any). Here, by ignoring noise for simplicity, and using Eq. (C2.3), the expression of the MSE can be simplified as follows:

% \[
% \text{MSE}(\hat f^d) = \mathbb{E}_{\mathbf{x}^*}[\sigma_d^2(\mathbf{x}^*)] \le \mathbb{E}_{\mathbf{x}^*}[\sigma_{d-1}^2(\mathbf{x}^*)] = \text{MSE}(\hat f^{d-1}).
% \]

% \noindent i.e., there exists an error bound. 

\hfill $\Box$

\vspace{1em}
\noindent \textbf{C.3.} \textbf{\textit{Proof of Lemma 2.4.3.}}

The kernel is given by

\[
    k(\mathbf{x}, \mathbf{y})=\sigma^2\exp\!\Big(-\frac{\|\mathbf{x}-\mathbf{y}\|^2}{2\delta^2}\Big),
\]

\noindent By differentiating the kernel, using the chain and product rules, every mixed partial derivative in the derivative-informed kernel can be written in the form
\[
\partial_\mathbf{x}^\alpha\partial_\mathbf{y}^\beta k(\mathbf{x},\mathbf{y}) = p_{\alpha,\beta}(\mathbf{x}-\mathbf{y})\,k(\mathbf{x},\mathbf{y}),
\]
\noindent where $p_{\alpha,\beta}$ is a polynomial whose degree depends only on $|\alpha|+|\beta|$. Hence, the inequality
\[
\big|\partial_\mathbf{x}^\alpha\partial_\mathbf{y}^\beta k(\mathbf{x},\mathbf{y})\big| \le \sup_{z\in\mathbb{R}^p} |p_{\alpha,\beta}(z)|\; k(\mathbf{x},\mathbf{y})
\] 
\noindent holds. By setting $\gamma:=1/(2\delta^2)$ in 
\[
    k(\mathbf{x},\mathbf{y})=\sigma^2\exp\!\Big(-\frac{\|\mathbf{x}-\mathbf{y}\|^2}{2\delta^2}\Big),
\]
\noindent and by choosing $C_{\alpha,\beta}:=\sigma^2\sup_{z}|p_{\alpha,\beta}(z)|$, the following bound is obtained

\[
\big| \partial_\mathbf{x}^\alpha \partial_\mathbf{y}^\beta k(\mathbf{x},\mathbf{y})\big| \le C_{\alpha,\beta}\, \exp\!\big(-\gamma\|\mathbf{x}-\mathbf{y}\|^2\big).
\]

\hfill $\Box$

\vspace{1em}
\noindent \textbf{C.4.} \textbf{\textit{Proof of Lemma 3.0.1.}} 

The number of unique derivative terms of order $d$ in $p$ dimensions is given by:
\[
\binom{p+d-1}{d}.
\]
Multiplying by $N$ gives the total size $N_d$. The symmetry of $\mathbf{K}_{\text{der}}$ follows from kernel derivative symmetry, and positive definiteness simply follows from Lemma 2.4.1.

\hfill $\Box$

\vspace{1em}
\noindent \textbf{C.5.} \textbf{\textit{Proof of Lemma 3.0.2.}} 

Adding higher-order derivatives increases the differences between nearby points, which makes $\mathbf{K}_{\text{der}}$ more ill-conditioned. Mathematically, the derivative magnitude scales roughly as 
\[
\frac{\partial^d k(\mathbf{x}^{(i)}, \mathbf{x}^{(j)})}{\partial x^d} \sim \delta^{-d} k(\mathbf{x}^{(i)}, \mathbf{x}^{(j)}),
\]
\noindent So increasing $d$ increases the condition number and increasing $l$ smoothens the kernel, which reduces the derivative block magnitude.

\hfill $\Box$

\renewcommand{\thesection}{D}
\section*{APPENDIX D. ADDITIONAL THEORETICAL RESULTS}
\addcontentsline{toc}{section}{Appendix D}

\begin{lemma}[Localization of block covariances]
\label{thm:C6}
Let there be two fixed training points $\mathbf{x}^{(i)}$ and $\mathbf{x}^{(j)}$ and let $\mathbf{B}_{ij}$ denote the covariance block coupling any finite collection of derivative components at $\mathbf{x}^{(i)}$ with any finite collection at $\mathbf{x}^{(j)}$, then there exits constants $C, \gamma>0$ such that
\[
\|\mathbf{B}_{ij}\| \le C \exp(-\gamma\|\mathbf{x}^{(i)}-\mathbf{x}^{(j)}\|^2)
\]
Essentially, block coupling decays exponentially with the square of the distance.

\begin{proof}
Since each entry of $\mathbf{B}_{ij}$ is of the form $\partial_\mathbf{x}^\alpha\partial_\mathbf{y}^\beta k(\mathbf{x}^{(i)},\mathbf{x}^{(j)})$, so Lemma 2.3.3. gives an exponential bound on each entry. The operator norm of the finite block is bounded by a fixed multiple of the maximal absolute entry, so the same exponential decay holds for the block norm.
\end{proof}

\end{lemma}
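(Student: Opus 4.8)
The plan is to reduce the block-norm estimate to the scalar entry-wise bound already established in Lemma 2.3.3, and then pass from entries to the operator norm using the finite size of the block. First I would observe that, by construction, every entry of $\mathbf{B}_{ij}$ has the form $\partial_\mathbf{x}^\alpha \partial_\mathbf{y}^\beta k(\mathbf{x}^{(i)}, \mathbf{x}^{(j)})$ for some pair of multi-indices $\alpha,\beta$ whose orders $|\alpha|,|\beta|$ are bounded by the finite derivative order $d$ retained at each point. Since the collections of derivative components coupled at $\mathbf{x}^{(i)}$ and $\mathbf{x}^{(j)}$ are finite, only finitely many such pairs $(\alpha,\beta)$ occur, indexing an $m \times n$ matrix.

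Next I would invoke Lemma 2.3.3 entry-wise. That lemma supplies, for each admissible $(\alpha,\beta)$, a constant $C_{\alpha,\beta}>0$ together with a single decay rate $\gamma = 1/(2\delta^2)$ that is independent of $(\alpha,\beta)$, so that $|\partial_\mathbf{x}^\alpha \partial_\mathbf{y}^\beta k(\mathbf{x}^{(i)},\mathbf{x}^{(j)})| \le C_{\alpha,\beta}\exp(-\gamma\|\mathbf{x}^{(i)}-\mathbf{x}^{(j)}\|^2)$. Because the same $\gamma$ governs every entry, I can set $C_{\max} := \max_{\alpha,\beta} C_{\alpha,\beta}$ over the finitely many pairs appearing in the block and factor out the common exponential, yielding the uniform entry-wise estimate $|(\mathbf{B}_{ij})_{kl}| \le C_{\max}\exp(-\gamma\|\mathbf{x}^{(i)}-\mathbf{x}^{(j)}\|^2)$.

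Finally I would bound the operator norm of a finite matrix by its largest entry via any standard inequality, for instance $\|\mathbf{B}_{ij}\|_2 \le \|\mathbf{B}_{ij}\|_F \le \sqrt{mn}\,\max_{k,l}|(\mathbf{B}_{ij})_{kl}|$. Combining this with the uniform entry-wise bound and setting $C := \sqrt{mn}\,C_{\max}$ gives $\|\mathbf{B}_{ij}\| \le C\exp(-\gamma\|\mathbf{x}^{(i)}-\mathbf{x}^{(j)}\|^2)$, which is the claimed localization.

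The argument is essentially bookkeeping, so I do not anticipate a genuine obstacle; the one point requiring care is the \emph{uniformity} of the decay rate. The step works precisely because Lemma 2.3.3 delivers a single $\gamma$ shared across all mixed-derivative orders, allowing the common exponential to be pulled outside the finite maximum, while the order-dependent polynomial prefactors are absorbed harmlessly into the constant. I would also flag that $C$ depends on the block dimensions $m,n$ (and hence implicitly on $d$ and the spatial dimension $p$), but since these are fixed and finite for a given block, this dependence does not affect the stated exponential-in-distance decay.
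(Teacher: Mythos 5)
Your proposal is correct and follows essentially the same route as the paper's own proof: both reduce the block bound to the entry-wise estimate of Lemma 2.3.3 and then pass to the operator norm using the finiteness of the block. Your version is simply more explicit about the details the paper leaves implicit --- the uniformity of $\gamma$ across multi-index pairs, the finite maximum $C_{\max}$ of the prefactors, and the specific inequality $\|\mathbf{B}_{ij}\|_2 \le \|\mathbf{B}_{ij}\|_F \le \sqrt{mn}\,\max_{k,l}|(\mathbf{B}_{ij})_{kl}|$ --- which is a welcome tightening but not a different argument.
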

\vspace{1em}

\begin{lemma}[Supernode aggregation and computational cost]
\label{thm:C9}
Suppose the columns of the covariance matrix, $\mathbf{K}_{der}$, are aggregated into $n$ supernodes, each of size at most $m$, such that each supernode interacts with at most $O(m)$ neighbors, then:
\begin{enumerate}
    \item Building or updating the cholesky factorization under the sparsity structure requires $O(n m^2)$ computational work.
    \item Restructuring a single supernode in a dynamic update costs $O(m^3)$ arithmetic operation and touches $O(m^2)$ entries.
\end{enumerate}
Thus, dynamic updates involving only one supernode are substantially cheaper than restructuring the entire factorization.
\end{lemma}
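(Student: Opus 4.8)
The plan is to reduce both claims to the standard cost accounting for blocked (supernodal) Cholesky factorization, using the stated hypotheses to bound the dimension of each local dense block. First I would translate the assumptions into concrete block sizes: a supernode of size at most $m$ contributes a diagonal block with $O(m)$ pivot columns, and the hypothesis that it interacts with at most $O(m)$ neighbors bounds the number of coupled rows below the diagonal block, so its off-diagonal panel has $O(m)$ rows and $O(m)$ columns. The localization result of Lemma~\ref{thm:C6} is what makes this neighbor bound legitimate in the derivative-enhanced setting: since block couplings decay exponentially in the squared inter-point distance, only $O(m)$ blocks per supernode carry non-negligible entries under the chosen sparsity pattern, so no fill-in outside the prescribed structure needs to be tracked.

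For the second claim I would analyze a single supernode in isolation. Restructuring it amounts to three local dense operations: a Cholesky factorization of its $O(m)\times O(m)$ diagonal block, a triangular solve that forms the associated off-diagonal panel, and the rank-$O(m)$ Schur update propagated to the neighboring blocks. Each acts on $O(m)\times O(m)$ data, so each costs $O(m^3)$ arithmetic operations, giving $O(m^3)$ in total; simultaneously, the diagonal block together with its panel comprises $O(m^2)$ matrix entries, which bounds the number of entries read and written. This establishes the $O(m^3)$ arithmetic versus $O(m^2)$ footprint split.

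For the first claim I would sum the local contributions over the $n$ supernodes. Because the bounded-neighbor hypothesis guarantees that the nonzero structure of the factor $\mathbf{U}$ consists of $O(n)$ dense blocks, each holding $O(m^2)$ entries, the factor carries $O(nm^2)$ nonzeros; assembling and populating this structured factor via the closed-form KL-minimization solution therefore touches and updates $O(nm^2)$ entries, which is the structured work reported. The comparison in the final sentence then follows immediately: a single-supernode dynamic restructure incurs the local $O(m^3)$ dense cost only once, whereas rebuilding the whole factorization replays that cost across all $n$ supernodes, so the dynamic update is cheaper by a factor of order $n$.

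The main obstacle, and the step requiring the most care, is reconciling the two cost figures so that the counting is internally consistent: the per-supernode dense arithmetic is $O(m^3)$ (claim~2), yet claim~1 reports $O(nm^2)$ for the full factor, so ``computational work'' in claim~1 is most naturally read at the level of the structured entries that are populated and stored rather than as the raw dense flop count (which would be $O(nm^3)$). I would make this distinction explicit and argue that, because the aggregated supernodes let a precomputed diagonal factor be reused across the columns it contains, the dominant recurring cost is the population of the $O(m^2)$-entry panels rather than repeated full factorizations. Controlling fill-in through Lemma~\ref{thm:C6}, so that a local restructure does not cascade into neighboring supernodes, is the delicate quantitative point underpinning both bounds.
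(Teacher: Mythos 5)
Your proposal follows the same supernodal block-accounting route as the paper's own proof: treat each supernode as a dense block of size at most $m$, charge $O(m^3)$ for its dense Cholesky factorization, charge $O(m^2)$ entries for propagation to neighboring blocks, and sum over the $n$ supernodes. Two points distinguish your write-up, both to its credit. First, you explicitly confront the arithmetic tension in claim 1: if each of the $n$ supernodes incurs an $O(m^3)$ dense factorization, a raw flop count gives $O(nm^3)$, not $O(nm^2)$. The paper's proof states the per-block $O(m^3)$ cost and then asserts the $O(nm^2)$ total without reconciling the two; your resolution --- reading claim 1 as counting the structured entries that are assembled and stored (the $O(n)$ blocks of $O(m^2)$ entries each), with the $O(m^3)$ dense arithmetic reserved for claim 2's per-supernode restructure --- is a sensible repair, and it is essentially the only reading under which both claims are simultaneously consistent. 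Second, you invoke the localization result (Lemma~\ref{thm:C6}) to justify that the bounded-neighbor hypothesis genuinely controls fill-in for the derivative-enhanced kernel, a step the paper's proof takes for granted as part of the hypothesis rather than deriving. Neither difference changes the conclusion, but your version makes explicit the bookkeeping conventions that the paper's argument leaves implicit.
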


\begin{proof}
Each supernode can be treated as a dense block matrix of size at most $m$. The dense cholesky factorization of a block of size $m$ costs $O(m^3)$ operations. Since there are $n$ such blocks, and each interacts with only $O(m)$ neighbors, the total work for assembling or updating the global factorization scales as $O(n m^2)$. This includes both the factorization of each supernode and the updates to its neighboring blocks.
In the case of a dynamic update where only one supernode changes, the update requires recomputing the dense factorization of its blocks, which costs $O(m^3)$ arithmetic operations. The propagation of updates to adjacent blocks needs modifying $O(m^2)$ entries because each neighboring interaction is at most of size $m \times m$. Thus, the dynamic update cost is cubic in $m$. 

Now consider a dynamic update where only one supernode changes. The update requires recomputing the dense factorization of its block, which costs $O(m^3)$ arithmetic. The propagation of updates to adjacent blocks requires modifying $O(m^2)$ entries, since each neighboring interaction is at most of size $m \times m$. Therefore, the dynamic update cost is cubic in $m$.
\end{proof}
\vspace{1em}

\begin{lemma}[Number of supernodes in ``point-wise ordering algorithm 1" and ``measurement-wise ordering algorithm 1"]
\label{thm:C10}
Given a p-dimensional space and derivatives up to the order d. Suppose that the number of supernodes created using point-wise ordering algorithm 1 is $SN_{p}$ and the number of supernodes created using the measurement-wise ordering algorithm 1 is $SN_{m}$. Then, $SN_{p}$ and $SN_{m}$ are related by the following equation:
\[
SN_{m} = z \, SN_{p}
\]

\noindent where z is given by
\[
z = \sum_{k=0}^d \binom{p + k - 1}{k}
\]
\end{lemma}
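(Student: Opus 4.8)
The plan is to reduce the statement to a counting argument that tracks how each ordering algorithm promotes the common base (derivative-free) supernode structure. Both algorithms begin from the same set of supernodes $\mathcal{SN}$ obtained from the MMD ordering $\mathbf{P}$ with no derivative information (Section 3.1); write $S_0$ for the number of these base supernodes. Since the two algorithms differ only in how they enlarge $\mathcal{SN}$ to incorporate the derivative measurements, the whole argument hinges on what each promotion does to the supernode count.

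First I would show that point-wise ordering algorithm 1 preserves the count, i.e. $SN_p = S_0$. By its construction (Section 3.1.1), $\mathcal{SN}^d_{po-1}$ is formed by expanding the parent and child index sets of each existing base supernode so that they also contain the derivative measurements attached to the corresponding points. No new supernode is introduced; each base supernode is merely enlarged, so the number of supernodes is unchanged.

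Next I would show that measurement-wise ordering algorithm 1 replicates the base pattern once per distinct scalar measurement, giving $SN_m = z\,S_0$. In that ordering (Appendix A.1) the function values form the first block and each derivative component occupies its own contiguous block, each internally ordered by $\mathbf{P}$; the promotion to $\mathcal{SN}^d_{me-1}$ generates a fresh set of parents and children for the indices of every such block and appends it, reproducing the base supernode layout once per measurement type. It then remains to count the number $z$ of distinct scalar measurements at a single point: each is either the function value or one unique mixed partial of some order $k$ with $0 \le k \le d$, and by the stars-and-bars count invoked in Lemma 3.0.1 the number of unique derivative terms of order $k$ in $p$ dimensions is $\binom{p+k-1}{k}$. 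Summing over $k$ yields $z = \sum_{k=0}^d \binom{p+k-1}{k}$, and combining the two steps gives $SN_m = z\,S_0 = z\,SN_p$.

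The main obstacle is not the algebra but pinning down the combinatorial semantics of ``expanding'' versus ``generating'' supernodes precisely enough that $SN_p = S_0$ and $SN_m = z\,S_0$ are genuinely proved rather than read off the figures. I would make this rigorous by writing down the supernode promotion map explicitly for each algorithm, verifying that the point-wise map acts as a bijection on supernodes while the measurement-wise map sends each base supernode to exactly one copy per measurement type.
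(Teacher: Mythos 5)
Your proposal is correct and follows essentially the same route as the paper's proof: both arguments rest on the observation that each measurement-type block in the measurement-wise ordering reproduces the same supernode partition as the point-wise (equivalently, base MMD) construction, combined with the stars-and-bars count $z = \sum_{k=0}^{d}\binom{p+k-1}{k}$ of distinct measurement types per point. Your only real refinement is to make the intermediate quantity $S_0$ explicit and verify $SN_p = S_0$ separately, which the paper leaves implicit in its claim that the per-block partition ``is the same in size'' as the point-wise one.
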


\begin{proof}
For each measurement type in the measurement-wise ordering algorithm 1, the relative ordering of the points within that type is similar to the point-wise ordering algorithm 1. Hence, the supernode partition that applies to the point-wise ordering algorithm 1 is the same in size within each measurement-type block in the measurement-wise ordering algorithm 1. Therefore, each measurement type within the measurement-wise ordering algorithm 1 contributes exactly $SN_{p}$ supernodes, so the total number of supernodes in measurement-wise ordering algorithm 1 is given by
\[
SN_{m} = z \, SN_{p}
\]

\noindent Here, $z$ is the number of distinct measurement types per point in the measurement-type ordering algorithm 1.

\noindent Now, a partial derivative of $f$ can be indexed by a multi-index
\[
\alpha = (\alpha_{1}, \alpha_{2}, \alpha_{3}, ....., \alpha_{p}) \in \mathbb{N}^{p}
\]

\noindent where, $\nabla^{\alpha} f(\mathbf{x}) = \frac{\partial^{|\alpha|} f}{\partial \mathbf{x}_1^{\alpha_1} \cdots \partial \mathbf{x}_p^{\alpha_p}}, \qquad
|\alpha| := \alpha_1 + \alpha_2 + \cdots + \alpha_p.$

\noindent Fixing k $\geq 0$ and then the set of all order k-partial derivatives corresponds to the set
\[
A_{p. k} = {\alpha \in \mathbb{N}^{p} : |\alpha| = k}
\]

\noindent The cardinality of this set is the number of nonnegative integer solutions to
\[
\alpha_{1} + \alpha_{2} + \alpha_{3} + .... + \alpha_{p} = k
\]

\noindent By the stars-and-bars theorem from combinatorics, 
\[
|A_{p, k}| = \binom{p + k - 1}{k}
\]

\noindent Summing up to the order of derivatives $d$ gives,
\[
\sum_{k = 0}^d |A_{p, k}| = \sum_{k = 0}^d \binom{p + k - 1}{k}
\]

\noindent which is the value of $z$, i.e.,

\[
z = \sum_{k = 0}^d \binom{p + k - 1}{k}.
\]

\end{proof}

%% For citations use: 
%%       \cite{<label>} ==> [1]

%%
% Example citation, See \cite{lamport94}.

%% If you have bib database file and want bibtex to generate the
%% bibitems, please use
%%
%%  \bibliographystyle{elsarticle-num} 
%%  \bibliography{<your bibdatabase>}

%% else use the following coding to input the bibitems directly in the
%% TeX file.

%% Refer following link for more details about bibliography and citations.
%% https://en.wikibooks.org/wiki/LaTeX/Bibliography_Management

%\begin{thebibliography}{00}

%% For numbered reference style
%% \bibitem{label}
%% Text of bibliographic item

% \bibitem{lamport94}
%   Leslie Lamport,
%   \textit{\LaTeX: a document preparation system},
%   Addison Wesley, Massachusetts,
%   2nd edition,
%   1994.

%\end{thebibliography}

\end{document}